 \newtheorem{theorem}{Theorem}[section]
 \newtheorem{lemma}[theorem]{Lemma}
 \newtheorem{remark}[theorem]{Remark}
 \newtheorem{example}[theorem]{Example}
 \newtheorem{proposition}[theorem]{Proposition}
 \newenvironment{proof}
   {\begin{list}{\textbf{Proof}:}
                {\setlength{\leftmargin}{0em}
                 \setlength{\labelwidth}{-0.5em}
                }
   }
   {\hspace*{\fill}$\Box$\end{list}}
 \newenvironment{proof*}
   {\begin{list}{\textbf{Proof}:}
                {\setlength{\leftmargin}{0em}
                 \setlength{\labelwidth}{-0.5em}
                }
   }
   {\end{list}}
  \newcommand{\R}{\mathds{R}}
  \newcommand{\N}{\mathds{N}}
\newcommand{\E}{\mathds{E}}
\newcommand{\la}{\langle}
\newcommand{\ra}{\rangle_{H}}
\newcommand{\SVMlpl}{f_{L,P,\lambda}}
\newcommand{\SVMlplz}{f_{L,P,\lambda_{0}}}
\newcommand{\SVMldldn}{f_{L,\mathbf{D}_{n},\lambda_{\mathbf{D}_{n}}}}
\newcommand{\SVMldoldon}{%
   f_{L,\mathbf{D}_{n}(\omega),\lambda_{\mathbf{D}_{n}(\omega)}}}
\newcommand{\SVMldlz}{f_{L,\mathbf{D}_{n},\lambda_{0}}}
\newcommand{\SVMif}{f_{\iota(F)}}
\newcommand{\SVMifn}{f_{\iota(F_{n})}}
\newcommand{\SVMifz}{f_{\iota(F_{0})}}
\newcommand{\Ls}{L^{\prime}}
\newcommand{\Lss}{L^{\prime\prime}}
\newcommand{\X}{\mathcal{X}}
\newcommand{\Y}{\mathcal{Y}}
\newcommand{\Z}{\mathcal{Z}}
\newcommand{\XX}{\X\times\X}
\newcommand{\XY}{\X\times\Y}
\newcommand{\G}{\mathcal{G}}
\newcommand{\F}{\mathcal{F}}
\newcommand{\Rs}{\mathcal{R}}
\begin{document}

% "Title of the paper"
\title{Asymptotic Normality of   
       Support Vector Machine Variants
       and Other Regularized Kernel Methods}

 \author{Robert Hable \\ 
         Department of Mathematics \\
         University of Bayreuth
         }
 \date{}
         
\maketitle

\begin{abstract}
  In nonparametric classification and regression problems,
  regularized kernel methods, in particular
  support vector machines, 
  attract much attention
  in theoretical and in applied statistics.
  In an abstract sense, 
  regularized kernel methods
  (simply called SVMs here)
  can be seen as 
  regularized M-estimators
  for a parameter
  in a (typically infinite dimensional) reproducing kernel Hilbert space. 
  For smooth loss functions $L$, 
  it is shown that
  the difference between the estimator, i.e.\ the
  empirical SVM $\SVMldldn$,   
  and the theoretical SVM $\SVMlplz$
  is asymptotically normal with rate $\sqrt{n}$.
  That is, $\sqrt{n}(\SVMldldn-\SVMlplz)$ converges weakly to a
  Gaussian process in the reproducing kernel Hilbert space.
  As common in real applications, the choice of the
  regularization parameter $\mathbf{D}_n$ 
  in $\SVMldldn$ may depend on the data.   
  The proof is done by an application of the functional delta-method
  and by showing that the SVM-functional 
  $P\mapsto\SVMlpl$ is
  suitably Hadamard-differentiable.
\end{abstract}
\textit{Keywords:} Nonparametric regression, support vector machines,
  asymptotic normality, Hadamard-differentiability, 
  functional delta-method

\textit{MSC:} 62G08, 62G20, 62M10

\section{Introduction}  \label{sec-introduction}

One of the most important tasks in statistics is
the estimation of the influence of an input variable $X$
on an output variable $Y$. 
On the basis of a finite data set
$(x_{1},y_{1}),\dots,(x_{n},y_{n})\in
 \mathcal{X}\times\mathcal{Y}$\,, 
the goal is to find an ``optimal'' predictor 
$f:\mathcal{X}\rightarrow\mathcal{Y}$ which
makes a prediction $f(x)$ for an
unobserved $y$\,. In case of a finite space $\Y$, this is called 
classification and,
in case of an infinite space $\Y\subset\R$, this is called 
regression. Often,
a signal plus noise relationship 
$y=f_{0}(x)+\varepsilon$ 
is assumed and the task is to estimate the unknown 
regression function $f_{0}$\,. 
In parametric statistics, it is assumed that
$f_{0}$ is contained in a known finite-dimensional function
space. This assumption is dropped or, at least, 
considerably weakened in nonparametric statistics.
In nonparametric classification and
regression problems, regularized kernel methods, in particular
support vector machines,
recently attract much attention
in theoretical and in applied statistics;
see e.g.\ the comprehensive books \cite{vapnik1998}, \cite{schoelkopf2002},
and \cite{steinwart2008} and the references cited therein.
For convenience, a large class of 
regularized kernel methods
for classification and regression
(based on any loss function)
is called ``support vector machine'' (SVM) in 
the following, e.g.\ as in
\cite{steinwart2008}.
That is, the 
term ``support vector machine'' (SVM) is used in a
broad sense here whereas,
originally,
the term ``support vector machine'' was coined for the special
case where $\Y=\{-1,1\}$ (binary classification) and
where the loss function $L$ is the so-called \emph{hinge-loss}.

Typically, the weaker assumptions 
in nonparametric statistics
have to be compensated by an increase of observations
in order to obtain the same precision of the estimation. 
Nevertheless, it is well-known that some
nonparametric estimators still are asymptotically normal
for the same rate $\sqrt{n}$ as many parametric estimators. 
In this article, it is shown that 
also support vector machines based on smooth 
loss functions
enjoy an asymptotic normality property
for the rate $\sqrt{n}$.
For an i.i.d.\ sample $D_{n}=\big((x_{1},y_{1}),\dots,(x_{1},y_{n})\big)$
from a distribution $P$,
the \textit{empirical SVM} is a
function $f_{L,D_{n},\lambda_{D_n}}$ which solves the minimization problem
\begin{eqnarray}\label{introduction-empirical-svm}
  \min_{f\in H}\,
  \frac{1}{n}\sum_{i=1}^{n} L\big((x_{i},y_{i},f(x_{i})\big)
  \,+\,\lambda_{D_n}\|f\|_{H}^{2}\;,
\end{eqnarray}
where $L$ is a loss function and
$H$ is a certain space of functions $f:\X\rightarrow\R$, namely a
so-called \emph{reproducing kernel Hilbert space}.
The first term in
(\ref{introduction-empirical-svm}) is the empirical mean of
the losses caused by the predictions $f(x_{i})$ and
the second term penalizes the complexity of $f$ in order to
avoid overfitting; the regularization parameter $\lambda_{D_n}$ 
is a positive real number which is typically chosen in a 
data-driven way, e.g., by cross-validation. 

Depending on the size of the space $H$, SVMs can be used
as a parametric or a non-parametric method. Choosing
a finite-dimensional $H$ leads to a parametric setting,
choosing an infinite-dimensional $H$ leads to
a non-parametric setting. In the 
parametric setting, asymptotic normality of support vector machines
in the original sense (binary classification using the 
hinge loss) has already been investigated:   
\cite{jiang2008} derive asymptotic normality of the 
estimated prediction error of SVMs with finite-dimensional $H$. 
Under some regularity conditions on the distribution of the
data, \cite{koo2008} show asymptotic normality of the
coefficients of the linear SVM 
(i.e., $H$ only contains linear functions).
In the following, a general non-parametric setting (covering
classification and regression) is considered but, 
by going over from parametrics to non-parametrics, we have to impose
a bound on the complexity of the predictor. 
Instead of estimating a solution 
$f_{L,P}^{\ast}$
of the (ill-posed) minimization problem
\begin{eqnarray}\label{introduction-theoretical-svm-pre}
  \min_{f\in H}\,
  \int L\big((x,y,f(x)\big)\,P\big(d(x,y)\big)\;,
\end{eqnarray}
we estimate a smoother approximation, namely the solution
$\SVMlplz$
of the minimization problem
\begin{eqnarray}\label{introduction-theoretical-svm}
  \min_{f\in H}\,
  \int L\big((x,y,f(x)\big)\,P\big(d(x,y)\big)
  \,+\,\lambda_{0}\|f\|_{H}^{2}
\end{eqnarray}
for a fixed regularization parameter $\lambda_0\in(0,\infty)$.
The minimizer $\SVMlplz$ of (\ref{introduction-theoretical-svm})
is called \emph{theoretical SVM}.
This so-called Tikhonov regularization is equivalent to a
minimization problem
$$\int L\big((x,y,f(x)\big)\,P\big(d(x,y)\big)
  \;\;=\;\;\min!
  \qquad\quad f\in H,\quad\|f\|_{H}\leq r_{0}
$$
where $r_{0}$ can be interpreted as an upper bound on the
complexity of the function $f$; a smaller $\lambda_0>0$ corresponds
to a larger $r_0>0$.
It will be shown that the sequence of 
SVM-estimators
$$(\XY)^n\;\rightarrow\;H\,,\qquad
  D_{n}\;\mapsto\;f_{L,D_{n},\lambda_{D_n}}
$$
is asymptotically normal for the rate $\sqrt{n}$ if
the empirical SVM $f_{L,D_{n},\lambda_{D_n}}$ is shifted
by the theoretical SVM $\SVMlplz$.
That is,
$$\sqrt{n}\big(f_{L,D_{n},\lambda_{D_n}}-\SVMlplz\big)
$$
converges weakly to a (zero-mean) Gaussian process in 
the function space $H$. This also
implies asymptotic normality of the risk 
$$\sqrt{n}
  \Big(
    \mathcal{R}_{L,P}\big(f_{L,D_{n},\lambda_{D_n}}\big)
    -\mathcal{R}_{L,P}\big(\SVMlplz\big)
  \Big)
  \;\;\leadsto\;\;\sigma\mathcal{N}(0,1)
$$
where 
$\mathcal{R}_{L,P}(f)=\int L(x,y,f(x))\,P\big(d(x,y)\big)$
denotes the risk of a predictor $f$ and
$\sigma\in[0,\infty)$. 
The regularization parameter $\lambda_{D_n}$ 
for the empirical SVM may depend on the data. We only need 
that $\sqrt{n}(\lambda_{D_n}-\lambda_0)$ converges to 0 in probability.
This will be proven by an advanced application of a
functional delta-method. Accordingly, it will be shown that
the map $P\mapsto\SVMlpl$ is suitably Hadamard-differentiable. 
According to (\ref{introduction-empirical-svm}) and
(\ref{introduction-theoretical-svm}), SVMs can be 
seen as (regularized) M-estimators
for a parameter
in a typically infinite dimensional Hilbert space. 
Asymptotic normality of M-estimators
for finite-dimensional parameters
and rates of convergence of M-estimators
for parameters
in metric spaces are considered in \cite{vandegeer2000}.

\smallskip

Of course, it would be desirable to dispense with the
complexity bound and to have asymptotic normality of
$$\sqrt{n}(f_{L,D_{n},\lambda_{D_n}}-f_{L,P}^{\ast}) 
  \qquad\text{instead of}\qquad
  \sqrt{n}(f_{L,D_{n},\lambda_{D_n}}-f_{L,P,\lambda_0})
$$
-- \,if $f_{L,P}^{\ast}$ exists at all. However, in the 
non-parametric setting where $H$ is a large infinite-dimensional
function space, this is not possible. Such a result would violate
the no-free-lunch theorem which, roughly speaking, yields that
there is no uniform rate of convergence
without such a bound on the complexity.
It is only possible to get uniform rates of convergence
within special classes of distributions.
The investigation of
rates of convergence 
for special cases -- e.g. classification
under assumptions on the unknown true probability measure such as
Tsybakov's noise assumption 
\cite[p.\ 138]{Tsybakov2004}
-- is one of the most
important topics of recent research about support vector machines
and related learning methods; see e.g.\ 
\cite{steinwartscovel2007}, \cite{caponnetto2007},
\cite{blanchard2008}, \cite{steinwarthush2009},
\cite{mendelson2010}. It is a matter of further research if
similar assumptions on the unknown true probability measures
allow asymptotic normality of
$\sqrt{n}(f_{L,D_{n},\lambda_{D_n}}-f_{L,P}^{\ast})$.

\smallskip

The article is organized as follows: 
Section \ref{subsec-setup} briefly recalls the definition
of support vector machines in a broad sense and fixes the
notation. Section \ref{sec-main-results} contains the main
results concerning asymptotic normality of support vector machines
and their risks. Since the proof is quite involved, it is deferred
to the appendix but
Section \ref{sec-sketch-of-proof} provides a short outline.
Finally, Sections \ref{sec-conclusions} contains some
concluding remarks.

\section{Support Vector Machines}\label{subsec-setup}

Let $(\Omega,{\cal A},Q)$ be a probability space, 
let $\mathcal{X}$ be a closed and bounded subset of
$\R^d$,
and let $\mathcal{Y}$ be a 
closed subset of $\mathds{R}$ with Borel-$\sigma$-algebra
$\mathfrak{B}(\mathcal{Y})$\,. The Borel-$\sigma$-algebra
of $\mathcal{X}\times\mathcal{Y}$ is denoted by
$\mathfrak{B}(\mathcal{X}\times\mathcal{Y})$.
Let
\begin{eqnarray*}
  X_{1},\dots,X_{n}\;:\;\;(\Omega,{\cal A},Q)
  \;\longrightarrow\;\big(\mathcal{X},\mathfrak{B}(\mathcal{X})\big)\,, \\
  Y_{1},\dots,Y_{n}\;:\;\;(\Omega,{\cal A},Q)
  \;\longrightarrow\;\big(\mathcal{Y},\mathfrak{B}(\mathcal{Y})\big)\;\;
\end{eqnarray*}
be random variables such that
$\,(X_{1},Y_{1}),\dots,(X_{n},Y_{n})\,$ are independent 
and identically distributed according to some unknown
probability measure
$P$ on
$\big(\mathcal{X}\times\mathcal{Y},
  \mathfrak{B}(\mathcal{X}\times\mathcal{Y})
 \big)
$.
Define
$$\mathbf{D}_{n}\;:=\;\big((X_{1},Y_{1}),\dots,(X_{n},Y_{n})\big)
  \qquad\forall\,n\in\mathds{N}\;.
$$

A measurable map
$\,L:\mathcal{X}\times\mathcal{Y}\times\mathds{R}\rightarrow[0,\infty)\,
$
is called \emph{loss function}. A loss function $L$ is called
\emph{convex} loss function if it is convex in its third argument, 
i.e.
$t\mapsto L(x,y,t)$ is convex for every $(x,y)\in\XY$. Furtheremore,
a loss function $L$ is called $P$-integrable Nemitski loss function
of order $p\in[1,\infty)$ if there is a $P$-integrable function
$b:\XY\rightarrow\R$ such that
$$\big|L(x,y,t)\big|\;\leq\;b(x,y)+|t|^{p}
 \qquad\forall\,(x,y,t)\in\XY\times\R\;.
$$
If $b$ is even $P$-\emph{square}-integrable, $L$ is called
$P$-\emph{square}-integrable Nemitski loss function
of order $p\in[1,\infty)$.
The \emph{risk} of a measurable function 
$f:\mathcal{X}\rightarrow\mathds{R}$
is defined by
$$\mathcal{R}_{L,P}(f)\;=\;
  \int_{\mathcal{X}\times\mathcal{Y}}L\big(x,y,f(x)\big)\,
  P\big(d(x,y)\big)\;.
$$
The goal is to estimate a function $f:\mathcal{X}\rightarrow\R$
which minimizes this risk. 
The estimates obtained from the method of support vector machines
are elements of so-called reproducing kernel Hilbert spaces
(RKHS) $H$. A RKHS $H$ is a certain Hilbert space 
of functions $f:\mathcal{X}\rightarrow\mathds{R}$ which is
generated by a \emph{kernel}  
$k:\mathcal{X}\times\mathcal{X}\rightarrow\mathds{R}$\,.
See e.g.\ \cite{schoelkopf2002} or \cite{steinwart2008}
for details about these concepts.

Let $H$ be such a RKHS. Then, the
\emph{regularized risk} of an element $f\in H$ is defined to be
$$\mathcal{R}_{L,P,\lambda}(f)
  \;=\;\mathcal{R}_{L,P}(f)\,+\,\lambda\|f\|_{H}^2\;,
  \qquad\text{where}\;\;\;\lambda\in(0,\infty)\,.
$$

An element $f\in H$ is called a \emph{support vector machine} 
and denoted by $f_{L,P,\lambda}$ if it
minimizes the regularized risk in $H$\,. That is,
$$\mathcal{R}_{L,P}(f_{L,P,\lambda})
      \,+\,\lambda\|f_{L,P,\lambda}\|_{H}^2\;=\;
  \inf_{f\in H}\,\mathcal{R}_{L,P}(f)\,+\,\lambda\|f\|_{H}^2\;.
$$
The \textit{SVM-estimator} is defined by
$$S_{n}\;:\;\;(\mathcal{X}\times\mathcal{Y})^n\;\rightarrow\;H\,,\qquad
  D_{n}\;\mapsto\;
  f_{L,D_{n},\lambda_{D_n}}
$$
where $f_{L,D_{n},\lambda_{D_n}}$ is that function $f\in H$ which minimizes
\begin{eqnarray}\label{setup-regularized-empirical-risk}
  \frac{1}{n}\sum_{i=1}^{n}L\big(x_{i},y_{i},f(x_{i})\big)
  \,+\,\lambda_{D_n}\|f\|_{H}^{2}
\end{eqnarray}
in $H$ for 
$D_{n}=((x_{1},x_{2}),\dots,(x_{n},y_{n}))\,\in\,
 (\mathcal{X}\times\mathcal{Y})^{n}
$\,.
The empirical support vector machine $f_{L,D_{n},\lambda_{D_n}}$ 
uniquely exists for every $\lambda_{D_n}\in(0,\infty)$ and every
data-set $D_{n}\in(\XY)^{n}$
if $t\mapsto L(x,y,t)$ is convex for every $(x,y)\in\XY$.

\smallskip

The symbol $\leadsto$ denotes
weak convergence of probability measures
or random variables.

\section{Asymptotic Normality}\label{sec-asymptotic-normality}

\subsection{Main Results}\label{sec-main-results}

The following theorems provide the main results.
For random sequences of regularization parameters
$(\lambda_{\mathbf{D}_n})_{n\in\N}\subset(0,\infty)$ which converges
in probability with rate $\sqrt{n}$ to some
$\lambda_{0}\in(0,\infty)$\,,
Theorem \ref{theorem-sqrt-n-consistency} 
says that
the $\sqrt{n}$-standardized difference between the 
empirical support vector machine $\SVMldldn$
and the theoretical support vector machine
$\SVMlplz$ is asymptotically normal
under some relatively mild conditions.
That is, the $H$-valued 
random variable
$$\Omega\;\rightarrow\;H\,,\qquad
  \omega\;\rightarrow\;\sqrt{n}(\SVMldoldon-\SVMlplz)
$$
converges weakly to a random variable
$$\mathds{H}\;:\;\;\Omega\;\rightarrow\;H\,,\quad\;
    \omega\;\mapsto\;\mathds{H}(\omega)
$$ 
which is a Gaussian process in $H$\,. 
Accordingly, for every finite collection of functions
$\{f_{1},\dots,f_{m}\}\subset H$, the 
random variable
$$\Omega\;\rightarrow\;\R^m\,,\qquad
  \omega\;\mapsto\;
  \Big(\big\la f_{1},\mathds{H}(\omega) \big\ra,\dots,
       \big\la f_{m},\mathds{H}(\omega) \big\ra
  \Big)
$$
has a multivariate normal distribution.
In particular, the reproducing property of $k$ implies that,
for every $x_{1},\dots,x_{m}\,\in\,\mathcal{X}$\,,
$$\sqrt{n}
  \left(
     \begin{array}{c}
       \SVMldldn(x_{1})-\SVMlplz(x_{1}) \\
       \vdots \\
       \SVMldldn(x_{m})-\SVMlplz(x_{m}) \\
     \end{array}
  \right)
  \;\;\leadsto\;\;\mathcal{N}_{m}(0,\Sigma)
$$
where $\Sigma$ is a covariance matrix. In addition,
Theorem \ref{theorem-sqrt-n-consistency-risks}
provides $\sqrt{n}$-consis\-tency of the risk.

\begin{theorem}\label{theorem-sqrt-n-consistency}
  Let $\mathcal{X}\subset\R^d$ be closed and bounded and
  let $\mathcal{Y}\subset\R$ be closed. 
  Assume that $k:\XX\rightarrow\R$ is the restriction of an
  $m$\,-\,times continuously differentiable kernel 
  $\tilde{k}:\R^d\times\R^d\rightarrow\R$
  such that
  $m>d/2$ and $k\not=0$.
  Let $H$ be the RKHS of $k$
  and let $P$ be a probability measure on $(\XY,\mathfrak{B}(\XY))$\,.
  Let 
  $$L\;:\;\;\XY\times\R\;\rightarrow\;[0,\infty)\,,\qquad
    (x,y,t)\;\mapsto\;L(x,y,t)
  $$
  be a convex, $P$-square-integrable Nemitski loss function 
  of order $p\in[1,\infty)$
  such that
  the partial derivatives
  $$\Ls(x,y,t)\;:=\;\frac{\partial L}{\partial t}(x,y,t)
    \qquad\text{and}\qquad
    \Lss(x,y,t)\;:=\;\frac{\partial^2 L}{\partial^2 t}(x,y,t)
  $$
  exist for every $(x,y,t)\in\XY\times\R$\,.
  Assume that the maps
  $$(x,y,t)\;\mapsto\;\Ls(x,y,t) \qquad\text{and}\qquad
    (x,y,t)\;\mapsto\;\Lss(x,y,t)
  $$
  are continuous. Furthermore, assume that for every $a\in(0,\infty)$,
  there is a $b_{a}^{\prime}\in L_{2}(P)$ and a constant
  $b_{a}^{\prime\prime}\in [0,\infty)$ such that,
  for every $(x,y)\in\XY$,
  \begin{eqnarray}\label{theorem-sqrt-n-consistency-1}
    \sup_{t\in[-a,a]}\big|\Ls(x,y,t)\big|\;\leq\;b_{a}^{\prime}(x,y)
    \quad\;\text{and}\quad\;
    \sup_{t\in[-a,a]}\big|\Lss(x,y,t)\big|\;\leq\;
         b_{a}^{\prime\prime}\;.
  \end{eqnarray}
  Then, for every $\lambda_{0}\in(0,\infty)$,
  there is a tight, Borel-measurable
  Gaussian process 
  $$\mathds{H}\;:\;\;\Omega\;\rightarrow\;H\,,\quad\;
    \omega\;\mapsto\;\mathds{H}(\omega)
  $$ 
  such that, 
  \begin{eqnarray}\label{theorem-sqrt-n-consistency-2}
    \sqrt{n}\big(f_{L,\mathbf{D}_{n},\lambda_{\mathbf{D}_{n}}}-\SVMlplz\big)
    \;\;\leadsto\;\;\mathds{H}
    \qquad \text{in}\;\;H
  \end{eqnarray}
  for every Borel-measurable sequence
  of random regularization parameters $\lambda_{\mathbf{D}_n}$ with
  $$\sqrt{n}\big(\lambda_{\mathbf{D}_n}-\lambda_0\big)
    \;\xrightarrow[\;n\rightarrow\infty\;]{}\;0
    \qquad\text{in probability\,.}
  $$
  The Gaussian process $\mathds{H}$ 
  is zero-mean; i.e.,
  $\mathds{E}\la f,\mathds{H}\ra=0$ for every $f\in H$\,.
\end{theorem}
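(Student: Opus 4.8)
The plan is to write the empirical SVM as the image of the empirical measure under a fixed functional and then invoke the functional delta-method. Denote by $\mathds{P}_n=\frac1n\sum_{i=1}^n\delta_{(X_i,Y_i)}$ the empirical measure; the minimiser of (\ref{setup-regularized-empirical-risk}) is exactly $f_{L,\mathds{P}_n,\lambda}$, so the SVM-estimator is the composition of $\mathds{P}_n$ with the map $S:(Q,\lambda)\mapsto f_{L,Q,\lambda}$ defined on a neighbourhood of $(P,\lambda_0)$ and taking values in $H$. If $S$ is Hadamard-differentiable at $(P,\lambda_0)$ and the empirical process $\sqrt n(\mathds{P}_n-P)$ converges weakly to a tight Gaussian process $\mathds{G}_P$ in a suitable space, the functional delta-method yields
$$\sqrt n\big(f_{L,\mathds{P}_n,\lambda_0}-\SVMlplz\big)\;\leadsto\;S'_{(P,\lambda_0)}(\mathds{G}_P,0),$$
the image of $\mathds{G}_P$ under the (bounded, linear) derivative; because $S'_{(P,\lambda_0)}$ is continuous and linear, this image is automatically a tight, Borel-measurable, zero-mean Gaussian process in $H$.

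The first ingredient I would establish is the first-order optimality condition characterising the SVM. With the canonical feature map $\Phi(x)=k(x,\cdot)$ the reproducing property gives $f(x)=\la f,\Phi(x)\ra$, and convexity together with the differentiability of $L$ shows that $f_{L,Q,\lambda}$ is the unique $f\in H$ solving
$$G(Q,\lambda,f)\;:=\;\int\Ls\big(x,y,f(x)\big)\,\Phi(x)\,Q\big(d(x,y)\big)\;+\;2\lambda f\;=\;0.$$
The embedding bound $\|f\|_\infty\le\sqrt{\sup_x k(x,x)}\,\|f\|_H$ (finite since $\X$ is compact and $k$ continuous) together with the a priori estimate $\|f_{L,Q,\lambda}\|_H^2\le\mathcal{R}_{L,Q}(0)/\lambda$ confines all SVMs in a neighbourhood of $(P,\lambda_0)$ to a fixed ball of $H$, and hence confines their evaluations $f(x)$ to a fixed interval $[-a,a]$. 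This is exactly where the local domination hypotheses (\ref{theorem-sqrt-n-consistency-1}) enter, making $G$ well defined and $\Lss$ bounded along all relevant functions.

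To differentiate $S$ I would apply an implicit-function argument to $G$. Differentiating $G$ in its $f$-argument gives the operator
$$A_{Q,\lambda,f}\,g\;=\;\int\Lss\big(x,y,f(x)\big)\,\la g,\Phi(x)\ra\,\Phi(x)\,Q\big(d(x,y)\big)\;+\;2\lambda g,$$
which, since $\Lss\ge0$ by convexity, is of the form ``positive semidefinite operator plus $2\lambda\,\mathrm{Id}$'' and is therefore boundedly invertible with $\|A^{-1}\|\le 1/(2\lambda)$. As the $Q$-dependence of $G$ is linear, the candidate derivative is
$$S'_{(P,\lambda_0)}(h,s)\;=\;-A_{P,\lambda_0,\SVMlplz}^{-1}\!\left(\int\Ls\big(x,y,\SVMlplz(x)\big)\,\Phi(x)\,dh\;+\;2s\,\SVMlplz\right).$$
Verifying the Hadamard limit reduces to controlling the remainder of $G$ uniformly along sequences $(P+t_nh_n,\lambda_0+t_ns_n)$ with $h_n\to h$, $s_n\to s$, $t_n\downarrow0$, using the continuity of $\Ls,\Lss$, the domination by $b_a'\in L_2(P)$ and $b_a''$, and the equicontinuity furnished by the smoothness assumption $m>d/2$, which forces $H$ to embed into a space of differentiable functions and makes classes such as $\{(x,y)\mapsto\Ls(x,y,\SVMlplz(x))\,g(x):\|g\|_H\le1\}$ $P$-Donsker. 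This uniform-remainder estimate, rather than the algebra of the derivative, is the main obstacle and the place where the structural assumptions on $k$ and $L$ are genuinely needed.

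Finally I would remove the restriction to a deterministic regularisation parameter. Since $S$ is jointly Hadamard-differentiable in $(Q,\lambda)$ and $\sqrt n(\lambda_{\mathbf{D}_n}-\lambda_0)\to0$ in probability, the input to the $\lambda$-slot of the derivative is asymptotically negligible: the pair $\big(\sqrt n(\mathds{P}_n-P),\sqrt n(\lambda_{\mathbf{D}_n}-\lambda_0)\big)$ converges jointly to $(\mathds{G}_P,0)$ by Slutsky, and a delta-method for such randomly perturbed inputs gives precisely (\ref{theorem-sqrt-n-consistency-2}) with limit $\mathds{H}=S'_{(P,\lambda_0)}(\mathds{G}_P,0)$. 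Because $S'_{(P,\lambda_0)}$ is bounded and linear and $\mathds{G}_P$ is centred Gaussian, $\mathds{H}$ is a tight, Borel-measurable, zero-mean Gaussian process in $H$, and for each $f\in H$ the continuity of $\la f,\cdot\ra$ gives $\mathds{E}\la f,\mathds{H}\ra=\la f,S'_{(P,\lambda_0)}(\mathds{E}\,\mathds{G}_P,0)\ra=0$.
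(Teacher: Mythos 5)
Your overall architecture --- characterize the SVM by the first-order condition $\int L'_f\Phi\,dQ + 2\lambda f = 0$, differentiate it by an implicit-function argument, prove the relevant function class is Donsker, and conclude by the functional delta-method --- is exactly the paper's (Proposition \ref{prop-gateaux}, Lemma \ref{lemma-prep-hadamard}, Theorem \ref{theorem-hadamard}, Lemma \ref{lemma-donsker}), and your candidate derivative agrees with (\ref{lemma-hadamard-1}) when $s=0$. The one genuinely different choice is the treatment of $\lambda_{\mathbf{D}_n}$: you differentiate $S$ jointly in $(Q,\lambda)$ and feed in the pair $(\mathds{G}_P,0)$ via Slutsky, whereas the paper keeps $S$ a functional of the measure alone and absorbs the regularization parameter through the identity $f_{L,P,\lambda}=S\big(\tfrac{\lambda_0}{\lambda}P\big)$ (Remark \ref{remark-sketch-of-proof-technicalities-b}). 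Your route spares you the paper's extension of SVMs to general finite non-probability measures and the attendant bookkeeping with $\iota$ and the rescaled $\tilde F_n$; the paper's route avoids formulating and proving a joint Hadamard-differentiability statement and reuses a one-argument delta-method verbatim. Your spectral argument for invertibility of $A$ (a positive self-adjoint operator plus $2\lambda\,\mathrm{Id}$ has spectrum in $[2\lambda,\infty)$, so $\|A^{-1}\|\le 1/(2\lambda)$) is cleaner than the paper's Fredholm-alternative proof of Lemma \ref{lemma-KF-invertible} and yields a quantitative bound the paper does not state; note it uses positivity of $Q$, so it is available exactly where the paper's argument is.

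Two places are underspecified and must be supplied before the plan closes. First, the index class for the empirical process has to be fixed in advance, and the class you name, $\{(x,y)\mapsto L'(x,y,\SVMlplz(x))g(x):\|g\|_H\le1\}$, is too small: the uniform remainder estimate (your acknowledged ``main obstacle'') goes through a mean-value argument at intermediate points $Q_s=P+st_nh_n$, and bounding the derivative there against the sup-norm of $h_n$ over the index class requires that $L'_{f_{Q_s}}g$ itself belong to that class, with $f_{Q_s}$ ranging over a whole $H$-ball rather than the single function $\SVMlplz$. This is precisely why the paper's $\G_2$ lets the first argument vary over $\{\|f_0\|_H\le c_0\}$ and why Lemma \ref{lemma-donsker} must establish the Donsker property for that product class, via the entropy bound (\ref{bound-on-uniform-entropy-number}) and preservation of uniform entropy under products. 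Second, the final display $\mathds{E}\la f,\mathds{H}\ra=\la f,S'_{(P,\lambda_0)}(\mathds{E}\,\mathds{G}_P,0)\ra$ presumes the existence of the Bochner mean of $\mathds{G}_P$ in the nonseparable space $\ell_{\infty}(\G)$, which is not automatic and is the content of Lemma \ref{lemma-mean-of-gaussian-process}. Neither point invalidates the approach, but both are genuine work, not routine verification.
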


By use of tis theorem, the following asymptotic
result on the risks is obtained.

\begin{theorem}\label{theorem-sqrt-n-consistency-risks}
  Under the assumptions of Theorem 
  \ref{theorem-sqrt-n-consistency},
  there is, for every $\lambda_{0}\in(0,\infty)$,
  a constant $\sigma\in[0,\infty)$ such that
  $$\sqrt{n}
    \big(\mathcal{R}_{L,P}(f_{L,\mathbf{D}_{n},\lambda_{\mathbf{D}_{n}}})
                 -\mathcal{R}_{L,P}(\SVMlplz)
    \big)
    \;\;\leadsto\;\;\sigma\mathcal{N}(0,1)
  $$
  for every Borel-measurable sequence
  of random regularization parameters $\lambda_{\mathbf{D}_n}$ with
  $\sqrt{n}\big(\lambda_{\mathbf{D}_n}-\lambda_0\big)
    \xrightarrow[\;n\rightarrow\infty\;]{}0$
  in probability.
\end{theorem}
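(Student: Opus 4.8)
The plan is to treat Theorem \ref{theorem-sqrt-n-consistency-risks} as a one-step linearization (a real-valued delta-method) built on top of Theorem \ref{theorem-sqrt-n-consistency}: the unregularized risk $\Rs_{L,P}\colon H\to\R$ is a smooth real-valued functional, and the weak limit of $\sqrt{n}(\SVMldldn-\SVMlplz)$ in $H$ is already available, so it remains to differentiate $\Rs_{L,P}$ at $f_0:=\SVMlplz$ and to push the Gaussian limit through the derivative. First I would compute the (Fr\'echet) derivative of $\Rs_{L,P}$ at $f_0$. Differentiating under the integral sign, which is justified by the local bound $b_a'$ on $\Ls$ in (\ref{theorem-sqrt-n-consistency-1}), and then invoking the reproducing property $g(x)=\la g,k(x,\cdot)\ra$, one obtains for every direction $g\in H$
\[
  D\Rs_{L,P}(f_0)\,g \;=\; \int \Ls\big(x,y,f_0(x)\big)\,g(x)\,P\big(d(x,y)\big) \;=\; \la h_0, g\ra,
\]
where $h_0:=\int \Ls(x,y,f_0(x))\,k(x,\cdot)\,P(d(x,y))\in H$. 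This Bochner integral is well defined because $\|k(x,\cdot)\|_H=\sqrt{k(x,x)}$ is bounded by some $\kappa<\infty$ on the compact set $\X$ while $|\Ls(x,y,f_0(x))|\le b_a'(x,y)\in L_2(P)\subseteq L_1(P)$ with $a:=\|f_0\|_\infty$. Thus the derivative is the bounded linear functional $g\mapsto\la h_0,g\ra$.

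The main step is to control the second-order remainder so that the $\sqrt{n}$-rate turns it into a negligible term. A first-order Taylor expansion of $t\mapsto L(x,y,t)$ gives
\[
  \Rs_{L,P}(f)-\Rs_{L,P}(f_0)-\la h_0, f-f_0\ra \;=\; \tfrac12\int \Lss(x,y,\xi)\,\big(f(x)-f_0(x)\big)^2\,P\big(d(x,y)\big),
\]
where $\xi$ denotes intermediate points between $f_0(x)$ and $f(x)$. Since the kernel is bounded, the reproducing property yields the continuous embedding $\|h\|_\infty\le\kappa\|h\|_H$, so on the event $\{\|f\|_\infty\vee\|f_0\|_\infty\le a\}$ the intermediate points lie in $[-a,a]$, the second derivative is dominated by the constant $b_a''$ from (\ref{theorem-sqrt-n-consistency-1}), and the remainder is at most $\tfrac12 b_a''\kappa^2\|f-f_0\|_H^2$. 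Applying this with $f=\SVMldldn$, Theorem \ref{theorem-sqrt-n-consistency} gives $\|\SVMldldn-f_0\|_H=O_P(n^{-1/2})$, so $\|\SVMldldn\|_H$ is bounded in probability; choosing $a$ large enough, the event $\{\|\SVMldldn\|_\infty\le a\}$ has probability tending to one, and on it the remainder is $O_P(\|\SVMldldn-f_0\|_H^2)=O_P(n^{-1})$.

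Hence $\sqrt{n}$ times the remainder is $o_P(1)$, and I would conclude
\[
  \sqrt{n}\big(\Rs_{L,P}(\SVMldldn)-\Rs_{L,P}(f_0)\big) \;=\; \la h_0,\sqrt{n}(\SVMldldn-f_0)\ra + o_P(1).
\]
Because $g\mapsto\la h_0,g\ra$ is a continuous linear functional on $H$, the continuous mapping theorem together with Slutsky's lemma gives $\la h_0,\sqrt{n}(\SVMldldn-f_0)\ra\leadsto\la h_0,\mathds{H}\ra$. The real random variable $\la h_0,\mathds{H}\ra$ is precisely the one-dimensional marginal of the Gaussian process $\mathds{H}$ in the direction $h_0$ (the case $m=1$, $f_1=h_0$ in Theorem \ref{theorem-sqrt-n-consistency}), hence normally distributed; it is centered because $\mathds{H}$ is zero-mean, i.e.\ $\E\la f,\mathds{H}\ra=0$ for all $f$. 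Setting $\sigma:=(\E\la h_0,\mathds{H}\ra^2)^{1/2}\in[0,\infty)$ then yields $\la h_0,\mathds{H}\ra\stackrel{d}{=}\sigma\mathcal{N}(0,1)$, which is the assertion.

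The only genuinely delicate point, and where I expect the main obstacle, is the remainder control together with the attendant measurability issues: one must guarantee that the estimator stays, with probability tending to one, inside a ball on which $\Lss$ is uniformly bounded so that the local constant $b_a''$ may be invoked, and one must handle the weak convergence in the outer-probability sense customary for $H$-valued empirical processes. This is exactly where the $O_P(n^{-1/2})$ rate supplied by Theorem \ref{theorem-sqrt-n-consistency} is indispensable, since a slower rate would leave the scaled quadratic remainder non-negligible and the linearization would fail.
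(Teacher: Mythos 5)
Your proposal is correct and arrives at the same limit as the paper: the derivative you compute, $g\mapsto\la h_0,g\ra$ with $h_0=\int \Ls_{\SVMlplz}\Phi\,dP$, is exactly the operator $\mathcal{R}_{L,P;\SVMlplz}^{\prime}$ appearing in the paper's proof, and your $\sigma^2=\E\la h_0,\mathds{H}\ra^2$ is the same constant. The difference lies in the execution. The paper notes (via Lemma \ref{lemma-derivatives}(a)) that $\mathcal{R}_{L,P}$ is Fr\'echet-, hence Hadamard-, differentiable on $H$ and then simply composes Theorem \ref{theorem-sqrt-n-consistency} with the abstract functional delta-method \cite[Theorem 3.9.4]{vandervaartwellner1996}; no rate argument and no control of the second derivative of $L$ is needed at that stage. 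You instead carry out the delta-method by hand: a second-order Taylor expansion whose remainder is bounded by $\tfrac12 b_{a}^{\prime\prime}\|k\|_{\infty}^{2}\|f-f_0\|_H^2$ on a high-probability event, which the $O_P(n^{-1/2})$ rate from Theorem \ref{theorem-sqrt-n-consistency} renders $o_P(n^{-1/2})$. This is more self-contained (no appeal to the Banach-space delta-method) but consumes more structure --- the uniform bound on $\Lss$ and the explicit rate --- than the paper's purely first-order argument requires; it is a legitimate alternative, and the tightness/measurability points you flag are indeed the ones to watch, though they are unproblematic here since $\omega\mapsto\SVMldldn$ is Borel-measurable. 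One cosmetic caveat: the Lagrange form of the remainder with an intermediate point $\xi(x,y)$ raises a measurability question for the integrand; the integral form of the Taylor remainder avoids it and yields the same bound. Both arguments conclude identically that $\la h_0,\mathds{H}\ra$ is centered Gaussian because $\mathds{H}$ is a zero-mean Gaussian process.
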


\smallskip

  According to the above theorems, 
  the Gaussian process $\mathds{H}$ 
  and the constant $\sigma$ do not depend on the sequence
  $\lambda_{\mathbf{D}_n}$, $n\in\N$, 
  but only on $\lambda_0$.  
  Though it is possible that $\mathds{H}$ 
  degenerates to 0,
  this only happens 
  in trivial cases, e.g., if 
  $P$ is equal to a Dirac distribution,
  or $|Y|\leq\varepsilon$ while using a smoothed version of the
  epsilon-insensitive loss; see Remark
  \ref{remark-degenerated-limit-distribution}.
  If  
  the constant $\sigma$ is equal to 0
  in Theorem \ref{theorem-sqrt-n-consistency-risks},
  the limit degenerates to 0. In contrast to
  $\mathds{H}$, this not only happens in degenerated
  cases. For example, it is known that
  the rate of convergence of the risk is faster
  than $\sqrt{n}$ in some cases 
  (see e.g.\ \cite{steinwartscovel2007})
  which leads to a degenerated limit 
  in Theorem \ref{theorem-sqrt-n-consistency-risks}.

\smallskip

As stated above, the results are true under 
some relatively mild assumptions. 
In particular, the assumptions on
$k$ are fulfilled for all of the most common kernels
(e.g.\ Gaussian RBF kernel, polynomial kernel, exponential kernel, 
linear kernel). It is assumed that
the loss function is two times continuously 
differentiable in the
third argument. On the one hand, this is
an obvious restriction because some of the most
common loss functions are not differentiable:
the epsilon-insensitive loss for regression
and the hinge loss for classification.
On the other hand, this assumption is not based on
any unknown entity such as the model distribution $P$\,.
In particular, a practitioner can a priori
meet this requirement by a suitable choice of the
loss function; e.g.\ the least-squares loss
for regression and the logistic loss for classification.
This is contrary to the noise assumptions common
in order to establish rates of convergence to the Bayes risk
because such assumptions depend on the unknown $P$
so that they can hardly be checked in applications. 
In addition, Remark \ref{remark-diffble-epsilon-version-lossfunction} 
describes how a Lipschitz-continuous
loss function (such as the epsilon-insensitive loss and the hinge loss) 
can always be turned into a differentiable $\varepsilon$-version
of the loss function. That is, though
the theorem does not
cover support vector machines in the 
original terminology, it covers variants based on
a slightly smoothed hinge loss.

In order to ensure mere existence of the theoretical
SVM $\SVMlplz$\,, it is necessary 
to assume a $P$\,-\,integrabilty condition. For example,
it is common to assume that $L$ is a $P$\,-\,integrable
Nemitski loss function \cite{christmannsteinwart2007}. 
In order to obtain asymptotic normality in the above theorems, 
we assume
that $L$ is a $P$\,-\,\emph{square}-integrable
Nemitski loss function which seems to be a natural 
assumption in view of the square-integrability
assumptions for usual central limit theorems.
In addition, a similar $P$\,-\,integrabilty condition
is assumed for the derivative of the loss function.
If $\Y$ is bounded (as, e.g., in case of a classification problem)
and $L$, $\Ls$ and $\Lss$ are continuous,
all of the integrability assumptions are fulfilled.

In order to fulfill 
$$\sqrt{n}\big(\lambda_{\mathbf{D}_n}-\lambda_0\big)
    \;\xrightarrow[\;n\rightarrow\infty\;]{}\;0
    \qquad\text{in probability},
$$
(which is the only assumption on the random
sequence of regularization parameters), 
it is possible to use any data-driven method for choosing
the regularization parameter. The only thing one has to do is
to choose a (possibly large) constant $c\in(0,\infty)$ and
to make sure that the method (e.g.\ cross validation)
picks a value from $[\lambda_0\,,\,\lambda_0+c/\sqrt{n\ln(n)}\,]$.
Note that, as the notation suggests, it is 
indeed possible to
use the same data for choosing the regularization parameter
as for building the final SVM - just as usually done by practitioneers, 
e.g., when applying cross validation.

The following examples list 
some general situations in which Theorems 
 \ref{theorem-sqrt-n-consistency} and
 \ref{theorem-sqrt-n-consistency-risks} 
are applicable.

\begin{example}[Classification] 
 Theorems 
 \ref{theorem-sqrt-n-consistency} and
 \ref{theorem-sqrt-n-consistency-risks}
 are applicable in 
 the following setting for a classification problem:
 \begin{itemize}
  \item $\X$ bounded and closed, $\Y=\{-1;1\}$ 
  \item $k$ a Gaussian RBF kernel, 
    a polynomial kernel, an exponential kernel 
    or a linear kernel
  \item $L$ the least-squares loss or the logistic loss
 \end{itemize}
\end{example}
\begin{example}[Regression] 
  Theorems 
 \ref{theorem-sqrt-n-consistency} and
 \ref{theorem-sqrt-n-consistency-risks}
 are applicable in 
 the following setting for a regression problem:
 \begin{itemize}
  \item $\X$ bounded and closed, $\Y$ closed
  \item $k$ a Gaussian RBF kernel, 
    a polynomial kernel, an exponential kernel 
    or a linear kernel
  \item $L$ the least-squares loss
  \item $P$ such that $\int y^4 \,P\big(d(x,y)\big)\,<\,\infty$
 \end{itemize}
\end{example}

The following Remark \ref{remark-diffble-epsilon-version-lossfunction} 
describes how a Lipschitz-continuous
loss function 
can always be turned into a differentiable $\varepsilon$-version
of the loss function such that all of the assumptions on the 
partial derivatives
$\Ls$ and $\Lss$ are automatically fulfilled. In particular,
the proposed construction works for
the epsilon-insensitive loss and the hinge loss.
\begin{remark}[Smoothing loss functions by use
  of mollifiers]
\label{remark-diffble-epsilon-version-lossfunction}
  Let $L:\XY\times\R\rightarrow[0,\infty)$ be a convex
  $P$-square-integrable Nemitski loss function of order
  $p\in[1,\infty)$. Assume that $L$ is also a
  Lipschitz-continuous loss function. That is, there is
  a constant $b^{\prime}\in(0,\infty)$ such that
  $$\sup_{(x,y)\in\XY}\big|L(x,y,t_{1})-L(x,y,t_{2})\big|
    \;\leq\;b^{\prime}|t_{1}-t_{2}|
    \qquad\forall\,t_{1},t_{2}\in\R\;.
  $$
  Then, for every $\varepsilon>0$, it is possible to construct
  a loss function $L_{\varepsilon}$ such that
  \begin{eqnarray}\label{remark-diffble-epsilon-version-lossfunction-1}
    \big|L(x,y,t)-L_{\varepsilon}(x,y,t)\big|\;\leq\;\varepsilon
    \qquad\quad\forall\,(x,y,t)\in\XY\times\R
  \end{eqnarray}
  and all of 
  the assumptions of Theorems 
  \ref{theorem-sqrt-n-consistency} and
  \ref{theorem-sqrt-n-consistency-risks} are fulfilled for 
  $L_{\varepsilon}$. 

  This can be done in the following way:
  Take a so-called mollifier function $\varphi:\R\rightarrow\R$; e.g.,
  $$\varphi\;:\;\;\R\;\rightarrow\;\R\,,\qquad
    t\;\mapsto\;\gamma^{-1} e^{-\frac{1}{1-t^2}}I_{(-1,1)}(t)
  $$
  where $\gamma\in(0,\infty)$ is chosen so that
  $\int \varphi \,d\lambda=1$. 
  (See e.g.\ \cite[p.\ 341ff]{denkowski2003} for the concept of
  mollifiers and their basic properties.)
  Define 
  $\varphi_{\varepsilon}(s)=\varphi(sb^{\prime}/\varepsilon)$
  for every $s\in\R$ and 
  \begin{eqnarray}\label{remark-diffble-epsilon-version-lossfunction-101}
    L_{\varepsilon}(x,y,t)\,=
    \frac{b^{\prime}}{\varepsilon}
    \int \!\varphi_{\varepsilon}(s)
         L(x,y,t-s)\,
    \lambda(ds)         
    \qquad\forall\,(x,y,t)%\in\XY\times\R\;.
  \end{eqnarray}
  Then, 
  (\ref{remark-diffble-epsilon-version-lossfunction-1})
  follows from an easy calculation using 
  Lipschitz-continuity of $L$. The $\varepsilon$-version
  $L_{\varepsilon}$ is again a convex
  $P$-square-integrable Nemitski loss function of order
  $p\in[1,\infty)$.
  For every $(x,y,t)\in\XY\times\R$, the function 
  $t\mapsto L_{\varepsilon}(x,y,t)$ is infinitely differentiable
  and the derivatives are given by 
  \begin{eqnarray}\label{remark-diffble-epsilon-version-lossfunction-2}
    \frac{\,\partial^m}{\partial^m t}\,L_{\varepsilon}(x,y,t)
    \;=\;
    \frac{b^{\prime}}{\varepsilon}
    \int \!\frac{\partial^m\varphi_{\varepsilon}}{\partial^m s}(s)
         L(x,y,t-s)\,
    \lambda(ds) \;.        
  \end{eqnarray}
  Furthermore, for every $(x,y,t)\in\XY\times\R$,
  \begin{eqnarray}
    \label{remark-diffble-epsilon-version-lossfunction-3}
    \big|\Ls_{\varepsilon}(x,y,t)\big|\;=\;
    \left|\frac{\,\partial}{\partial t}\,L_{\varepsilon}(x,y,t)\right|
    \;\leq\;
    b^{\prime}\;, \\
    \label{remark-diffble-epsilon-version-lossfunction-4}
    \big|\Lss_{\varepsilon}(x,y,t)\big|\,=
    \left|\frac{\,\partial^2}{\partial^2 t}\,L_{\varepsilon}(x,y,t)\right|
    \,\leq\,b^{\prime}\cdot
        \frac{b^{\prime}}{\varepsilon}
        \int \!\frac{\partial\varphi_{\varepsilon}}{\partial s}(s)
        \lambda(ds)
    \,=:\,b^{\prime\prime}\,.
  \end{eqnarray}
  Inequality (\ref{remark-diffble-epsilon-version-lossfunction-3})
  follows from 
  the definition of derivatives by means of
  difference quotients, 
  (\ref{remark-diffble-epsilon-version-lossfunction-101}), and 
  Lipschitz-continuity of $L$.
  Inequality (\ref{remark-diffble-epsilon-version-lossfunction-4})
  follows from the definition of derivatives by means of 
  difference quotients,
  (\ref{remark-diffble-epsilon-version-lossfunction-2}) for $m=1$,
  and Lipschitz-continuity of $L$.
  
  In particular, the construction of such an $\varepsilon$-version
  of $L$ works for the hinge loss (classification) and,
  if $\int y^2\,P(d(x,y))\,<\,\infty$, for the epsilon-insensitive 
  loss (regression). 
  Another approach in order to obtain smooth approximations of
  loss functions is proposed in \cite{dekelsinger2005}.
\end{remark}

The following Remark \ref{remark-degenerated-limit-distribution}
shows that the limit distribution 
in Theorem \ref{theorem-sqrt-n-consistency} is only 
degenerated in trivial cases.
\begin{remark}[Degenerated limit distribution]
  \label{remark-degenerated-limit-distribution}
  As shown in 
  Proposition \ref{prop-degenerated-limit} in the appendix,
  the Gaussian
  process $\mathds{H}$ in
  $$\sqrt{n}\big(f_{L,\mathbf{D}_{n},
               \lambda_{\mathbf{D}_{n}}}-\SVMlplz
          \big)
    \;\;\leadsto\;\;\mathds{H}
  $$
  (Theorem \ref{theorem-sqrt-n-consistency})   
  is degenerated to 0 if and only if,
  for every $h\in H$, there is 
  a constant $c_h\in\R$ such that
  \begin{eqnarray}
    \label{remark-degenerated-limit-distribution-1}
    L^\prime\big(x,y,\SVMlplz(x)\big)h(x)
    \;=\;c_h\quad \;
    \text{for}\quad
    P\,-\,\textup{a.e. }\,
    (x,y)\in\XY\;.\quad
  \end{eqnarray}
  This only happens in trivial cases in which
  statistical evaluations are superfluous.
  Typically, 
  (\ref{remark-degenerated-limit-distribution-1}) 
  means that
  \begin{eqnarray}
    \label{remark-degenerated-limit-distribution-2}
    L^\prime\big(x,y,\SVMlplz(x)\big)
    \,=\,0\quad \;
    \text{for}\;\;
    P-\textup{a.e. }\,
    (x,y)\in\XY\;
  \end{eqnarray}
  and, therefore, the representer
  theorem \cite[Theorem 5.9]{steinwart2008} implies
  $\SVMlplz(x)=0$ almost surely so that 
  (\ref{remark-degenerated-limit-distribution-2}) implies 
  \begin{eqnarray}
    \label{remark-degenerated-limit-distribution-3}
    L^\prime(x,y,0)
    \,=\,0\quad \;
    \text{for}\;\;
    P-\textup{a.e. }\,
    (x,y)\in\XY\;
  \end{eqnarray}
  For example, (\ref{remark-degenerated-limit-distribution-1})
  implies
  (\ref{remark-degenerated-limit-distribution-2})
  and (\ref{remark-degenerated-limit-distribution-3})  
  if
  $H$ is an RKHS which contains 
  constants and at least one function which
  is not almost surely constant, or if
  $H$ is a universal kernel
  (as in case of the Gaussian Kernel)
  and $X_i$ is not almost surely a constant.
    
  Finally, let us summarize the implications of
  (\ref{remark-degenerated-limit-distribution-2})
  and (\ref{remark-degenerated-limit-distribution-3})
  in case of different loss functions.
  Classification with $Y_i\in\{-1,\,1\}$: In case of
  the logistic loss, the squared loss and
  a slightly smoothed hinge loss, 
  (\ref{remark-degenerated-limit-distribution-3})
  is impossible. Regression: In case of
  the Huber loss and the squared loss,
  (\ref{remark-degenerated-limit-distribution-3})
  implies that $Y_i=0$ almost surely. In case of
  a slightly smoothed $\varepsilon$-insensitve loss,
  (\ref{remark-degenerated-limit-distribution-3})
  implies $Y_i\in[-\varepsilon,\varepsilon]$
  almost surely.
\end{remark}

\subsection{Supplements and Sketch of the Proof}
  \label{sec-sketch-of-proof}

The proof of Theorems \ref{theorem-sqrt-n-consistency}
and \ref{theorem-sqrt-n-consistency-risks}
is an involved
application of the functional delta-method.
In oder to describe this in some more detail, let us first fix
a constant sequence of regularization parameters. That
is, $\lambda_{\mathbf{D}_n}\equiv\lambda_{0}\in(0,\infty)$ for every
$n\in\N$\,. Then, support vector machines may be
represented by a functional $S$ on a set 
of probability measures 
on $\big(\XY,\mathfrak{B}(\XY)\big)$\,.
This functional
$$S\;:\;\;
   P\;\mapsto\;\SVMlplz
$$
is called \emph{SVM-functional} in the following.
It represents the SVM-estimator because
the empirical support vector machine is equal to
$f_{L,D_{n},\lambda_{0}}=S(\mathds{P}_{D_{n}})
$
for every data set $D_{n}\in(\XY)^n$
where $\mathds{P}_{D_{n}}$ denotes the empirical measure
corresponding to $D_{n}$\,. In order to
use the functional delta-method, it is crucial
that this is true for every sample size $n$ and
that $S$ does not depend on $n$\,.
(In Remark \ref{remark-sketch-of-proof-technicalities-b}, 
it will be explained how it is nevertheless
possible to deal with random sequences $\lambda_{\mathbf{D}_{n}}$.)
Theorem \ref{theorem-sqrt-n-consistency} can be shown in the
following way:\\
1. Show that $\sqrt{n}(\mathds{P}_{\mathbf{D}_{n}}-P)$
   converges weakly to a Gaussian process.\\
2. Show that $S$ is Hadamard-differentiable:
   
(a) Show that $S$ is G\^{a}teaux-differentiable.

(b) Show that the G\^{a}teaux-derivative fulfills a
       continuity property.
       
(c) Show that (a) and (b) imply
       Hadamard-differentiability.\\
3. Then, it follows from the functional delta-method that
   $$\sqrt{n}(\SVMldlz-\SVMlplz)\;=\;
     \sqrt{n}\big(S(\mathds{P}_{\mathbf{D}_{n}})-S(P)\big)
   $$
   converges weakly to a Gaussian process.
Theorem \ref{theorem-sqrt-n-consistency-risks} follows from
Theorem \ref{theorem-sqrt-n-consistency} by another
application of the functional delta-method.

\smallskip

Step 1 involves the study of Donsker classes. 
Among other things, this is based on
a bound (\ref{bound-on-uniform-entropy-number}) 
on the uniform entropy number
of balls in the reproducing kernel Hilbert space $H$. 
A proof of this bound is given in
the proof of Lemma \ref{lemma-donsker}. In similar settings,
such bounds have already been proven, e.g.\ in 
\cite[\S\,V]{zhou2003}
and \cite[\S\,6.4]{steinwart2008}. In general, $\sqrt{n}(\mathds{P}_{\mathbf{D}_{n}}-P)$ is \emph{not}
  a measurable random variable so that
  the proof involves the theory
  of weak convergence of unmeasurable random variables; 
  see \cite{vandervaartwellner1996}. However,
  this does not affect the statements of Theorems 
  \ref{theorem-sqrt-n-consistency} and
  \ref{theorem-sqrt-n-consistency-risks}
  because 
$\omega\rightarrow
 f_{L,\mathbf{D}_{n}(\omega),\lambda_{\mathbf{D}_{n}(\omega)}}%
$
  is a measurable random variable 
as shown in the beginning of the proof of 
  Theorem 
  \ref{theorem-sqrt-n-consistency} in Subsection 
  \ref{subsec-donsker-delta}.

Essentially,
it has already been known that
$S$ is G\^{a}teaux-differentiable because
\cite{christmann2004,christmannsteinwart2007} 
derive the influence function of $S$
which is a (special) G\^{a}teaux-derivative. 
Therefore, essential steps of the proof of Step 2(a)
can be adopted from 
\cite{christmann2004,christmannsteinwart2007} and 
\cite[\S\,10.4]{steinwart2008}
but some care is needed
as we also have to deal with signed measures here.
In addition, we also have to deal with 
a sequence of random regularization parameters $\lambda_{\mathbf{D}_n}$
instead of a fixed $\lambda_{0}$; see Remark 
\ref{remark-sketch-of-proof-technicalities-b}.
In Step 2(c) it will be shown that $S$ is
even Hadamard-differentiable (in a specific sense
described in 
Subsection \ref{subsec-hadamard}). This is done because the
application of the delta-method requires 
Hadamard-differentiability. However, 
this might also be useful for other purposes since,
e.g., the chain rule is valid for 
Hadamard-differentiability 
but not for G\^{a}teaux-differentiability.
\cite{christmannvanmessem2008} show Bouligand-differentiability of
the SVM-functional which also allows  
the chain rule.

\begin{remark}[Sequences of random regularization parameters 
     $\lambda_{\mathbf{D}_n}$]
\label{remark-sketch-of-proof-technicalities-b}  

  For a fixed regularization parameter $\lambda_{0}$\,,
  support vector machines can be represented by a functional
  $S:P\mapsto\SVMlplz$ and the delta-method can be applied for
  $S$. However, if we have a sequence of 
  (random) regularization parameters
  $\lambda_{\mathbf{D}_n}$, we get a (random) sequence of functionals
  $$S_{\mathbf{D}_n}\;:\;\;
     P\;\mapsto\;f_{L,P,\lambda_{\mathbf{D}_n}}
  $$
  for which the delta-method cannot be applied offhand.
  This problem can be solved in the following way:
  As described in Subsection \ref{subsec-preparation-for-proof}, 
  $$S_{\mathbf{D}_n}(P)\;=\;f_{L,P,\lambda_{\mathbf{D}_n}}
    \;=\;f_{L,\frac{\lambda_{0}}{\lambda_{\mathbf{D}_n}}P,\lambda_{0}}\;=\;
    S\big({\textstyle \frac{\lambda_{0}}{\lambda_{\mathbf{D}_n}}}P\big)
    \qquad\forall\,P\;.
  $$
  so that everything can be traced back to $S$\,. In this way,
  the explicit use of $S_{\mathbf{D}_n}$ can be avoided and
  the delta-method turns out to be applicable also in this
  case. The price we have to pay is that we have to deal with
  general finite measures in the proofs because, in general,
  $\frac{\lambda_{0}}{\lambda_{\mathbf{D}_n(\omega)}}P$ 
  is not a probability measure any more. 
\end{remark}

\section{Conclusions}\label{sec-conclusions}

In the article, asymptotic properties of support vector machines 
are investigated. For sequences 
of random regularization parameters $\lambda_{\mathbf{D}_n}$, $n\in\N$,
such that
$\sqrt{n}\big(\lambda_{\mathbf{D}_n}-\lambda_{0}\big)\longrightarrow0$
in probability,
it is shown that the difference 
between the empirical and the theoretical SVM 
is asymptotically normal with rate
$\sqrt{n}$; that is, $\sqrt{n}(\SVMldldn-\SVMlplz)$ converges to a
Gaussian process in the function space $H$.
The value $\lambda_{0}>0$  
corresponds to a bound on the complexity of the estimate
for the regression function; a smaller $\lambda_{0}$ 
allows for more complex functions. Therefore,
the theoretical SVM $\SVMlplz$ serves as a ``smoother'' 
approximation of more complex regression functions.
The results of this article show that, in nonparametric 
classification and regression
problems, the estimation of this smoother approximation 
by use of empirical SVMs 
in an infinite dimensional function space
is asymptotically normal
with rate $\sqrt{n}$ -- just as if it was a parametric problem.
The proof is done by showing that the map 
$P\mapsto\SVMlpl$ is suitably Hadamard-differentiable and
by an application of a functional delta-method.

Estimating a smoother approximation of the regression function is
a comprise between a parametric model and a fully non-parametric 
model without any assumptions on the regression function or
the distribution. Without any of such assumptions,
similar results are not possible as follows from the
no-free-lunch theorem.

\section*{Acknowledgment}

I would like to thank Andreas Christmann for 
bringing the problem to my attention and for valuable suggestions.

\section{Appendix: Proof of the Main Results}\label{sec-appendix}

The assumptions of Theorem 
\ref{theorem-sqrt-n-consistency} are valid
in the whole appendix.

\subsection{Preparations}
   \label{subsec-preparation-for-proof}

The map $\Phi:\X\rightarrow H$ always denotes the canonical
feature map corresponding to the kernel $k$ and the RKHS $H$.
It will frequently be used in the proofs that the reproducing property
implies
\begin{eqnarray}\label{feature-map-reproducing-property-pre}
  \la \Phi(x),f\ra\;=\;f(x) \qquad\forall\,x\in\X,\;\;\;\forall\, f\in H
\end{eqnarray}
or, in shorter notation,
\begin{eqnarray}\label{feature-map-reproducing-property}
  \la \Phi,f\ra\;=\;f \qquad\forall\, f\in H\;.
\end{eqnarray}
In particular, we have
\begin{eqnarray}\label{feature-map-reproducing-property-expectation}
  \mathds{E}_{\mu}\la\Phi,f\ra= \!\!
  \int\! \la\Phi,f\ra \,d\mu = \!\!
  \int\! \la\Phi(x),f\ra \,\mu(dx) =\!\!
  \int\!\! f(x) \,\mu(dx).
\end{eqnarray}
According to \cite[p.\ 124]{steinwart2008}, boundedness of
$k$ implies:
\begin{eqnarray}\label{bounded-feature-map}
  \|k\|_{\infty}\;:=\;\sup_{x\in\X}\sqrt{k(x,x)}\;=\;
  \sup_{x\in\X}\big\|\Phi(x)\big\|_{H}\;<\;\infty
\end{eqnarray}
\begin{eqnarray}\label{hilbert-norm-uniform-norm}
  \|f\|_{\infty}\;\;\leq\;\;\|k\|_{\infty}\cdot\|f\|_{H}
  \qquad\forall\,f\in H\;.
\end{eqnarray}
In order to shorten notation, define
$$L_{f}\;:\;\;\XY\;\rightarrow\;\R\,,\qquad
  (x,y)\;\mapsto\;L_{f}(x,y)\,=\,L\big(x,y,f(x)\big)
$$
for every function $f:\X\rightarrow\R$\,. Accordingly,
define 
$$\Ls_{f}(x,y)=\Ls\big(x,y,f(x)\big)\qquad\text{and}\qquad
  \Lss_{f}(x,y)=\Lss\big(x,y,f(x)\big)
$$ 
for every
$(x,y)\in\XY$.
As $L$ is a $P$-square-integrable Nemitski loss function
of order $p\in[1,\infty)$\,,
there is a $b\in L_{2}(P)$ such that
\begin{eqnarray}\label{def-nemitski-loss}
  \big|L(x,y,t)\big|\;\leq\;b(x,y)+|t|^{p}
  \qquad\forall\,(x,y,t)\in\XY\times\R\;.
\end{eqnarray}

Let
$$\mathcal{G}_{1}\;\;:=\;\;
  \big\{g:\XY\rightarrow\R\;
  \big|\;\;\exists\,z\in\R^{d+1}\;\;\text{such that}\;\;g=I_{(-\infty,z]}
  \big\}
$$
be the set of all indicator functions $I_{(-\infty,z]}$. 
Then, it is well-known that
$$\sqrt{n}\big(\mathds{F}_{n}-F\big)
    \;\leadsto\;
    \mathds{G}_{1}
    \qquad\text{in}\quad\ell_{\infty}(\G_{1})
$$
where $\mathds{F}_{n}$ denotes the empirical process,
$F$ denotes the distribution function of $P$,
$\mathds{G}_{1}$ is a Gaussian process, and
$\ell_{\infty}(\G_{1})$ denotes the set of all
bounded functions $G:\G_1\rightarrow\R$. 
Provided that the SVM-functional $S$ is Hadamard-differentiable
in $\ell_{\infty}(\G_{1})$, an application of the functional
delta-method would yield asymptotic normality of 
$\sqrt{n}\big(S(\mathds{F}_{n})-S(F)\big)$.
Unfortunately, the norm-topology of $\ell_{\infty}(\G_{1})$
is too weak in order to ensure Hadamard-differentiability.
Therefore,
the set of indicator functions $\G_{1}$
has to be enlarged to a set $\G\supset\G_{1}$ which leads to
the following somewhat technical definition of the domain
$B_S$ of the SVM-functional $S$.
Define 
\begin{eqnarray}\label{prep-def-c0-constant}
  c_{0}\;:=\;
  \sqrt{\frac{1}{\lambda_{0}}\int b\, \,dP\,}\,+\,1\;,
\end{eqnarray}
$$\mathcal{G}_{2}\;\;:=\;\;
  \left\{g:\XY\rightarrow\R\;
    \Bigg|\;\;
      \begin{array}{c}
        \exists\,f_{0}\in H\,, \;\;\exists\,f\in H 
           \;\;\text{such that} \\
        \|f_{0}\|_{H}\leq c_{0}\,,\,\;
        \|f\|_{H}\leq 1\;\,\text{and}\\
        g=\Ls_{f_{0}}f
      \end{array}
  \right\}\;,
$$
and
$$\G\;\;:=\;\;\G_{1}\cup\G_{2}\cup\{b\}\;.
$$
Let $\ell_{\infty}(\G)$ be the set of all bounded functions
$$F\;:\;\;\G\;\rightarrow\;\R
$$
with norm $\|F\|_{\infty}=\sup_{g\in\G}\big|F(g)\big|$\,.
Define
$$B_{S}\;:=\;
  \left\{F:\G\rightarrow\R\;
    \Bigg|\;\;
      \begin{array}{c}
        \exists\,\mu\not=0\;\text{a finite measure on}\;
                  \XY\;\text{such that} \\
        F(g)=\int g \,d\mu \;\,\forall\,g\in\G \,,\\
        b\in L_{2}(\mu)\,, \;\;
        b_{a}^{\prime}\in L_{2}(\mu)
        \;\,\,\forall\,a\in(0,\infty)
      \end{array}
  \right\}
$$  
and $B_{0}:=\textup{cl}\big(\textup{lin}(B_{S})\big)$ the closed linear
span of $B_{S}$
in $\ell_{\infty}(\G)$\,. That is, $B_{S}$ is a subset of
$\ell_{\infty}(\G)$ whose elements correspond to 
finite measures. The elements of $B_{S}$ can be seen as some kind of
generalized distribution functions.
Note that the assumptions on $L$ and $P$ imply that
$\G\rightarrow\R,\;\;g\mapsto\int g\,dP$
is a well-defined element of $B_{S}$\,.

For every $F\in B_{S}$\,,
let $\iota(F)$ denote the corresponding
finite measure $\mu$ on $\big(\XY,\mathfrak{B}(\XY)\big)$
such that
$$F(g)\;=\;\int g \,d\mu
  \qquad\forall\,g\in\G\;.
$$ 
Note that, by definition of $B_{S}$\,,
$\iota(F)$ uniquely exists for every
$F\in B_{S}$ so that
$$\iota\;:\;\;B_{S}\;\rightarrow\;
  \textup{ca}^{+}(\XY,\mathfrak{B}(\XY))\,,\qquad
  F\;\mapsto\;\iota(F)\;.
$$
is well-defined where $\textup{ca}^{+}(\XY,\mathfrak{B}(\XY))$
denotes the set
of all finite measures 
on $(\XY,\mathfrak{B}(\XY))$.
The set of all finite \emph{signed} measures 
on $(\XY,\mathfrak{B}(\XY))$ is denoted by
$\textup{ca}(\XY,\mathfrak{B}(\XY))$. 
The set
of all continuous functions $f:\X\rightarrow\R$ is denoted by
$\mathcal{C}(\X)$. Since $\X$ is compact by assumption,
the elements of $\mathcal{C}(\X)$ are bounded and
$\mathcal{C}(\X)$ is endowed with the sup-norm
$\|f\|_{\infty}=\sup_{x\in\X}|f(x)|$.

\smallskip

By now, support vector machines are only defined for
probability measures $\tilde{P}$. However,
in order to deal with sequences of
random regularization parameters $\lambda_{\mathbf{D}_n}$, we will
also have to deal with ``support vector machines'' for
general finite measures $\mu$. For every $F\in B_{S}$,
define
$$f_{L,\iota(F),\lambda}\;:=\;
  \text{arg}\inf_{f\in H}
    \int L\big(x,y,f(x)\big)\,\iota(F)\big(d(x,y)\big)
    \,+\,\lambda\|f\|_{H}^{2}\;.
$$
Though 
$\mu:=\iota(F)\in\textup{ca}^{+}(\XY,\mathfrak{B}(\XY))$
is not necessarily a probability measure,
we have, in effect, not defined any new object.
In order to see this, note that dividing the objective
function by $M:=\mu(\XY)$ does not change the minimizer
so that we get
$$f_{L,\mu,\lambda}\;=\;
  \text{arg}\inf_{f\in H}
    \int L\big(x,y,f(x)\big)\,\frac{1}{M}\mu\big(d(x,y)\big)
    \,+\,\frac{\lambda}{M}\|f\|_{H}^{2}
  \;=\;f_{L,\frac{1}{M}\mu,\frac{\lambda}{M}}
$$
and $f_{L,\frac{1}{M}\mu,\frac{\lambda}{M}}$ is an ``ordinary''
support vector machine 
as
$\frac{1}{M}\mu$ is a probability measure.
This also shows that $f_{L,\mu,\lambda}$ uniquely exists
because $f_{L,\frac{1}{M}\mu,\frac{\lambda}{M}}$ 
uniquely exists
for the probability measure $\frac{1}{M}\mu$
according to \cite[Lemma 5.1 and Theorem 5.2]{steinwart2008}. 
 
The idea is that
considering support vector machines for
general finite measures $\mu$ makes it possible
to take $\lambda_{0}$ as a ``standard regularization parameter''.
Define
$$S\;:\;\;B_{S}\;\rightarrow\;H\,,\qquad
  F\;\mapsto\;S(F)\;=\;f_{\iota(F)}
$$
where
$$f_{\iota(F)}\;:=\;f_{L,\iota(F),\lambda_{0}}\;=\;
  \text{arg}\inf_{f\in H}
    \int L\big(x,y,f(x)\big)\,\iota(F)\big(d(x,y)\big)
    \,+\,\lambda_{0}\|f\|_{H}^{2}\;.
$$
Then, we can deal with other regularization parameters 
$\lambda>0$ by use of
\begin{eqnarray}\label{prep-standard-regularization-parameter}
  f_{L,\iota(F),\lambda}\;=\;
  S\big({\textstyle\frac{\lambda_{0}}{\lambda}}F\big)
  \qquad\quad\forall\,F\in B_{S}\;.
\end{eqnarray}
This is important in order to apply the functional 
delta-method in case of a sequence of random regularization
parameters $\lambda_{\mathbf{D}_n}$; see also
Remark \ref{remark-sketch-of-proof-technicalities-b}.

\smallskip

It follows from \cite[Eqn.\ (5.4) and Lemma 4.23]{steinwart2008}
that
\begin{eqnarray}
  \label{prep-inequality-hnorm}
    \big\|f_{\iota(F)}\big\|_{H}\;\leq\;
    \sqrt{\tfrac{1}{\lambda_{0}}F(b)}
    \qquad\forall\,F\in B_{S}\,,\\
  \label{prep-inequality-supnorm}
    \big\|f_{\iota(F)}\big\|_{\infty}\;\leq\;
    \|k\|_{\infty}\sqrt{\tfrac{1}{\lambda_{0}}F(b)}
    \qquad\forall\,F\in B_{S}\;.
\end{eqnarray}
Since $\X$ is separable and $k$ is a continuous kernel,
the RKHS $H$ is a \emph{separable} Hilbert space;
see \cite[Lemma 4.33]{steinwart2008}. 
Separability of $H$ is used several times in the proofs;
this is important particularly
with regard to the Bochner-integral
of $H$-valued functions $\Psi:\mathcal{Z}\rightarrow H$\,. 
The Bochner-integral 
$\int\! \Psi \,d\mu=
  \int\! \Psi \,d\mu^{+}-\int\! \Psi \,d\mu^{-}
$
of such a $H$-valued function $\Psi$ 
with respect to a finite signed measure $\mu=\mu^{+}-\mu^{-}$
is again an element of $H$\,.
If $\Psi$ is suitably measurable, then existence of the
Bochner-integral follows from 
$\int \|\Psi\|_{H} \,d|\mu|<\infty$ where $|\mu|=\mu^{+}+\mu^{-}$
denotes the total variation of $\mu$. We will also 
frequently use the fact that, for every Banach space $E$
and every continuous linear operator $A:H\rightarrow E$,
the existence of the Bochner-integral $\int \Psi \,d\mu$
implies the existence of the Bochner-integral
$\int A(\Psi) \,d\mu$ and
\begin{eqnarray}\label{Bochner-integral-continuous-linear-operator}
  \int A(\Psi) \,d\mu\;=\;A\left(\int \Psi \,d\mu\right)\;;
\end{eqnarray}
see, e.g.\ \cite[Theorem 3.10.16 and Remark 3.10.17]{denkowski2003}.

\smallskip

This subsection closes with three % XXX nachzukontrollierend
lemmas which are used several times.
Thereafter, G\^{a}teaux-differentiability of the
SVM-functional $S:B_{S}\rightarrow H$ will be shown in Subsection
\ref{subsec-gateaux}. This is strengthened to
Hadamard-differ\-entiability in Subsection
\ref{subsec-hadamard}. Finally,
it will be shown in Subsection \ref{subsec-donsker-delta}
that $\sqrt{n}(\mathds{P}_{\mathbf{D}_{n}}-P)$ converges
weakly to a Gaussian process in $\ell_{\infty}(\G)$
and that this implies asymptotic normality
of  
$$\sqrt{n}\big(f_{L,\mathbf{D}_{n},\lambda_{\mathbf{D}_n}}-\SVMlplz\big)
  \quad\;\text{and}\quad\;
  \sqrt{n}\big(\mathcal{R}_{L,P}(f_{L,\mathbf{D}_{n},\lambda_{\mathbf{D}_n}})
                 -\mathcal{R}_{L,P}(\SVMlplz)
          \big)
$$
by applying a functional delta-method.

\begin{lemma}\label{lemma-convergence-in-BS-weak-convergence}
  Let $(F_{n})_{n\in\N}\subset B_{S}$ be a sequence which converges 
  to some $F_{0}\in B_{S}$\,. Then,
  $\lim_{n\rightarrow\infty}\iota(F_{n})(\XY)=
               \iota(F_{0})(\XY)
  $
  and the sequence of finite measures $\iota(F_{n})$,
     $n\in\N$, converges weakly to $\iota(F_{0})$\,.
\end{lemma}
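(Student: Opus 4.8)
The plan is to read everything off from the single fact that convergence $F_n\to F_0$ in $\ell_\infty(\G)$ controls the differences $F_n(g)-F_0(g)$ uniformly over $g\in\G$, in particular over the indicator functions $g\in\G_1\subset\G$. Write $\mu_n:=\iota(F_n)$, $\mu_0:=\iota(F_0)$, and set $\epsilon_n:=\|F_n-F_0\|_\infty\to 0$. For $z\in\R^{d+1}$ let $H_n(z):=\mu_n((-\infty,z]\cap\XY)=F_n(I_{(-\infty,z]})$ be the associated (sub-)distribution function. Since every $I_{(-\infty,z]}$ lies in $\G_1$, we obtain at once the uniform bound $\sup_{z\in\R^{d+1}}|H_n(z)-H_0(z)|\le\epsilon_n$, and this single estimate drives both assertions.

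For the convergence of the total masses I would exploit that $\X$ is bounded: fixing the first $d$ coordinates of $z$ above a bound for $\X$ makes the $x$-constraint in $(-\infty,z]$ vacuous, so that only the last coordinate matters, and letting $z_{d+1}\to\infty$ yields $H_n(z)\uparrow\mu_n(\XY)$ by continuity of the measure from below (monotone convergence), and likewise for $\mu_0$. Passing to this limit in the two-sided estimate $H_0(z)-\epsilon_n\le H_n(z)\le H_0(z)+\epsilon_n$ gives $|\mu_n(\XY)-\mu_0(\XY)|\le\epsilon_n\to 0$, which is the first claim.

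For weak convergence I would reduce to the classical probability-measure continuity theorem by normalizing. Since $F_0\in B_S$ forces $\mu_0\neq 0$, the mass $M_0:=\mu_0(\XY)$ is strictly positive, hence $M_n:=\mu_n(\XY)>0$ for all large $n$, and $\hat\mu_n:=\mu_n/M_n$, $\hat\mu_0:=\mu_0/M_0$ are probability measures with distribution functions $H_n/M_n$, $H_0/M_0$. A triangle-inequality estimate splitting $|H_n/M_n-H_0/M_0|$ into $|H_n-H_0|/M_n$ plus $H_0\,|1/M_n-1/M_0|$, together with $M_n\to M_0>0$ and $\epsilon_n\to 0$, shows that these normalized distribution functions converge uniformly, hence in particular at every continuity point of $H_0/M_0$. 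By the multivariate continuity theorem on $\R^{d+1}$ this gives $\hat\mu_n\to\hat\mu_0$ weakly. Finally, for any bounded continuous $f$ I would write $\int f\,d\mu_n=M_n\int f\,d\hat\mu_n$ and let $n\to\infty$, using $M_n\to M_0$ and the weak convergence just established, to conclude $\int f\,d\mu_n\to M_0\int f\,d\hat\mu_0=\int f\,d\mu_0$; this is precisely weak convergence of $\mu_n$ to $\mu_0$.

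The two triangle-inequality computations are routine. The one point needing genuine care is the total-mass step, where boundedness of $\X$ must be used so that the half-infinite rectangles $(-\infty,z]$ exhaust $\XY$ (only the $\Y$-coordinate actually runs off to $+\infty$); and the passage from uniform convergence of distribution functions to weak convergence of \emph{finite}, rather than probability, measures, which I sidestep cleanly by normalizing to probability measures instead of invoking a general finite-measure continuity statement directly.
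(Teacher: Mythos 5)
Your proposal is correct and follows essentially the same route as the paper: both extract $\lim_n \iota(F_n)(\XY)=\iota(F_0)(\XY)$ from the uniform bound on $F_n(I_{(-\infty,z]})-F_0(I_{(-\infty,z]})$ together with exhaustion of $\XY$ by half-infinite rectangles, then normalize to probability measures, deduce their weak convergence from convergence of the distribution functions, and un-normalize. The only cosmetic difference is that the paper sends all $d+1$ coordinates to infinity (so boundedness of $\X$ is not actually needed for the exhaustion step), whereas you fix the first $d$ coordinates above a bound for $\X$; both variants are valid.
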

\begin{proof}\item
  Define $M_{n}:=\iota(F_{n})(\XY)$ 
  and $a_{n}=(n,\dots,n)\in\R^{d+1}$
  for every $n\in\N\cup\{0\}$\,.
  Then,
  $$
    0\leq\big|M_{n}-M_{0}\big|
    =\lim_{l\rightarrow\infty}
            \big|F_{n}\big(I_{(-\infty,a_{l}]}\big)
                 -F_{0}\big(I_{(-\infty,a_{l}]}\big)
            \big|
    \leq \big\|F_{n}-F_{0}\big\|_{\infty}
    \longrightarrow 0\,.
  $$
  Therefore, the normalized sequence $\tilde{F}_{n}=M_{n}^{-1}F_{n}$,
  $n\in\N\cup\{0\}$, corresponds to a sequence of probability measures
  $\iota(\tilde{F}_{n})$ such that
  \begin{eqnarray*}
    \lim_{n\rightarrow\infty}\iota(\tilde{F}_{n})\big((-\infty,a]\cap\XY\big)\!
    &=&\!\!\lim_{n\rightarrow\infty}
          \frac{1}{M_{n}}F_{n}\big(I_{(-\infty,a]}\big)
       \;=\;\frac{1}{M_{0}}F_{0}\big(I_{(-\infty,a]}\big)\\
    &=&\!\iota(\tilde{F}_{0})\big((-\infty,a]\cap\XY\big)
  \end{eqnarray*}
  for every $a\in\R^{d+1}$\,. Hence, it follows from the
  Portmanteau theorem that
  the sequence of probability measures 
  $(\iota(\tilde{F}_{n}))_{n\in\N}$ converges weakly
  to $\iota(\tilde{F}_{0})$\,; see e.g.\
  \cite[Lemma 2.2]{vandervaart1998}. 
  Finally, this implies that the sequence of finite measures
  $(\iota(F_{n}))_{n\in\N}$ converges weakly to $\iota(F_{0})$\,.
\end{proof}

\begin{lemma}\label{lemma-prep-well-def-operators}
  For every $G\in\textup{lin}(B_{S})$\,,
  there is a unique finite signed measure
  $\iota(G)=\mu$ on $(\XY,\mathfrak{B}(\XY))$ such that
  \begin{eqnarray}\label{lemma-prep-well-def-operators-1}
    \int g\,d\mu\;=\;G(g)\qquad\forall\,g\in\G\;.
  \end{eqnarray}
  The map
  $$\iota\;:\;\;\textup{lin}(B_{S})\;\rightarrow\;
    \textup{ca}(\XY,\mathfrak{B}(\XY)\,,\qquad
    G\;\mapsto\;\iota(G)\;.
  $$
  defined by (\ref{lemma-prep-well-def-operators-1})
  is linear. Let $G\in\textup{lin}(B_{S})$ and
  $\mu=\iota(G)$\,.
  Then,
  $$b\;\in\;L_{2}\big(|\mu|\big)\,,
    \qquad
    b_{a}^{\prime}\;\in\;L_{2}\big(|\mu|\big)
    \;\;\;\forall\,a\in(0,\infty)
  $$
  and $\Ls_{f}\Phi$ and $\Lss_{f}h\Phi$ are Bochner-integrable
  with respect to $\mu$ for every $f,h\in H$.
  Furthermore,
  \begin{eqnarray*}
    \tilde{A}_{f}\;:\;\;C(\X)\;\rightarrow\;H\,,\qquad
    h\;\mapsto\;\int\Lss_{f}h\Phi\,d\mu\;,\\
    A_{f}\;:\;\;H\;\rightarrow\;H\,,\qquad
    h\;\mapsto\;\int\Lss_{f}h\Phi\,d\mu\;.
  \end{eqnarray*}
  are continuous linear operators for every $f\in H$.
\end{lemma}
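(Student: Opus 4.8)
The plan is to reduce everything to the non-negative building blocks of $B_{S}$ and then to invoke the norm bounds already recorded in the preparations. First I would write an arbitrary $G\in\textup{lin}(B_{S})$ as a finite linear combination $G=\sum_{i=1}^{k}\alpha_{i}F_{i}$ with $F_{i}\in B_{S}$, $\alpha_{i}\in\R$, and set $\mu:=\sum_{i=1}^{k}\alpha_{i}\,\iota(F_{i})$, which is a finite signed measure since each $\iota(F_{i})$ is a finite non-negative measure. For every $g\in\G$ linearity of the integral gives $\int g\,d\mu=\sum_{i}\alpha_{i}F_{i}(g)=G(g)$, so such a $\mu$ exists. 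For uniqueness I would use that $\G_{1}$ consists of the indicators $I_{(-\infty,z]}$, whose sets form a multiplicative $\pi$-system generating $\mathfrak{B}(\XY)$; two finite signed measures integrating these indicators to the same value must agree on all of $\mathfrak{B}(\XY)$, which I would obtain from a functional monotone-class argument applied to the vector space of bounded measurable functions that integrate to the same value against both (here $I_{(-\infty,a_{l}]}\uparrow 1$ and finiteness of the measures supply the constant functions, exactly as in the proof of Lemma~\ref{lemma-convergence-in-BS-weak-convergence}). In particular $\mu$ is independent of the chosen representation of $G$, so $\iota$ is well-defined, and its linearity is then immediate from the defining relation (\ref{lemma-prep-well-def-operators-1}) together with this uniqueness.

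Next I would verify the integrability claims for $\mu=\iota(G)$. The key estimate is the total-variation bound $|\mu|\leq\sum_{i}|\alpha_{i}|\,\iota(F_{i})$, obtained from $|\mu(A)|\leq\sum_{i}|\alpha_{i}|\,\iota(F_{i})(A)$ on measurable $A$ and taking the supremum over partitions. Since each $F_{i}\in B_{S}$ satisfies $b\in L_{2}(\iota(F_{i}))$ and $b_{a}^{\prime}\in L_{2}(\iota(F_{i}))$ by the very definition of $B_{S}$, integrating against this bound gives $\int b^{2}\,d|\mu|\leq\sum_{i}|\alpha_{i}|\int b^{2}\,d\iota(F_{i})<\infty$, and likewise for $b_{a}^{\prime}$, so $b,b_{a}^{\prime}\in L_{2}(|\mu|)$ for every $a\in(0,\infty)$.

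For Bochner-integrability I would fix $f,h\in H$ and put $a:=\|f\|_{\infty}$, which is finite with $a\leq\|k\|_{\infty}\|f\|_{H}$ by (\ref{hilbert-norm-uniform-norm}). Using $\|\Phi(x)\|_{H}\leq\|k\|_{\infty}$ from (\ref{bounded-feature-map}) and the pointwise bounds (\ref{theorem-sqrt-n-consistency-1}) evaluated at $t=f(x)\in[-a,a]$, I get $\|\Ls_{f}\Phi\|_{H}\leq\|k\|_{\infty}b_{a}^{\prime}$ and $\|\Lss_{f}h\,\Phi\|_{H}\leq\|k\|_{\infty}b_{a}^{\prime\prime}\|h\|_{\infty}$. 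Because $|\mu|$ is finite and $b_{a}^{\prime}\in L_{2}(|\mu|)\subset L_{1}(|\mu|)$, both right-hand sides are $|\mu|$-integrable; combined with strong measurability of the $H$-valued integrands (which holds since $\Phi$ is continuous, the scalar factors are measurable, and $H$ is separable) this yields $\int\|\Ls_{f}\Phi\|_{H}\,d|\mu|<\infty$ and $\int\|\Lss_{f}h\,\Phi\|_{H}\,d|\mu|<\infty$, i.e.\ both functions are Bochner-integrable with respect to $\mu$.

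Finally, linearity of $\tilde{A}_{f}$ and $A_{f}$ in $h$ is inherited from linearity of the integrands and of the Bochner integral. For continuity I would use the standard estimate $\|\int\Psi\,d\mu\|_{H}\leq\int\|\Psi\|_{H}\,d|\mu|$: with the bound above this gives $\|\tilde{A}_{f}h\|_{H}\leq\|k\|_{\infty}b_{a}^{\prime\prime}|\mu|(\XY)\,\|h\|_{\infty}$ for $h\in C(\X)$ and, inserting $\|h\|_{\infty}\leq\|k\|_{\infty}\|h\|_{H}$ once more, $\|A_{f}h\|_{H}\leq\|k\|_{\infty}^{2}b_{a}^{\prime\prime}|\mu|(\XY)\,\|h\|_{H}$ for $h\in H$, so both operators are bounded and hence continuous. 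The only genuinely delicate point is the existence-and-uniqueness of $\iota(G)$: the construction itself is trivial, but one must check both that the indicators in $\G_{1}$ pin down the finite signed measure uniquely (so that $\iota$ is representation-independent) and that the total-variation estimate survives passage to arbitrary linear combinations; once these are in place, the integrability and operator bounds are routine consequences of (\ref{bounded-feature-map}), (\ref{hilbert-norm-uniform-norm}), and (\ref{theorem-sqrt-n-consistency-1}).
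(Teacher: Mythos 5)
Your proposal is correct and follows essentially the same route as the paper: decompose $G$ into elements of $B_{S}$, bound $|\mu|$ by the corresponding sum of positive measures to transfer the $L_{2}$-conditions, and derive Bochner-integrability and the operator bounds from (\ref{bounded-feature-map}), (\ref{hilbert-norm-uniform-norm}), and (\ref{theorem-sqrt-n-consistency-1}). The only cosmetic differences are that the paper writes $G=F_{1}-F_{2}$ rather than a general linear combination, and proves uniqueness by passing to the Jordan parts $\mu_{1}^{+}+\mu_{2}^{-}=\mu_{2}^{+}+\mu_{1}^{-}$ and citing the uniqueness theorem for positive measures, whereas you run a functional monotone-class argument directly on the signed measures — both are standard and equivalent.
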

\begin{proof}\item
  For every $G\in\textup{lin}(B_{S})$\,, there are 
  $F_{1},F_{2}\in B_{S}$ such that
  $G=F_1-F_2$\,. Define
  $\mu:=\iota(F_1)-\iota(F_2)$.
  Then, $\mu$ fulfills (\ref{lemma-prep-well-def-operators-1}).
  From the definition of $B_{S}$ and
  $$|\mu|(C)\;\leq\;\iota(F_1)(C)+\iota(F_2)(C)
    \qquad\forall\,C\in\mathfrak{B}(\XY)
  $$
  it follows that
  $b,\,b_{a}^{\prime}\,\in\,L_{2}\big(|\mu|\big)
  $
  for every $a\in(0,\infty)$\,.
  Next, fix any $f\in H$ and define 
  $a=\|f\|_{\infty}<\infty$; see
  (\ref{hilbert-norm-uniform-norm}).
  Then,
  $$\int \big\|\Ls_{f}\Phi\|_{H}\,d|\mu|
    \;\stackrel{(\ref{bounded-feature-map})}{\leq}\;
    \|k\|_{\infty}\cdot\int |\Ls_{f}|\,d|\mu|
    \;\stackrel{(\ref{theorem-sqrt-n-consistency-1})}{\leq\;}
    \|k\|_{\infty}\cdot\int b_{a}^{\prime}\,d|\mu|
    \;<\;\infty
  $$
  and, therefore, $\Ls_{f}\Phi$ is Bochner-integrable;
  see e.g.\ \cite[Theorem 3.10.3 and Theorem 3.10.9]{denkowski2003}.
  A similar calculation shows that $\Lss_{f}h\Phi$ is 
  Bochner-integrable, too.
  
  In order to prove uniqueness of $\mu$, let
  $\mu_{1}$ and $\mu_{2}$ be finite signed measures
  such that $\int g\,d\mu_{1}=\int g\,d\mu_{2}$ for every
  $g\in\G$. From this equation it follows that
  $\int g\,d(\mu_{1}^{+}+\mu_{2}^{-})=\int g\,d(\mu_{2}^{+}+\mu_{1}^{-})$
  for every $g\in\G$. Since $\mu_{1}^{+}+\mu_{2}^{-}$ and
  $\mu_{2}^{+}+\mu_{1}^{-}$ are finite (positive) measures
  and $\G$ contains all indicator functions $I_{(-\infty,z]}$,
  $z\in\R^{d+1}$, it follows from the uniqueness theorem
  (e.g.\ \cite[\S\,1.7]{hoffmannjoergensen1994})
  that $\mu_{1}^{+}+\mu_{2}^{-}=\mu_{2}^{+}+\mu_{1}^{-}$.
  Hence, $\mu_{1}=\mu_{2}$.
  
  Uniqueness and (\ref{lemma-prep-well-def-operators-1})
  imply linearity of the map $\iota$.
  
  Now let us turn over to $\tilde{A}_{f}$
  for any fixed $f\in H$\,. Obviously, 
  $\tilde{A}_{f}$ is linear. In order to prove
  that $\tilde{A}_{f}$ is a continuous linear operator,
  define
  $a:=\|f\|_{\infty}$\,,
  which is a finite number due to 
  (\ref{hilbert-norm-uniform-norm}).
  Then,
  $$\big\|\tilde{A}_{f}(h)\|_{H}
    \;\leq\;
      \int\!\big\|\Lss_{f}h\Phi\big\|_{H}\,d|\mu|
    \;\stackrel{(\ref{bounded-feature-map},
       \ref{theorem-sqrt-n-consistency-1})}{\leq}\;
          \|h\|_{\infty}\|k\|_{\infty}\!\!
          \int \!\big|b_{a}^{\prime\prime}\big|\,\,|\mu|\big(d(x,y))
          \;<\;\infty\;.
  $$
  According to \cite[Lemma 4.23]{steinwart2008},
  the canonical embedding $H\rightarrow\mathcal{C}(\X)$ is a 
  continuous linear operator. 
  Hence, it also follows that $A_{f}$ is a continuous linear operator.
\end{proof}

\begin{lemma} \label{lemma-continuous-partial-derivative}
  Let $(\mu_{n})_{n\in\N}$ be a tight sequence of
  finite signed measures on
  $(\XY,\mathfrak{B}(\XY))$ such that
  \,$\sup_{n\in\N}|\mu_{n}|(\XY)<\infty$\,.
  Let $(f_{n})_{n\in\N}\subset H$ be a sequence converging to some
  $f_{0}\in H$\,. Then,
  $$\lim_{n\rightarrow\infty}
    \sup_{h\in H \atop \|h\|_{H}\leq 1}
      \left\|\int L_{f_{n}}^{\prime\prime}h\Phi \,d\mu_{n} -
             \int L_{f_{0}}^{\prime\prime}h\Phi \,d\mu_{n}
      \right\|_{H}\;=\;0\;.
  $$
\end{lemma}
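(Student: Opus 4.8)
The plan is to reduce the $H$-valued statement to a scalar integral estimate, and then to exploit tightness of $(\mu_n)$ together with uniform continuity of $\Lss$ on compacta. First I would bound the integrand pointwise. For $h\in H$ with $\|h\|_H\le 1$, the reproducing inequalities (\ref{bounded-feature-map}) and (\ref{hilbert-norm-uniform-norm}) give $\|\Phi(x)\|_H\le\|k\|_{\infty}$ and $|h(x)|\le\|k\|_{\infty}$ for all $x$. Hence, by the standard norm bound for Bochner integrals,
$$
  \Big\|\int\Lss_{f_n}h\Phi\,d\mu_n-\int\Lss_{f_0}h\Phi\,d\mu_n\Big\|_H
  \;\le\;\|k\|_{\infty}^{2}\int\big|\Lss_{f_n}-\Lss_{f_0}\big|\,d|\mu_n|\,,
$$
and the right-hand side no longer depends on $h$. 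Thus it suffices to show that $\int|\Lss_{f_n}-\Lss_{f_0}|\,d|\mu_n|\to 0$. (Bochner-integrability of the integrands is unproblematic, since each has $H$-norm dominated by the constant $2b_a^{\prime\prime}\|k\|_{\infty}^{2}$ with respect to the finite measure $|\mu_n|$, as recorded below.)

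Next I would pin down a common bounded range for all functions involved. Since $(f_n)$ converges in $H$, the number $a:=\|k\|_{\infty}\sup_{n}\|f_n\|_H$ is finite, and (\ref{hilbert-norm-uniform-norm}) yields $f_n(x),f_0(x)\in[-a,a]$ for all $x$ and all $n$. Consequently the bound on $\Lss$ in (\ref{theorem-sqrt-n-consistency-1}) gives the uniform domination $|\Lss_{f_n}-\Lss_{f_0}|\le 2b_a^{\prime\prime}$ everywhere. Moreover, (\ref{hilbert-norm-uniform-norm}) shows $\|f_n-f_0\|_{\infty}\le\|k\|_{\infty}\|f_n-f_0\|_H\to 0$, i.e.\ $f_n\to f_0$ uniformly on $\X$.

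The heart of the argument is then a tightness cut-off. Given $\eta>0$, I would use tightness of $(\mu_n)$ to choose a compact $K\subset\XY$ with $\sup_n|\mu_n|(\XY\setminus K)$ so small that $2b_a^{\prime\prime}\sup_n|\mu_n|(\XY\setminus K)<\eta/2$; by the uniform domination this controls the integral over $\XY\setminus K$ simultaneously for all $n$. On $K$ I would invoke uniform continuity of the continuous map $\Lss$ on the \emph{compact} set $K\times[-a,a]$: there is $\delta>0$ such that $|t-t'|<\delta$ forces $|\Lss(x,y,t)-\Lss(x,y,t')|$ arbitrarily small, uniformly in $(x,y)\in K$. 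Since $\|f_n-f_0\|_{\infty}\to 0$, for all large $n$ we have $|f_n(x)-f_0(x)|<\delta$ for every $x$, whence $\sup_{(x,y)\in K}|\Lss_{f_n}(x,y)-\Lss_{f_0}(x,y)|$ is small and, after multiplication by the uniformly bounded mass $\sup_n|\mu_n|(\XY)<\infty$, contributes at most $\eta/2$. Adding the two pieces gives $\int|\Lss_{f_n}-\Lss_{f_0}|\,d|\mu_n|<\eta$ for all large $n$, which is the claim.

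I expect the main obstacle to be precisely the non-compactness of $\Y$: the map $\Lss$ is assumed merely continuous, not uniformly continuous, on $\XY\times\R$, so the uniform-continuity step is available only after restricting to a compact set. Tightness of $(\mu_n)$, combined with the \emph{constant} bound $b_a^{\prime\prime}$ from (\ref{theorem-sqrt-n-consistency-1}), is exactly what allows the tail mass to be discarded uniformly in $n$; without tightness the mass escaping to infinity could resist the pointwise convergence of the integrands and the limit would fail.
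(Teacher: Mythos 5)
Your proposal is correct and follows essentially the same route as the paper's own proof: reduce to the scalar integral $\int|\Lss_{f_n}-\Lss_{f_0}|\,d|\mu_n|$ via the bound $\|h\|_\infty\|\Phi(x)\|_H\le\|k\|_\infty^2$, split off the tail using tightness and the constant bound $2b_a''$, and handle the compact part by uniform continuity of $\Lss$ on $K\times[-a,a]$ together with $\|f_n-f_0\|_\infty\to0$. No gaps.
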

\begin{proof}\item 
  For every $\varepsilon>0$ there is a compact subset 
  $\mathcal{Z}_{\varepsilon}\subset\XY$ such that
  \begin{eqnarray}\label{lemma-continuous-partial-derivative-p1}
    |\mu_{n}|(\XY\setminus \mathcal{Z}_{\varepsilon})\;<\;\varepsilon
    \qquad\forall\,n\in\N\;.
  \end{eqnarray}
  Define
  $a:=\sup_{n\in\N_{0}}\|f_{n}\|_{\infty}
   \!\stackrel{(\ref{hilbert-norm-uniform-norm})}{\leq}\! 
   \|k\|_{\infty}\sup_{n\in\N_{0}}\|f_{n}\|_{H}<\infty
  $\,.
  For every $n\in\N$\,,
  \begin{eqnarray*}
    \lefteqn{
      \sup_{h\in H \atop \|h\|_{H}\leq 1}
      \left\|\int\!\! L_{f_{n}}^{\prime\prime}h\Phi \,d\mu_{n} -
             \int\!\! L_{f_{0}}^{\prime\prime}h\Phi \,d\mu_{n}
      \right\|_{H}=
       \sup_{h\in H \atop \|h\|_{H}\leq 1}
       \left\|\int \!\! 
                   \big(L_{f_{n}}^{\prime\prime}\!-L_{f_{0}}^{\prime\prime}
                   \big)
                   h\Phi 
              \,d\mu_{n} 
       \right\|_{H} }\\
    &\leq&
       \sup_{h\in H \atop \|h\|_{H}\leq 1}
       \int \big|L_{f_{n}}^{\prime\prime}(x,y)
                        -L_{f_{0}}^{\prime\prime}(x,y)
            \big|\cdot
                   \|h\|_{\infty}\cdot\big\|\Phi(x)\big\|_{H} 
       \,\,|\mu_{n}|\big(d(x,y)\big)
       \leq \\
    &\stackrel{(\ref{bounded-feature-map},
                 \ref{hilbert-norm-uniform-norm})}{\leq}&
       \|k\|_{\infty}^{2}
       \int \big|L_{f_{n}}^{\prime\prime}(x,y)
                        -L_{f_{0}}^{\prime\prime}(x,y)
            \big| 
       \,\,|\mu_{n}|\big(d(x,y)\big)  \leq\\
    &\stackrel{(\ref{theorem-sqrt-n-consistency-1},
                \ref{lemma-continuous-partial-derivative-p1}
                )}{\leq}&
       \|k\|_{\infty}^{2}
       \int_{\mathcal{Z}_{\varepsilon}} 
            \big|L_{f_{n}}^{\prime\prime}(x,y)
                        -L_{f_{0}}^{\prime\prime}(x,y)
            \big| 
       \,\,|\mu_{n}|\big(d(x,y)\big)
       \,\,+\,\,2\|k\|_{\infty}^{2}b_{a}^{\prime\prime}\varepsilon \;\leq\\
    &\leq&\|k\|_{\infty}^{2}|\mu_{n}|(\XY)
        \sup_{(x,y)\in\mathcal{Z}_{\varepsilon}}
            \big|L_{f_{n}}^{\prime\prime}(x,y)
                        -L_{f_{0}}^{\prime\prime}(x,y)
            \big|
       \,\,+\,\,2\|k\|_{\infty}^{2}b_{a}^{\prime\prime}\varepsilon  
  \end{eqnarray*}
  Since \,$\sup_{n\in\N}|\mu_{n}|(\XY)<\infty$\, and
  $\varepsilon>0$ can be chosen arbitrarily small,
  it only remains to prove that
  \begin{eqnarray}\label{lemma-continuous-partial-derivative-p2}
      \lim_{n\rightarrow\infty}  
      \sup_{(x,y)\in\mathcal{Z}_{\varepsilon}}
            \big|L_{f_{n}}^{\prime\prime}(x,y)
                        -L_{f_{0}}^{\prime\prime}(x,y)
            \big|
      \;\;=\;\;0
  \end{eqnarray}
  Continuity of $\Lss$ and compactness of
  $\mathcal{Z}_{\varepsilon}\times[-a,a]$
  imply that $\Lss$ is uniformly continuous on
  $\mathcal{Z}_{\varepsilon}\times[-a,a]$\,.
  Assertion (\ref{lemma-continuous-partial-derivative-p2})
  is an easy consequence of uniform continuity
  of $\Lss$ on
  $\mathcal{Z}_{\varepsilon}\times[-a,a]$\,, inequality
  $-a\leq f_{n}\leq a$ for every $n\in\N_{0}$, and
  the fact that
  $\lim_{n}\|f_{n}-f_{0}\|_{H}=0$ implies
  $\lim_{n}\|f_{n}-f_{0}\|_{\infty}=0$\,.
\end{proof}

\subsection{G\^{a}teaux-Differentiability of the SVM-Functional}
    \label{subsec-gateaux}

In this subsection, it will be shown that the SVM-functional
$$S\;:\;\;B_{S}\;\rightarrow\;H\,,\qquad
  F\;\mapsto\;f_{\iota(F)}
$$
is G\^{a}teaux-differentiable. 
Essentially,
this has already been known because
\cite{christmann2004,christmannsteinwart2007} 
derive the influence function of $S$
which is a (special) G\^{a}teaux-derivative. 
Therefore, the proofs in this subsection 
can essentially be adopted from 
\cite{christmann2004,christmannsteinwart2007} and 
\cite[\S\,10.4]{steinwart2008}.
However, some care is needed
as we also have to deal with signed measures and 
with 
a (random) sequence of regularization parameters 
$\lambda_{\mathbf{D}_n}$
instead of a fixed $\lambda_{0}$; see also Remark 
\ref{remark-sketch-of-proof-technicalities-b}.

At first, we have to show Fr\'{e}chet-differentiability of
the ``generalized risk'' 
$\mathcal{R}_{L,\mu}:f\mapsto\int L_{f}\,d\mu$
(and of its derivative) for finite signed measures $\mu$\,.
If $\mu$ is a probability measure, then
Lemma \ref{lemma-derivatives}(a) is just
the well-known Fr\'{e}chet-differentiability of the
ordinary risk $\mathcal{R}_{L,P}$\,. 

\begin{lemma}\label{lemma-derivatives}
  For every finite signed measure $\mu$ on
  $(\XY,\mathfrak{B}(\XY))$
  such that
  \begin{eqnarray}\label{lemma-derivatives-1}
    \int b \,d|\mu|\,\,<\,\,\infty
    \qquad\text{and}\qquad
    \int b_{a}^{\prime} \,d|\mu|\,\,<\,\,\infty\,\quad\forall\,
    a\in(0,\infty)\,,\;\;
  \end{eqnarray}
  the following statements are true:
  \begin{enumerate}
   \item[\textbf{(a)}] The map
     $$H\;\rightarrow\;\R\,,\qquad
       f\;\mapsto\;\int L_{f} \,d\mu
     $$
     is Fr\'{e}chet-differentiable and its
     Fr\'{e}chet-derivative in $f\in H$ is given by
     $H\rightarrow\R,\;\;
       h\mapsto\big\la\int L_{f}^{\prime}\Phi \,d\mu\,,\,h \big\ra
     $.
   \item[\textbf{(b)}] The map
     $$H\;\rightarrow\;H\,,\qquad
       f\;\mapsto\;\int L_{f}^{\prime}\Phi \,d\mu
     $$
     is Fr\'{e}chet-differentiable and its
     Fr\'{e}chet-derivative in $f\in H$ is given by 
     $H\rightarrow H,\;\;
       h\mapsto
       \int L_{f}^{\prime\prime}h\Phi \,d\mu
     $.
 \end{enumerate}
\end{lemma}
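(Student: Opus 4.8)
The plan is to verify Fréchet-differentiability in each part straight from the definition: exhibit the stated candidate as a bounded linear map and then show that the first-order remainder is $o(\|h\|_H)$. In both parts I would reduce the $H$-valued (resp.\ scalar) estimate to a pointwise Taylor expansion of $t\mapsto L(x,y,t)$ in its third argument, and then integrate against $|\mu|$, passing from the RKHS-norm to pointwise bounds by means of $\|\Phi(x)\|_H\le\|k\|_\infty$ (from \ref{bounded-feature-map}) and $\|h\|_\infty\le\|k\|_\infty\|h\|_H$ (inequality \ref{hilbert-norm-uniform-norm}). I fix $f\in H$ and restrict to $\|h\|_H\le1$, so that with $a:=\|f\|_\infty+\|k\|_\infty$ both $f(x)$ and $f(x)+sh(x)$ lie in $[-a,a]$ for all $x$ and all $s\in[0,1]$; this lets me invoke the bounds \ref{theorem-sqrt-n-consistency-1} on $L^{\prime}$ and $L^{\prime\prime}$ with one fixed $a$.

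For part (a), I would first check that the candidate $h\mapsto\la\int L_f^{\prime}\Phi\,d\mu,h\ra$ is a bounded linear functional: the Bochner integral $\int L_f^{\prime}\Phi\,d\mu$ exists since $\int\|L_f^{\prime}\Phi\|_H\,d|\mu|\le\|k\|_\infty\int b_a^{\prime}\,d|\mu|<\infty$ by \ref{theorem-sqrt-n-consistency-1} and \ref{lemma-derivatives-1}. By the reproducing property and the commutation of continuous linear maps with the Bochner integral \ref{Bochner-integral-continuous-linear-operator}, this candidate evaluated at $h$ equals $\int L_f^{\prime}h\,d\mu$. A second-order Taylor expansion of $t\mapsto L(x,y,t)$ together with $|L^{\prime\prime}|\le b_a^{\prime\prime}$ on $[-a,a]$ gives $|L_{f+h}(x,y)-L_f(x,y)-L_f^{\prime}(x,y)h(x)|\le\tfrac12 b_a^{\prime\prime}h(x)^2$; integrating against $|\mu|$ bounds the remainder by $\tfrac12 b_a^{\prime\prime}\|k\|_\infty^2|\mu|(\XY)\,\|h\|_H^2$, which is $o(\|h\|_H)$. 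This settles (a).

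For part (b), the candidate $h\mapsto\int L_f^{\prime\prime}h\Phi\,d\mu$ is already known to be a bounded linear operator $H\to H$ by Lemma \ref{lemma-prep-well-def-operators}. Here the remainder is controlled by $\|\int(L_{f+h}^{\prime}-L_f^{\prime}-L_f^{\prime\prime}h)\Phi\,d\mu\|_H\le\|k\|_\infty\int|L_{f+h}^{\prime}-L_f^{\prime}-L_f^{\prime\prime}h|\,d|\mu|$, and a first-order Taylor expansion of $t\mapsto L^{\prime}(x,y,t)$ (whose derivative is $L^{\prime\prime}$), together with $|h(x)|\le\|k\|_\infty\|h\|_H$, yields
$$\int|L_{f+h}^{\prime}-L_f^{\prime}-L_f^{\prime\prime}h|\,d|\mu|\;\le\;\|k\|_\infty\|h\|_H\int_0^1\!\!\int\big|L^{\prime\prime}(x,y,f(x)+sh(x))-L^{\prime\prime}(x,y,f(x))\big|\,d|\mu|\,ds.$$
After dividing by $\|h\|_H$, everything reduces to showing that this inner integral tends to $0$ as $\|h\|_H\to0$.

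The hard part is precisely this last limit, since---unlike in part (a)---only boundedness, not Lipschitz-continuity, of $L^{\prime\prime}$ is assumed, so the integrand is not dominated by a multiple of $|h|$. I would resolve it exactly as in Lemma \ref{lemma-continuous-partial-derivative}: because $|\mu|$ is a finite Borel measure on the Polish space $\XY$ it is tight, so for $\varepsilon>0$ there is a compact $K_\varepsilon\subset\XY$ with $|\mu|(\XY\setminus K_\varepsilon)<\varepsilon$. On the tail the integrand is at most $2b_a^{\prime\prime}$, contributing at most $2b_a^{\prime\prime}\varepsilon$; on $K_\varepsilon$ the function $L^{\prime\prime}$ is uniformly continuous on the compact set $K_\varepsilon\times[-a,a]$, and since $|f(x)+sh(x)-f(x)|\le\|h\|_\infty\to0$ uniformly in $x$ and $s$, the supremum of the integrand over $K_\varepsilon$ tends to $0$, making that contribution at most $|\mu|(\XY)$ times a modulus of continuity that vanishes with $\|h\|_\infty$. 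Letting $\|h\|_H\to0$ and then $\varepsilon\to0$ gives the claim and completes (b); a brief Tonelli argument justifies the interchange of the $s$- and $\mu$-integrals, the integrand being continuous in $(s,x,y)$ because $H$ embeds continuously into $C(\X)$.
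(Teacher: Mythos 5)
Your proof is correct, and in the key step it takes a genuinely different route from the paper's. The paper proves only part (b) in detail (remarking that (a) is nearly identical) and controls the remainder as follows: it forms the normalized pointwise remainder $\gamma_{n}(x,y)=\big|L^{\prime}(x,y,f(x)+h_{n}(x))-L^{\prime}(x,y,f(x))-h_{n}(x)L^{\prime\prime}(x,y,f(x))\big|/|h_{n}(x)|$, notes that $\gamma_{n}\to 0$ pointwise by the very definition of $L^{\prime\prime}$ as the derivative of $L^{\prime}$, dominates $\gamma_{n}$ by the constant $2b_{a}^{\prime\prime}$ via the mean value theorem, and concludes by dominated convergence with respect to the finite measure $|\mu|$ (the factor $|h_{n}(x)|/\|h_{n}\|_{H}$ being bounded by $\|k\|_{\infty}$). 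You instead write the remainder in the integral (fundamental-theorem-of-calculus) form and reduce everything to showing $\int_{0}^{1}\!\int\big|L^{\prime\prime}(x,y,f(x)+sh(x))-L^{\prime\prime}(x,y,f(x))\big|\,d|\mu|\,ds\to 0$, which you establish by tightness of $|\mu|$ together with uniform continuity of $L^{\prime\prime}$ on compact sets --- essentially re-running the argument of Lemma \ref{lemma-continuous-partial-derivative}. Both routes are valid under the stated hypotheses; the paper's dominated-convergence argument is leaner in that it uses only the existence of the pointwise derivative $L^{\prime\prime}$ and the bound $b_{a}^{\prime\prime}$, whereas your tightness argument additionally leans on the assumed joint continuity of $L^{\prime\prime}$ (which is available). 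For part (a) your second-order Taylor bound yields an explicit $O(\|h\|_{H}^{2})$ remainder, slightly sharper than the qualitative $o(\|h\|_{H})$ the paper's scheme would give.
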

\begin{proof}\item
  Both statements can be proven essentially by following the
  lines of \cite[Lemma 2.21]{steinwart2008}. 
  Since the proofs of (a) and (b) nearly coincide, only
  the proof of (b) is given in detail. 
  
  Define
  $$T(f)\,=\,\int L_{f}^{\prime}\Phi \,d\mu
    \qquad\text{and}\qquad
    T_{f}^{\prime}(h)\,=\,\int L_{f}^{\prime\prime}h\Phi \,d\mu
  $$
  for every $f,h\,\in\,H$\;. Lemma \ref{lemma-prep-well-def-operators}
  guarantees that these Bochner-integrals exist and that 
  $T_{f}^{\prime}:H\rightarrow H,\;h\mapsto T_{f}^{\prime}(h)$ 
  is a continuous linear operator.
  Now, fix any $f\in H$ and let $(h_{n})_{n\in\N}\subset H\setminus\{0\}$ be 
  a sequence
  which converges to $0$ in $H$\,. Define
  $$\gamma_{n}(x,y)\,:=\,
    \frac{\big|\Ls\big(x,y,f(x)\!+\!h_{n}(x)\big)
               -\Ls\big(x,y,f(x)\big)
               -h_{n}(x)\Lss\big(x,y,f(x)\big)
          \big|%
         }{|h_n(x)|}
  $$
  for every $(x,y)\in \XY$ such that $h_n(x)\not=0$ and
  $\gamma_{n}(x,y)=0$ for every $(x,y)\in \XY$ such that $h_n(x)=0$\,.
  The maps $\gamma_{n}:\XY\rightarrow\R,\;\,(x,y)\mapsto\gamma_{n}(x,y)$\,,
  $n\in\N$\,, are measurable. Since $H$ is a RKHS, 
  $\lim_{n\rightarrow\infty}h_{n}(x)=0$ for every $x\in\X$\,. Therefore,
  the definition of $\Ls$ as a partial derivative of $L$ implies
  \begin{eqnarray}\label{lemma-derivatives-p1}
    \lim_{n\rightarrow\infty}\gamma_{n}(x,y)\;=\;0
    \qquad\forall\,(x,y)\in\XY\;.
  \end{eqnarray} 
  Define 
  $a:=\|f\|_{\infty}+\sup_{n\in\N}\|h_n\|_{\infty}
    \stackrel{(\ref{hilbert-norm-uniform-norm})}{\leq} 
    \|k\|_{\infty}
    \big(\|f\|_{H}+\sup_{n\in\N}\|h_n\|_{H}\big)
    <\infty
  $\,.   
  Then, by use of the elementary mean value
  theorem,
  $$\big|\gamma_{n}(x,y)\big|
    \leq \!
      \frac{\big|\Ls\big(x,y,f(x)\!+\!h_{n}(x)\big)
               -\Ls\big(x,y,f(x)\big)
            \big|%
           }{|h_n(x)|}
      +\big|\Lss\big(x,y,f(x)\big)\big| \!
    \stackrel{(\ref{theorem-sqrt-n-consistency-1})}{\leq}
     2 b_{a}^{\prime\prime}
  $$
  for every $(x,y)$ such that $h_n(x)\not=0$ and every $n\in\N$\,.
  Hence, we can use the dominated convergence
  theorem 
  (e.g. \cite[Theorem 4.3.5]{dudley2002})
  in order to finish the proof:
  \begin{eqnarray*}
    \lefteqn{
      \lim_{n\rightarrow\infty}
      \frac{\big\|T(f+h_{n})-T(f)-T_{f}^{\prime}(h_{n})\big\|_{H}}%
           {\|h_{n}\|_{H}}\;\leq
    }\\ 
    &\leq&\lim_{n\rightarrow\infty}
            \int \frac{|h_{n}(x)|}{\|h_{n}\|_{H}}\cdot
                 \big|\gamma_{n}(x,y)\big|\cdot 
                 \big\|\Phi(x)\big\|_{H}
            \,\,|\mu|\big(d(x,y)\big) \;\leq\\ 
    &\stackrel{(\ref{bounded-feature-map},
                 \ref{hilbert-norm-uniform-norm})}{\leq}&
           \lim_{n\rightarrow\infty}
            \|k\|_{\infty}^{2}
            \int \big|\gamma_{n}(x,y)\big|
            \,\,|\mu|\big(d(x,y)\big) 
        \;\stackrel{(\ref{lemma-derivatives-p1})}{=}\;0 
  \end{eqnarray*}
\end{proof}

\begin{lemma}\label{lemma-KF-invertible}
  For every $F\in B_{S}$\,,
  $$K_{F}\;:\;\;H\;\rightarrow\;H\,,\qquad
    f\;\mapsto\;2\lambda_0 f+\int\Lss_{\SVMif}f\Phi\,d[\iota(F)]
  $$
  is a continuous linear operator which is invertible.
\end{lemma}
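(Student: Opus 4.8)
The plan is to write $K_F$ as a multiple of the identity plus the integral operator already analysed in Lemma~\ref{lemma-prep-well-def-operators}, and then to exploit convexity of $L$ together with positivity of the measure $\iota(F)$ to show that $K_F$ is self-adjoint and coercive; invertibility will then follow from a standard Hilbert-space argument.

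First I would note that, since $F\in B_S$, the measure $\mu:=\iota(F)$ is a (nonzero) \emph{positive} finite measure and $\SVMif\in H$ is well defined. Writing
$$K_F(f)\;=\;2\lambda_0 f\,+\,A_{\SVMif}(f),\qquad
  A_{\SVMif}(f)=\int\Lss_{\SVMif}f\Phi\,d\mu\,,$$
Lemma~\ref{lemma-prep-well-def-operators} (applied to $F\in B_S\subset\textup{lin}(B_S)$, with $f=\SVMif\in H$) guarantees that $A_{\SVMif}:H\to H$ is a continuous linear operator. Hence $K_F=2\lambda_0\,\textup{id}_H+A_{\SVMif}$ is a continuous linear operator, which settles the first assertion.

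For invertibility I would pass to the associated bilinear form. Using the reproducing property (\ref{feature-map-reproducing-property-pre}) together with (\ref{Bochner-integral-continuous-linear-operator}) applied to the continuous linear functional $\la\,\cdot\,,g\ra$, one obtains, for all $f,g\in H$,
$$\la A_{\SVMif}(f),g\ra\;=\;\int\Lss_{\SVMif}(x,y)\,f(x)\,g(x)\,\mu\big(d(x,y)\big)\,.$$
This expression is symmetric in $f$ and $g$, so $A_{\SVMif}$ and hence $K_F$ are self-adjoint. Moreover, convexity of $t\mapsto L(x,y,t)$ yields $\Lss\ge 0$, and $\mu$ is a positive measure, so with $f=g$ the form is nonnegative and therefore
$$\la K_F(f),f\ra\;=\;2\lambda_0\|f\|_H^2+\int\Lss_{\SVMif}(x,y)\,f(x)^2\,\mu\big(d(x,y)\big)\;\ge\;2\lambda_0\|f\|_H^2\,.$$

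I would then conclude by the standard fact that a bounded self-adjoint operator $K_F$ with $\la K_F(f),f\ra\ge 2\lambda_0\|f\|_H^2$ is invertible: Cauchy--Schwarz gives $\|K_F(f)\|_H\ge 2\lambda_0\|f\|_H$, so $K_F$ is injective with closed range and bounded inverse ($\|K_F^{-1}\|\le(2\lambda_0)^{-1}$); self-adjointness gives $\textup{ran}(K_F)^{\perp}=\ker(K_F)=\{0\}$, so the range is dense and, being closed, equals $H$. Thus $K_F$ is a continuous linear bijection with continuous inverse. The only genuinely delicate step is the identification of the quadratic form and the ensuing positivity: everything hinges on the sign of $\Lss$ (from convexity of $L$) and on $\iota(F)$ being a true positive measure, which together supply the coercivity driving the functional-analytic conclusion; the continuity and linearity parts are immediate from Lemma~\ref{lemma-prep-well-def-operators}.
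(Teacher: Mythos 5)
Your proof is correct, but it takes a genuinely different route from the paper's. The paper proves invertibility via the Fredholm alternative: it shows (i) $K_F$ is injective, using essentially the same positivity computation as yours ($\Lss_{\SVMif}\ge 0$ from convexity and $\iota(F)\ge 0$, so that $\|K_F(f)\|_H^2>4\lambda_0\int\Lss_{\SVMif}f^2\,d[\iota(F)]\ge 0$ for $f\ne 0$), and (ii) the operator $A_{\SVMif}$ is \emph{compact}, which it deduces from the compact embedding of bounded subsets of $H$ into $\mathcal{C}(\X)$ (Steinwart--Christmann, Corollary 4.31, resting on compactness of $\X$) together with the continuity of $\tilde{A}_{\SVMif}$ on $\mathcal{C}(\X)$ from Lemma \ref{lemma-prep-well-def-operators}. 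You instead observe that $\la A_{\SVMif}(f),g\ra=\int\Lss_{\SVMif}fg\,d[\iota(F)]$ is a symmetric nonnegative form, so $K_F$ is self-adjoint and coercive with constant $2\lambda_0$, and conclude by the standard closed-range/dense-range argument. Your route is more elementary: it dispenses with compactness of $A_{\SVMif}$ (and hence with the compact embedding $H\hookrightarrow\mathcal{C}(\X)$) and yields the quantitative bound $\|K_F^{-1}\|\le(2\lambda_0)^{-1}$ uniformly over $F\in B_S$, which the paper does not record. What the paper's Fredholm argument buys is closer alignment with the template of \cite[Theorem 10.18]{steinwart2008}, from which the surrounding differentiability proofs are adapted; the compactness machinery it sets up is in the same spirit as arguments reused elsewhere (e.g.\ the covering arguments in Lemma \ref{lemma-prep-hadamard}), but for this particular lemma your self-adjointness argument suffices and is shorter.
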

\begin{proof}\item
  It follows from Lemma \ref{lemma-prep-well-def-operators}
  that $K_{F}$ is a continuous linear operator and it only remains to 
  prove that $K_{F}$ is invertible. 
  This is done by use of the
  Fredholm alternative (see e.g. \cite[Theorem 9.29]{griffel2002}). 
  The following 
  proof is essentially a variant
  of the proof of \cite[Theorem 10.18]{steinwart2008}. 
  We have to show:
  \begin{enumerate}
   \item[(i)] $K_{F}$ is injective.
   \item[(ii)] $A:=A_{f_{\iota(F)}}$ 
     as defined in
     Lemma \ref{lemma-prep-well-def-operators} is a compact operator. 
  \end{enumerate}
  Define $\mu=\iota(F)$\,.
  In order to prove (i), fix any $f\in H\setminus\{0\}$
  and note that convexity of $L$ implies
  $\Lss_{f_{\mu}}\geq0$\,.
  Therefore, 
  \begin{eqnarray*}
    \lefteqn{
      \|K_{F}(f)\|_{H}^2
      \;=\;\big\la 2\lambda_0 f+A(f)\,,\,2\lambda_0 f+A(f)\big\ra \;=
    }\\
    &=&\!\!4\lambda_0^2\|f\|_{H}^2+4\lambda_0\la f,A(f)\ra+\|A(f)\|_{H}^2\;>\;
       4\lambda_0\la f,A(f)\ra\;= \\
    &=&\!\!4\lambda_0\big\la f\,,\,{\textstyle \int}
                                     \Lss_{f_{\mu}}f\Phi \,d\mu
                    \big\ra  
       \stackrel{(\ref{Bochner-integral-continuous-linear-operator})}{=}
       4\lambda_0 \!\!\int \!\! \Lss_{f_{\mu}}f\big\la f,\Phi 
                      \big\ra \,d\mu
       =4\lambda_0 \!\!\int\!\! \Lss_{f_{\mu}}\!\cdot\!f^2 \,d\mu
       \,\geq\,0.
  \end{eqnarray*} 
  In the following, (ii) will be shown.  
  To this end, let $M\subset H$ be a (norm-)bounded subset of $H$\,.
  Since $\X$ is compact, it follows from 
  \cite[Corollary 4.31]{steinwart2008} that $M$ is 
  a relatively compact subset of $\mathcal{C}(\X)$
  (with respect to the norm-topology of $\mathcal{C}(\X)$). 
  In order to prove compactness of $A$\,, we have to show
  that every sequence
  $(A(f_{j}))_{j\in\N}\subset\{A(f)|\,f\in M\}$ contains a 
  convergent subsequence. Relative compactness of $M$
  (in $\mathcal{C}(\X)$) 
  implies that there is
  a subsequence 
  $(f_{j_{\ell}})_{\ell\in\N}\subset(f_{j})_{j\in\N}$
  which is a Cauchy-sequence in $\mathcal{C}(\X)$\,.
  Since $\tilde{A}_{f_{\iota(F)}}$ is a continuous linear
  operator on $\mathcal{C}(\X)$ (Lemma \ref{lemma-prep-well-def-operators}),
  this implies that the sequence
  $$A(f_{j_{\ell}})\;=\;\tilde{A}_{f_{\iota(F)}}(f_{j_{\ell}})\,,
    \qquad\ell\in\N\,,
  $$
  is a Cauchy-sequence in $H$\,.
  Hence, $(A(f_{j_{\ell}}))_{\ell\in\N}$ converges in $H$ since
  $H$ is complete.
\end{proof}

By use of these preliminary lemmas, G\^{a}teaux-differentiability
of the SVM-functional can be shown now:
\begin{proposition}\label{prop-gateaux}
  Let $F\in B_{S}$\,, $G\in\ell_{\infty}(\G)$ and
  $\rho>0$ such that
  $F+sG\in B_{S}$ for every $s\in(-\rho,\rho)$.
  Then,
  there is a unique finite signed measure $\mu$ such that
  \begin{eqnarray}\label{lemma-gateaux-1}
      \int g\,d\mu\;=\;G(g)\qquad\forall\,g\in\G\;.
  \end{eqnarray}
  Furthermore,
  $$\lim_{s\rightarrow 0}
    \left\|\frac{S(F+sG)-S(F)}{s}-S_{F}^{\prime}(G)\right\|_{H}
    \;\;=\;\;0
  $$
  where
  \begin{eqnarray}\label{lemma-gateaux-2}
    S_{F}^{\prime}(G)\;=\;
    -K_{F}^{-1}\Big(\E_{\mu}\big(\Ls_{\SVMif}\Phi\big)\Big)\;.
  \end{eqnarray}
  In particular, $S$ is G\^{a}teaux-differentiable.  
\end{proposition}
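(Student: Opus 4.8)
The plan is to characterize each support vector machine by its first-order optimality condition and then carry out a perturbation analysis of that condition along the line $s\mapsto F+sG$. First, the existence and uniqueness of $\mu$ is immediate: since $F\in B_{S}$ and $F+sG\in B_{S}$ for $s\in(0,\rho)$, the difference $sG=(F+sG)-F$ lies in $\textup{lin}(B_{S})$, hence so does $G$; Lemma \ref{lemma-prep-well-def-operators} then supplies a unique finite signed measure $\mu=\iota(G)$ with $\int g\,d\mu=G(g)$ for all $g\in\G$, and linearity of $\iota$ gives $\iota(F+sG)=\iota(F)+s\mu$. Throughout write $\mu_{0}:=\iota(F)\in\textup{ca}^{+}(\XY,\mathfrak{B}(\XY))$, $f_{0}:=S(F)=\SVMif$, and $f_{s}:=S(F+sG)$.

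Next I would record the stationarity condition. By Lemma \ref{lemma-derivatives}(a), the convex, coercive map $f\mapsto\int L_{f}\,d\nu+\lambda_{0}\|f\|_{H}^{2}$ is Fr\'{e}chet-differentiable for every $\nu=\iota(\cdot)$ of an element of $B_{S}$ (the integrability conditions of Lemma \ref{lemma-derivatives} being guaranteed by membership in $B_{S}$), so its unique minimizer $f_{\nu}$ is characterized by $2\lambda_{0}f_{\nu}+\int\Ls_{f_{\nu}}\Phi\,d\nu=0$. Applying this to $\nu=\mu_{0}$ and to $\nu=\mu_{0}+s\mu$, subtracting, and using $\iota(F+sG)=\mu_{0}+s\mu$ yields
$$2\lambda_{0}(f_{s}-f_{0})+\int(\Ls_{f_{s}}-\Ls_{f_{0}})\Phi\,d\mu_{0}+s\int\Ls_{f_{s}}\Phi\,d\mu=0.$$

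The crux is the third step: proving the Lipschitz bound $\|f_{s}-f_{0}\|_{H}=O(|s|)$. Taking the inner product of the displayed identity with $f_{s}-f_{0}$ and using the reproducing property (\ref{feature-map-reproducing-property}) together with (\ref{Bochner-integral-continuous-linear-operator}) gives
$$2\lambda_{0}\|f_{s}-f_{0}\|_{H}^{2}=-\int(\Ls_{f_{s}}-\Ls_{f_{0}})(f_{s}-f_{0})\,d\mu_{0}-s\int\Ls_{f_{s}}(f_{s}-f_{0})\,d\mu.$$
Since $L$ is convex in its third argument, $t\mapsto\Ls(x,y,t)$ is nondecreasing, so the integrand of the first term is pointwise nonnegative; as $\mu_{0}=\iota(F)$ is a \emph{nonnegative} measure, that term is $\leq0$ and may be dropped. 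Bounding $|f_{s}-f_{0}|\leq\|k\|_{\infty}\|f_{s}-f_{0}\|_{H}$ by (\ref{hilbert-norm-uniform-norm}) and $|\Ls_{f_{s}}|\leq b_{a}^{\prime}$ by (\ref{theorem-sqrt-n-consistency-1}) — where $a$ is a uniform bound on $\|f_{s}\|_{\infty}$ obtained from (\ref{prep-inequality-hnorm}) for $s$ near $0$ — and dividing by $\|f_{s}-f_{0}\|_{H}$ (the estimate being trivial otherwise) yields $\|f_{s}-f_{0}\|_{H}\leq|s|\,(2\lambda_{0})^{-1}\|k\|_{\infty}\int b_{a}^{\prime}\,d|\mu|$. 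In particular $f_{s}\to f_{0}$ as $s\to0$.

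Finally I would expand the stationarity increment. By Lemma \ref{lemma-derivatives}(b), $\int(\Ls_{f_{s}}-\Ls_{f_{0}})\Phi\,d\mu_{0}=\int\Lss_{f_{0}}(f_{s}-f_{0})\Phi\,d\mu_{0}+R_{s}$ with $\|R_{s}\|_{H}=o(\|f_{s}-f_{0}\|_{H})=o(|s|)$. The first two terms of the subtracted identity are exactly $K_{F}(f_{s}-f_{0})$, so $K_{F}(f_{s}-f_{0})=-R_{s}-s\int\Ls_{f_{s}}\Phi\,d\mu$; applying the bounded inverse $K_{F}^{-1}$ from Lemma \ref{lemma-KF-invertible} and dividing by $s$ gives
$$\frac{S(F+sG)-S(F)}{s}=-\frac{1}{s}K_{F}^{-1}(R_{s})-K_{F}^{-1}\Big(\int\Ls_{f_{s}}\Phi\,d\mu\Big).$$
The first term vanishes as $s\to0$ because $\|R_{s}\|_{H}=o(|s|)$, and the second converges to $-K_{F}^{-1}\big(\int\Ls_{f_{0}}\Phi\,d\mu\big)=-K_{F}^{-1}\big(\E_{\mu}(\Ls_{\SVMif}\Phi)\big)=S_{F}'(G)$ by continuity of $f\mapsto\int\Ls_{f}\Phi\,d\mu$ (Lemma \ref{lemma-derivatives}(b)) and boundedness of $K_{F}^{-1}$. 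The main obstacle is precisely the third step: everything hinges on upgrading mere continuity to the linear rate $O(|s|)$, which is what renders the Fr\'{e}chet remainder negligible after division by $s$. The key insight is that convexity of $L$ combined with the positivity of $\mu_{0}=\iota(F)$ forces the ``self-interaction'' term to carry a favorable sign, so that only the genuine perturbation term of order $|s|$ survives.
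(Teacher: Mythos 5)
Your proof is correct, but it follows a genuinely different route from the paper. The paper also starts from the stationarity equation $2\lambda_0 f_{\nu}+\int \Ls_{f_{\nu}}\Phi\,d\nu=0$, but it then packages it as a map $\Gamma(s,f)$, proves that $\Gamma$ is \emph{jointly} continuously Fr\'{e}chet-differentiable in $(s,f)$ (this is where Lemma \ref{lemma-continuous-partial-derivative} is invoked), and applies a classical implicit function theorem at the point $(0,f_{\nu})$, using the invertibility of $K_F=\frac{\partial\Gamma}{\partial H}(0,f_{\nu})$ from Lemma \ref{lemma-KF-invertible}; the derivative formula then drops out of the IFT. You instead subtract the two stationarity equations directly and close the argument with an a priori Lipschitz estimate $\|f_s-f_0\|_H=O(|s|)$, obtained by pairing the increment identity with $f_s-f_0$ and exploiting monotonicity of $t\mapsto\Ls(x,y,t)$ (convexity of $L$) together with positivity of $\mu_0=\iota(F)$ to discard the self-interaction term; after that, one application of the Fr\'{e}chet expansion from Lemma \ref{lemma-derivatives}(b) at $f_0$ and inversion of $K_F$ finishes. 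Your route is more elementary — it avoids the implicit function theorem and does not need the joint continuity of the partial derivative at this stage — and it yields an explicit Lipschitz constant $(2\lambda_0)^{-1}\|k\|_{\infty}\int b_a^{\prime}\,d|\mu|$ for the modulus of continuity in $s$, which the paper only obtains implicitly. What the paper's IFT argument buys in exchange is that $s\mapsto S(F+sG)$ is continuously differentiable on a whole interval rather than merely differentiable at $s=0$; since the later proof of Theorem \ref{theorem-hadamard} needs differentiability at interior points $s\in(0,1)$, the paper gets this for free, whereas with your argument one would re-invoke the proposition at each shifted base point $F+s_0G$ — which the statement as formulated permits, so nothing downstream breaks. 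The small points your write-up leaves implicit (uniformity of the bound $a$ on $\|f_s\|_{\infty}$ near $s=0$ via $(F+sG)(b)\leq F(b)+|s|\|G\|_{\infty}$, and $b_a^{\prime}\in L_1(|\mu|)$ from $b_a^{\prime}\in L_2(|\mu|)$ and finiteness of $|\mu|$) are all covered by the preparatory lemmas.
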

\begin{proof}\item
    The following proof is similar to the proof of 
    \cite[Theorem 10.18]{steinwart2008} but some care is needed because
    we also have to deal with \emph{signed} measures here.
    
  \textit{Part 1:}
    Define $\nu:=\iota(F)$\;. Since
    $G=s^{-1}\big((F+sG)-F\big)
      \in\textup{lin}(B_{S})
    $
    for any $s\in(-\rho,\rho)\setminus\{0\}$\,,
    it follows from Lemma \ref{lemma-prep-well-def-operators} that
    there is a unique finite signed measure $\mu$ such that
    \begin{eqnarray}\label{lemma-gateaux-p1}
      \int g\,d\mu\;=\;G(g)\qquad\forall\,g\in\G\;.
    \end{eqnarray}
    Define
    $$\Gamma\;:\;\;\R\times H\;\rightarrow\;H\,,\qquad
      (s,f)\;\mapsto\;2\lambda_0 f + \int \Ls_{f}\Phi\,d\nu
      + s\int \Ls_{f}\Phi\,d\mu\;.
    $$
    Lemma \ref{lemma-derivatives}\,(b) implies that
    the maps
    $\,H\rightarrow H\,,\;\,f\mapsto\int \Ls_{f}\Phi\,d\nu\,$
    and $\,H\rightarrow H\,,\;\,f\mapsto\int \Ls_{f}\Phi\,d\mu\,$
    are continuous. Hence, an
    easy calculation shows that $\Gamma$ is continuous.

  \textit{Part 2:}
    In this part, it will be shown that 
    $\Gamma$ is continuously Fr\'{e}chet-differ\-entiable.  
    First, it follows from Lemma \ref{lemma-derivatives}\,(b)
    that the map
    $$\R\times H\;\rightarrow\;H\,,\qquad
      (s,f)\;\mapsto\;
      \frac{\partial \Gamma}{\partial s}(s,f)\;=\;\int \Ls_{f}\Phi\,d\mu
    $$
    is continuous.   
    Secondly, Lemma \ref{lemma-derivatives}\,(b) yields
    that the partial derivative
    $\frac{\partial \Gamma}{\partial H}(s,f)$
    is given by
    $$\frac{\partial \Gamma}{\partial H}(s,f)\;:\;\;
      H\;\rightarrow\;H\,,\qquad
      h\;\mapsto\;2\lambda_0 h
           +\int \!\!L_{f}^{\prime\prime}h\Phi \,d\nu
           +s\!\int \!\!L_{f}^{\prime\prime}h\Phi \,d\mu      
    $$
    for every $(s,f)\in\R\times H$\,.
    Let $\mathcal{B}(H,H)$ be the set of all continuous linear operators
    $T:H\rightarrow H$\,;  
    this is a Banach space with the operator norm. It follows
    from Lemma \ref{lemma-continuous-partial-derivative} that 
    $$\R\times H\;\rightarrow\;\mathcal{B}(H,H)\,,\qquad
      (s,f)\;\mapsto\;\frac{\partial \Gamma}{\partial H}(s,f)
    $$
    is continuous.  
    Since $\Gamma$ is continuous (as stated above),
    this implies that $\Gamma$ is
    continuously Fr\'{e}chet-differentiable
    according to \cite[p.\ 635]{denkowski2003}.

  \textit{Part 3:}
    Now, we can prove the statement of the lemma
    by use of an implicit function theorem.
    It follows from Lemma \ref{lemma-derivatives}\,(a) 
    that
    \begin{eqnarray}\label{lemma-gateaux-p2}
      \Gamma(s,f)\;=\;
      \frac{\partial\Rs_{L,\nu+s\mu,\lambda_0}}{\partial H}(f)
      \qquad\forall\,f\in H\qquad\forall\,s\in(-\rho,\rho)\;.
    \end{eqnarray}
    Since $H\rightarrow\R\,,\;f\mapsto\Rs_{L,\nu+s\mu,\lambda_0}$
    is strictly convex and continuously Fr\'{e}chet-differentiable, the
    following assertion is valid for every
    $s\in(-\rho,\rho)$:
    \begin{eqnarray}\label{lemma-gateaux-p3}
      \Gamma(s,f)\;=\;0\qquad\Leftrightarrow\qquad
      f=f_{\nu+s\mu}\;.
    \end{eqnarray}
    (Direction ``$\Leftarrow$'' follows from
    \cite[Theorem 7.4.1]{luenberger1969} and
    ``$\Rightarrow$'' follows from
    \cite[Lemma 8.7.1]{luenberger1969}
    and uniqueness of the minimizer.)
    As shown in Part 2, $\Gamma$ is continuously 
    Fr\'{e}chet-differentiable. According to
    Lemma \ref{lemma-KF-invertible},
    $$\frac{\partial \Gamma}{\partial H}\big(0,f_{\nu}\big)
      \;=\;K_{F}
    $$
    is an invertible operator.  
    Therefore, it follows from a classical implicit function theorem 
    (e.g.\ \cite[\S\,4]{akerkar1999})
    that there is a $\delta\in(0,\rho)$ and a Fr\'{e}chet-differentiable map
    $\,\varphi:\,(-\delta,\delta)\rightarrow H\,$ such that
    \begin{eqnarray}\label{lemma-gateaux-p4}
      \Gamma\big((s,\varphi(s)\big)\;=\;0
      \qquad\forall\,s\in (-\delta,\delta)
    \end{eqnarray}
    and the derivative is equal to
    $$\varphi^{\prime}(0)\;=\;
      -\left(\frac{\partial \Gamma}{\partial H}\big(0,\varphi(0)\big)
       \right)^{-1}
       \left(\frac{\partial \Gamma}{\partial s}\big(0,\varphi(0)\big)
       \right)
       \;=\;-K_{F}^{-1}\Big(\E_{\mu}\big(\Ls_{f_{\nu}}\Phi\big)\Big)\;.
    $$
    According to
    (\ref{lemma-gateaux-p3}) and (\ref{lemma-gateaux-p4})\,,
    $\varphi(s)=
      f_{\nu+s\mu}=
      S(F+sG)
    $
    for every $s\in(-\delta,\delta)$.
    Define $S_{F}^{\prime}(G)=\varphi^{\prime}(0)$.
    Hence,
    $$\lim_{s\rightarrow 0}
      \left\|\frac{S(F+sG)-S(F)}{s}-S_{F}^{\prime}(G)\right\|_{H}
      =\;
         \lim_{s\rightarrow 0}
         \left\|\frac{\varphi(s)-\varphi(0)}{s}-\varphi^{\prime}(0)
         \right\|_{H}
      =\; 0\,.
    $$ 
\end{proof}

\subsection{Hadamard-Differentiability of the SVM-Functional}
  \label{subsec-hadamard}

In this subsection, the result of the previous Subsection
\ref{subsec-gateaux} is strengthened.
In statistics, three different types of differentiability
in Banach spaces are particularly important:
G\^{a}teaux-differentiability, 
Hadamard-differentiability and
Fr\'{e}chet-differentiability. Among these,
G\^{a}teaux is the weakest and
Fr\'{e}chet is the strongest notion of 
differentiability. 
In order to apply the functional delta-method, we need the
intermediate Hadamard-differentiability.
It is well-known that 
a G\^{a}teaux-differentiable function 
is even Fr\'{e}chet-differ\-entiable 
(and, therefore, Hadamard-differentiable) if the
(G\^{a}teaux-)derivative is continuous.
In the following Lemma \ref{lemma-prep-hadamard},
it will be shown that the G\^{a}teaux-derivative of $S$
fulfills a certain continuity property
(\ref{lemma-prep-hadamard-1}). This
property is not strong enough in order to guarantee
Fr\'{e}chet-differentiability. However,
it will be shown in the proof of
Theorem \ref{theorem-hadamard} that it is just strong enough
in order to guarantee Hadamard-differentiability
of $S$ tangentially to the closed linear span of $B_{S}$. 
In order to do this, we only have to slightly change
the proof of the well-known interrelationship
between G\^{a}teaux- and Fr\'{e}chet-differentiability
(as provided, e.g., by \cite[Prop.\ 5.1.8]{denkowski2003}).

\begin{lemma}\label{lemma-prep-hadamard}
  Let $B_{0}=\textup{cl}\big(\textup{lin}(B_{S})\big)$
  be the closed linear span of $B_S$ in $\ell_{\infty}(\G)$.
  Let $(G_{n})_{n\in\N}\subset\textup{lin}(B_{S})$
  be a sequence
  such that
  $\lim_{n\rightarrow\infty}\|G_{n}-G_{0}\|_{\infty}=0$
  for some $G_{0}\in\ell_{\infty}(\G)$
  and let $(F_{n})_{n\in\N}\subset B_{S}$
  be a sequence
  such that
  $\lim_{n\rightarrow\infty}\|F_{n}-F_{0}\|_{\infty}=0
  $
  for some $F_{0}\in B_{S}$ which fulfills
  \begin{eqnarray}\label{lemma-prep-hadamard-0}
    F_{0}(b)\;<\;\int b \,dP\,+\,\lambda_{0}\;.
  \end{eqnarray} 
  Then, there is a $n_{0}\in\N$ such that, for every 
  $F\in\{F_{n}|n\in\N_{\geq n_{0}}\}\cup\{F_{0}\}$\,, the map
  $S_{F}^{\prime}:G\mapsto S_{F}^{\prime}(G)$ defined in
  Proposition \ref{prop-gateaux} 
  can be extended to a continuous linear operator
  $S_{F}^{\prime}:\;B_{0}\rightarrow H.
  $
  In addition,
  \begin{eqnarray}\label{lemma-prep-hadamard-1}
    \lim_{n\rightarrow\infty}
    \big\|S_{F_{n}}^{\prime}(G_{n})-S_{F_{0}}^{\prime}(G_{0})\big\|_{H}
    \;=\;0\;.
  \end{eqnarray} 
\end{lemma}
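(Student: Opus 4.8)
The plan is to establish the two assertions in turn, the common engine being a bound on $K_F^{-1}$ that is uniform in $F$. For $F\in B_S$ the measure $\iota(F)$ is positive and convexity of $L$ gives $\Lss_{\SVMif}\ge0$, so the computation in the proof of Lemma \ref{lemma-KF-invertible} in fact shows $\|K_F(f)\|_H^2\ge 4\lambda_0^2\|f\|_H^2$ for every $f\in H$, i.e.\ $\|K_F^{-1}\|\le\tfrac{1}{2\lambda_0}$ with a constant independent of $F$. This is what will let me control all inverses simultaneously.

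For the extension I would first note that condition \eqref{lemma-prep-hadamard-0} forces $\|\SVMifz\|_H\le c_0$: by \eqref{prep-inequality-hnorm} and \eqref{prep-def-c0-constant}, $F_0(b)<\int b\,dP+\lambda_0$ gives $\|\SVMifz\|_H<\sqrt{\tfrac1{\lambda_0}\int b\,dP+1}\le c_0$. Since $b\in\G$ and $F_n\to F_0$ in $\ell_\infty(\G)$, one has $F_n(b)\to F_0(b)$, so there is an $n_0$ with $F_n(b)<\int b\,dP+\lambda_0$, whence $\|\SVMifn\|_H\le c_0$, for all $n\ge n_0$; this is the origin of $n_0$. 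For such an $F$ and any $G\in\textup{lin}(B_S)$ with measure $\mu=\iota(G)$, the reproducing property \eqref{feature-map-reproducing-property} gives, for $\|f\|_H\le1$,
$$\la \E_\mu(\Ls_{\SVMif}\Phi),f\ra=\int \Ls_{\SVMif}\,f\,d\mu=G(\Ls_{\SVMif}f),$$
and $\Ls_{\SVMif}f\in\G_2$ because $\|\SVMif\|_H\le c_0$ and $\|f\|_H\le1$. Taking the supremum over $\|f\|_H\le1$ yields $\|\E_\mu(\Ls_{\SVMif}\Phi)\|_H\le\|G\|_\infty$, so by \eqref{lemma-gateaux-2} and the uniform bound $\|S_F'(G)\|_H\le\tfrac1{2\lambda_0}\|G\|_\infty$. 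Hence $S_F'$ is $\|\cdot\|_\infty$-bounded on $\textup{lin}(B_S)$ and, $H$ being complete, extends uniquely to a continuous linear operator on $B_0=\textup{cl}(\textup{lin}(B_S))$.

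For \eqref{lemma-prep-hadamard-1} I would split
$$S_{F_n}'(G_n)-S_{F_0}'(G_0)=\big(S_{F_0}'(G_n)-S_{F_0}'(G_0)\big)+\big(S_{F_n}'(G_n)-S_{F_0}'(G_n)\big),$$
the first bracket tending to $0$ since $S_{F_0}'$ is continuous linear and $\|G_n-G_0\|_\infty\to0$. Writing $v_n=\E_{\mu_n}(\Ls_{\SVMifn}\Phi)$, $w_n=\E_{\mu_n}(\Ls_{\SVMifz}\Phi)$ with $\mu_n=\iota(G_n)$, the resolvent identity $K_{F_n}^{-1}-K_{F_0}^{-1}=K_{F_n}^{-1}(K_{F_0}-K_{F_n})K_{F_0}^{-1}$ turns the second bracket into
$$S_{F_n}'(G_n)-S_{F_0}'(G_n)=-K_{F_n}^{-1}(v_n-w_n)-K_{F_n}^{-1}(K_{F_0}-K_{F_n})K_{F_0}^{-1}w_n.$$
By the uniform bound it suffices to show $\|v_n-w_n\|_H\to0$ and $\|(K_{F_0}-K_{F_n})h_n\|_H\to0$, where $h_n=K_{F_0}^{-1}w_n$. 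As above $\|w_n\|_H\le\|G_n\|_\infty$, and in fact $w_n\to w_0$ in $H$, where $w_0$ is defined by $\la w_0,f\ra=G_0(\Ls_{\SVMifz}f)$, because $\la w_n-w_0,f\ra=(G_n-G_0)(\Ls_{\SVMifz}f)$ is bounded by $\|G_n-G_0\|_\infty$ uniformly in $\|f\|_H\le1$; hence $h_n\to h_0:=K_{F_0}^{-1}w_0$.

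The heart of the matter, and the main obstacle, is the continuity $\SVMifn\to\SVMifz$ in $H$. By Lemma \ref{lemma-convergence-in-BS-weak-convergence}, $F_n\to F_0$ gives weak convergence $\iota(F_n)\to\iota(F_0)$ with convergent masses; combining the $H$-boundedness $\|\SVMifn\|_H\le c_0$ with the compact embedding $H\hookrightarrow\mathcal{C}(\X)$ (\cite[Cor.\ 4.31]{steinwart2008}) I would extract a weakly convergent subsequence, identify its limit as the minimizer $\SVMifz$ via lower semicontinuity of $\|\cdot\|_H^2$ and convergence of the risk term, and upgrade to norm convergence through convergence of the norms; since the limit is always $\SVMifz$, the full sequence converges. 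Granting this, Lemma \ref{lemma-continuous-partial-derivative} applied with the tight, mass-bounded sequence $\iota(F_n)$ gives $\sup_{\|h\|_H\le1}\|\int(\Lss_{\SVMifn}-\Lss_{\SVMifz})h\Phi\,d\iota(F_n)\|_H\to0$, which with $h_n\to h_0$ handles the integrand-change part of $(K_{F_0}-K_{F_n})h_n$; the measure-change part $\int\Lss_{\SVMifz}h_0\Phi\,d[\iota(F_n)-\iota(F_0)]$ tends to $0$ by weak convergence, the integrand being a fixed bounded continuous $H$-valued map approximable by finite sums $\sum_i\phi_i v_i$. For $v_n-w_n=\int(\Ls_{\SVMifn}-\Ls_{\SVMifz})\Phi\,d\mu_n$ the complication is that $\mu_n$ may have unbounded total variation and $G_0$ need not be a measure; I would instead write $\la v_n-w_n,f\ra=G_n(\Ls_{\SVMifn}f)-G_n(\Ls_{\SVMifz}f)$, insert $G_0$, bound the two endpoint differences by $\|G_n-G_0\|_\infty$, and approximate $G_0$ in $\|\cdot\|_\infty$ by a genuine signed measure $\tilde\mu\in\iota(\textup{lin}(B_S))$, for which $\int(\Ls_{\SVMifn}-\Ls_{\SVMifz})f\,d\tilde\mu\to0$ uniformly in $\|f\|_H\le1$ by the same tightness argument as in Lemma \ref{lemma-continuous-partial-derivative}, now for $\Ls$ and the fixed finite measure $\tilde\mu$. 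Together these estimates yield \eqref{lemma-prep-hadamard-1}.
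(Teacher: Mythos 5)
Your overall architecture is sound and in two places genuinely leaner than the paper's. The uniform bound $\|K_F^{-1}\|\leq\tfrac{1}{2\lambda_0}$ (valid for all $F\in B_S$ by exactly the positivity computation you cite) together with the resolvent identity lets you apply the difference of inverses only to the single convergent sequence $h_n=K_{F_0}^{-1}(w_n)$; the paper instead proves full operator-norm convergence $K_{F_n}^{-1}\rightarrow K_{F_0}^{-1}$, which forces the $\varepsilon$-net argument over the unit ball $\F_{1}$ of $H$ in $\mathcal{C}(\X)$ in the proof of (\ref{lemma-prep-hadamard-p6}); your version only needs weak convergence tested against the fixed function $h_0$ (plus tightness to reduce to a compact subset of $\XY$, which you should state explicitly for this term too, since $\Y$ need not be compact). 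Your treatment of $n_0$, of the extension via $\|W_F\|\leq 1$, and of $v_n-w_n$ by inserting $G_0$ and approximating it from $\textup{lin}(B_S)$ all match the paper's Parts 1 and 3 in substance, the last being the same double-limit interchange.

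The genuine gap is in the step you yourself call the heart of the matter, $\SVMifn\rightarrow\SVMifz$ in $H$. Your compactness/variational argument must identify the weak limit of (a subsequence of) the minimizers as $\SVMifz$, and every route to this -- the two $\Gamma$-limit inequalities, or passing to the limit in the optimality condition $2\lambda_0 f_n+\int\Ls_{f_n}\Phi\,d[\iota(F_n)]=0$ -- requires $\int g\,d[\iota(F_n)]\rightarrow\int g\,d[\iota(F_0)]$ for integrands $g$ dominated by the Nemitski envelopes $b$ or $b_a^{\prime}$. Since $\Y$ is only closed, these integrands are unbounded, and weak convergence of $\iota(F_n)$ plus tightness do not suffice: one needs uniform integrability of $b$ and $b_a^{\prime}$ with respect to the family $\{\iota(F_n)\}_{n}$, which $\|F_n-F_0\|_{\infty}\rightarrow 0$ does not deliver ($b\in\G$ only yields convergence of the numbers $F_n(b)$, and $b$ is not assumed continuous). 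The paper avoids this entirely: by \cite[Theorem 5.9 and Corollary 5.19]{steinwart2008} the SVM map is stable in the measure in the sense that $\|\SVMifn-\SVMifz\|_{H}$ is bounded by $\tfrac{1}{\tilde{\lambda}_{n}}\big\|W_{\tilde{F}_{0,n}}(\tilde{F}_{n})-W_{\tilde{F}_{0,n}}(\tilde{F}_{0})\big\|_{H}$, and -- this is precisely why $\G_{2}$ was built into $\G$ -- the operator bound $\|W\|\leq 1$ turns this into $\tfrac{1}{\tilde{\lambda}_{n}}\|\tilde{F}_{n}-\tilde{F}_{0}\|_{\infty}\rightarrow 0$; see (\ref{lemma-prep-hadamard-p10-0})--(\ref{lemma-prep-hadamard-p10}). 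Replace your compactness argument by this stability bound (or supply the missing uniform integrability) and the rest of your proposal goes through.
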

\begin{proof}\item The proof consists of four parts:

 \textit{Part 1:}
  Fix any $F\in B_{S}$ such that
  $\big\|\SVMif\big\|_{H}\leq c_{0}$
  where $c_{0}$ is defined as in (\ref{prep-def-c0-constant}). 
  That is,
  \begin{eqnarray}\label{lemma-prep-hadamard-p1}
    \;\;\Ls_{\SVMif}f\;\;\in\;\;\G
    \qquad\forall\,f\in H\;\;\text{with}\;\;\|f\|_{H}\leq 1\;.     
  \end{eqnarray}
  According to Lemma \ref{lemma-prep-well-def-operators},  
  the map $S_{F}^{\prime}:G\mapsto S_{F}^{\prime}(G)$ defined in
  Proposition \ref{prop-gateaux} can be extended to the map
  $$S_{F}^{\prime}\;:\;\;\textup{lin}(B_{S})\;\rightarrow\;H\,,\qquad
    G\;\rightarrow\;
    -K_{F}^{-1}\Big(\E_{\iota(G)}\big(\Ls_{\SVMif}\Phi\big)\Big)
  $$
  Since $\iota$ is linear according to Lemma 
  \ref{lemma-prep-well-def-operators}, this map is linear.
  In order to prove that $S_{F}^{\prime}$
  is a continuous linear operator on $\textup{lin}(B_{S})$\,,
  it is enough to show that
  $$W_{F}\;:\;\;\textup{lin}(B_{S})\;\rightarrow\;H\,,\qquad
    G\;\rightarrow\;
    \E_{\iota(G)}\big(\Ls_{\SVMif}\Phi\big)
  $$
  is a continuous linear operator because
  $K_{F}^{-1}$ is a continuous linear operator
  according to Lemma \ref{lemma-KF-invertible}.
  To this end, note that for every 
  $G\in\textup{lin}(B_{S})$ and every $f\in H$ such that
  $\|f\|_{H}\leq 1$\,,
  \begin{eqnarray*} 
    \Big\la \E_{\iota(G)}\big(\Ls_{\SVMif}\Phi\big) \,,\,f\Big\ra
    \stackrel{(\ref{Bochner-integral-continuous-linear-operator},
              \ref{feature-map-reproducing-property-expectation})}{=}
    \E_{\iota(G)}\big(\Ls_{\SVMif}f\big)
    \stackrel{(\ref{lemma-prep-well-def-operators-1},
                 \ref{lemma-prep-hadamard-p1})}{=}\;\;
    G\big(\Ls_{\SVMif}f\big) \;.
  \end{eqnarray*}       
  That is, for every $f\in H$ such that
  $\|f\|_{H}\leq 1$\,,
  \begin{eqnarray}\label{lemma-prep-hadamard-p2}
    \big\la W_{F}(G),f\big\ra\;=\;G\big(\Ls_{\SVMif}f\big)
    \quad\;\;\forall\,G\in\textup{lin}(B_{S})\;.\;\;
  \end{eqnarray}
  Hence,
  $$\big\|W_{F}(G)\big\|_{H}\;=\;
    \sup_{f\in H \atop \|f\|_{H}\leq 1} \big\la W_{F}(G),f\big\ra
    \;\stackrel{(\ref{lemma-prep-hadamard-p2})}{=}\;
    \sup_{f\in H \atop \|f\|_{H}\leq 1}G\big(\Ls_{\SVMif}f\big)
    \;\leq\;\|G\|_{\infty}
  $$
  and, therefore, $W_{F}$ is a continuous linear operator
  with operator norm 
  $$\big\|W_{F}\big\|\leq 1\;. 
  $$
  Since $\textup{lin}(B_{S})$ is dense in $B_{0}$\,,
  $W_{F}$ can be extended to a continuous linear operator
  $W_{F}:B_{0}\rightarrow H$
  with operator norm
  \begin{eqnarray}\label{lemma-prep-hadamard-p3}
    \big\|W_{F}\big\|\;\leq\;1\; , \;\;
  \end{eqnarray} 
  see e.g.\ \cite[Theorem 1.9.1]{megginson1998}.
  Hence, $S_{F}^{\prime}$ can be extended to the
  continuous linear map
  $$S_{F}^{\prime}\;:\;\;B_{0}\;\rightarrow\;H\,,
    \qquad G\;\mapsto\;-K_{F}^{-1}\big(W_{F}(G)\big)
  $$
  on $B_{0}=\textup{cl}\big(\textup{lin}(B_{S})\big)$\,.  
  In particular, the latter is eventually true for
  $F=F_{n}$ because it follows from
  $\lim_{n\rightarrow\infty}\|F_{n}-F_{0}\|_{\infty}=0$\,,
  $b\in\G$\,, (\ref{prep-def-c0-constant}), (\ref{prep-inequality-hnorm})
  and (\ref{lemma-prep-hadamard-0})
  that there is some $n_{0}\in\N$ such that
  $$\big\|\SVMifn\big\|_{H}\;\leq\;c_{0}
    \qquad\forall\,n\in\N_{\geq n_{0}}\cup\{0\}\;.
  $$
  and, therefore, $F=F_{n}$ fulfills (\ref{lemma-prep-hadamard-p1})
  for every $n\in\N_{\geq n_{0}}\cup\{0\}$\,.
  
  In addition, note that, for every $G\in B_0$, there is
  a sequence $G_n\in\textup{lin}(B_{S})$, $n\in\N$,
  which converges to $G$ and, therefore,
  $$\big\la W_{F_0}(G),f\big\ra\,=
    \lim_{n\rightarrow\infty}\big\la W_{F_0}(G_n),f\big\ra
    \stackrel{(\ref{lemma-prep-hadamard-p2})}{=}
      \lim_{n\rightarrow\infty}G_n\big(\Ls_{\SVMifz}f\big)
    \,=\,G\big(\Ls_{\SVMifz}f\big)
  $$ 
  for every $f\in H$ such that $\|f\|_H\leq 1$.
  As $K_{F_0}$ is invertable,
  $S_{F_0}^{\prime}(G)=0$ if and only if
  $W_{F_0}(G)\big)=0$.
  Summing up, we may record for
  later purposes (Proposition 
  \ref{prop-degenerated-limit})
  that, 
  for every $G\in B_0$,
  \begin{eqnarray}\label{lemma-prep-hadamard-p3001}
    S_{F_0}^{\prime}(G)\;=\;0
    \quad\;\Leftrightarrow\;\quad
    G\big(\Ls_{\SVMifz}f\big)=0
    \;\;\;\forall\,
    f\in H\text{ such that }
    \|f\|_{H}\leq 1.\;
  \end{eqnarray}
  
 \textit{Part 2:}
  In this part of the proof, it will be shown that
  \begin{eqnarray}\label{lemma-prep-hadamard-p4}
    K_{F_{n}}^{-1}\;\;\xrightarrow[\;n\rightarrow\infty\;]{}\;\;
    K_{F_{0}}^{-1}
    \qquad\text{in the operator norm}\;.
  \end{eqnarray}
  To this end, it suffices to show that
  $$K_{F_{n}}\;\;\xrightarrow[\;n\rightarrow\infty\;]{}\;\;
    K_{F_{0}}
    \qquad\text{in the operator norm}
  $$
  according to \cite[Lemma VII.6.1]{dunford1958}.
  Because of
  \begin{eqnarray*}
    \big\|K_{F_{n}}(f)-K_{F_{0}}(f)\big\|_H
    &\leq& \bigg\|\int\!\!\Lss_{\SVMifn}f\Phi\,\,d[\iota(F_{n})] -
                 \!\!\int\!\!\Lss_{\SVMifz}f\Phi\,\,d[\iota(F_{n})]\,
           \bigg\|_{H}\!\!+\\
       & &+\;
           \bigg\|\int\!\!\Lss_{\SVMifz}f\Phi\,\,d[\iota(F_{n})] -
                  \!\int\!\!\Lss_{\SVMifz}f\Phi\,\,d[\iota(F_{0})]
           \bigg\|_{H},
  \end{eqnarray*}
  this can be done by showing
  \begin{eqnarray}\label{lemma-prep-hadamard-p5}
    \lim_{n\rightarrow\infty}\!\sup_{f\in H \atop \|f\|_{H}\leq 1}\!\!
           \bigg\|\int\!\!\Lss_{\SVMifn}f\Phi\,\,d[\iota(F_{n})] -\!\!
                 \int\!\!\Lss_{\SVMifz}f\Phi\,\,d[\iota(F_{n})]\,
           \bigg\|_{H}\!
    =\,0
  \end{eqnarray}
  and
  \begin{eqnarray}\label{lemma-prep-hadamard-p6}
    \lim_{n\rightarrow\infty}\!\sup_{f\in H \atop \|f\|_{H}\leq 1}\!\!
    \bigg\|\int\!\!\Lss_{\SVMifz}f\Phi\,\,d[\iota(F_{n})] -\!\!
                  \int\!\!\Lss_{\SVMifz}f\Phi\,\,d[\iota(F_{0})]
           \bigg\|_{H}\!
    =\,0\,.
  \end{eqnarray}
  In order to prove (\ref{lemma-prep-hadamard-p5}), define
  $$\tilde{F}_{n}\,:=\,\frac{1}{\iota(F_{n})\big(\XY\big)}F_{n}
    \quad\text{and}\quad
    \tilde{\lambda}_{n}\,:=\,\frac{\lambda_{0}}{\iota(F_{n})\big(\XY\big)}
    \quad\;\forall\,n\in\N\cup\{0\}\,,
  $$ 
  and
  $$\tilde{F}_{0,n}\,:=\,
    \frac{\lambda_{0}}{\tilde{\lambda}_{n}}\tilde{F}_{0}
    \;=\;\frac{\iota(F_{n})\big(\XY\big)}{\iota(F_{0})\big(\XY\big)}\,F_{0}
    \qquad\forall\,n\in\N\cup\{0\}
  $$   
  Then, $\iota(\tilde{F}_{n})$ is a probability measure and,
  according to Lemma \ref{lemma-convergence-in-BS-weak-convergence}, 
  it follows
  that 
  $\lim_{n\rightarrow\infty}\iota(F_{n})\big(\XY\big)
   =\iota(F_{0})\big(\XY\big)
  $
  and, therefore, 
  $\lim_{n\rightarrow\infty}\|\tilde{F}_{n}-\tilde{F}_{0}\|_{\infty}=0$\,.
  Hence,
  \begin{eqnarray}\label{lemma-prep-hadamard-p10-0}
    \lefteqn{
      \lim_{n\rightarrow\infty}\big\|\SVMifn-\SVMifz\big\|_{H}
      \;\stackrel{(\ref{prep-standard-regularization-parameter})}{=}\;
         \lim_{n\rightarrow\infty}
             \big\|f_{L,\iota(\tilde{F}_{n}),\tilde{\lambda}_{n}}
                   -f_{L,\iota(\tilde{F}_{0}),\tilde{\lambda}_{0}}\big\|_{H}
      \;\leq\;}  \nonumber \\
    &&\leq\;
         \lim_{n\rightarrow\infty}\,\,
             \big\|f_{L,\iota(\tilde{F}_{n}),\tilde{\lambda}_{n}}
                   -f_{L,\iota(\tilde{F}_{0}),\tilde{\lambda}_{n}}\big\|_{H}
             \,+\,
             \big\|f_{L,\iota(\tilde{F}_{0}),\tilde{\lambda}_{n}}
                   -f_{L,\iota(\tilde{F}_{0}),\tilde{\lambda}_{0}}\big\|_{H}
            \nonumber\\
    &&\stackrel{(\ast)}{\leq}\;
         \lim_{n\rightarrow\infty}
              \frac{1}{\tilde{\lambda}_{n}}
              \bigg\|\int\Ls_{f_{L,\iota(\tilde{F}_{0}),\tilde{\lambda}_{n}}}
                        \!\Phi\,d[\iota(\tilde{F}_{n}] -
                    \int\Ls_{f_{L,\iota(\tilde{F}_{0}),\tilde{\lambda}_{n}}}
                        \!\Phi\,d[\iota(\tilde{F}_{0}]
              \bigg\|_{H}  \nonumber  \\
    &&\stackrel{(\ref{prep-standard-regularization-parameter})}{=}\;
         \lim_{n\rightarrow\infty}
              \frac{1}{\tilde{\lambda}_{n}}
              \bigg\|\int\Ls_{f_{L,\iota(\tilde{F}_{0,n}),\lambda_{0}}}
                        \!\Phi\,d[\iota(\tilde{F}_{n}] -
                    \int\Ls_{f_{L,\iota(\tilde{F}_{0,n}),\lambda_{0}}}
                        \!\Phi\,d[\iota(\tilde{F}_{0}]
              \bigg\|_{H}  \nonumber  \\
    &&=\;\lim_{n\rightarrow\infty}\frac{1}{\tilde{\lambda}_{n}}
        \big\|W_{\tilde{F}_{0,n}}(\tilde{F}_{n})
               -W_{\tilde{F}_{0,n}}(\tilde{F}_{0})
        \big\|_{H}
  \end{eqnarray}
  where $(\ast)$ follows from 
  \cite[Theorem 5.9 and Corollary 5.19]{steinwart2008}.\\  
  Since 
  $\lim_{n\rightarrow\infty}\iota(F_{n})\big(\XY\big)
   =\iota(F_{0})\big(\XY\big)
  $, 
  it follows from
  (\ref{prep-def-c0-constant}), (\ref{prep-inequality-hnorm})
  and (\ref{lemma-prep-hadamard-0})
  that
  $$\big\|f_{\iota(\tilde{F}_{0,n})}\big\|_{H}\;\leq\;c_{0}
    \qquad\text{for large enough }n\in\N.
  $$
  Hence, 
  \begin{eqnarray}\label{lemma-prep-hadamard-p10}
    \lefteqn{
      \lim_{n\rightarrow\infty}\big\|\SVMifn-\SVMifz\big\|_{H}
        \;\stackrel{(\ref{lemma-prep-hadamard-p10-0})}{\leq}\;
          \lim_{n\rightarrow\infty}\frac{1}{\tilde{\lambda}_{n}}
           \big\|W_{\tilde{F}_{0,n}}(\tilde{F}_{n})
                  -W_{\tilde{F}_{0,n}}(\tilde{F}_{0})
           \big\|_{H} 
     \;\leq\;}  \nonumber \\
     & & \qquad\stackrel{(\ref{lemma-prep-hadamard-p3})}{\leq}\;
           \lim_{n\rightarrow\infty}
               \frac{1}{\tilde{\lambda}_{n}}
               \big\|\tilde{F}_{n}-\tilde{F}_{0}\big\|_{\infty}
       \;=\;0  \qquad \qquad\qquad\qquad\qquad\qquad
  \end{eqnarray}
  Therefore, (\ref{lemma-prep-hadamard-p5}) follows from
  Lemma \ref{lemma-continuous-partial-derivative}.
  
  In order to prove (\ref{lemma-prep-hadamard-p6}), 
  define $M:=\sup_{n\in\N\cup\{0\}}\iota(F_{n})\big(\XY\big)<\infty$
  (see Lemma \ref{lemma-convergence-in-BS-weak-convergence}) and
  note that, according to \cite[Corollary 4.31]{steinwart2008},
  $$\F_{1}\;=\;
    \big\{f\in H\;\big|\;\;\|f\|_{H}\leq 1\big\}
    \;\subset\;\mathcal{C}(\X)
  $$
  can be identified with 
  a relatively compact subset of $\mathcal{C}(\X)$
  (with respect to the norm-topology of $\mathcal{C}(\X)$)\,.
  Hence, for every $\varepsilon>0$,
  there is an $m_{\varepsilon}\in\N$ and
  functions 
  $\,f_{1},\dots,f_{m_{\varepsilon}}\,\in\,\mathcal{C}(\X)\,$
  such that 
  \begin{eqnarray}
   \label{lemma-prep-hadamard-p7}
    \|f_{j}\|_{\infty}\;\leq\;
    \sup_{f\in \F_{1}}\|f\|_{\infty}
    \;\stackrel{(\ref{hilbert-norm-uniform-norm})}{\leq}\;
    \|k\|_{\infty}
    \qquad\forall\,j\in\{1,\dots,m_{\varepsilon}\}, \\
   \label{lemma-prep-hadamard-p8}
    \min_{j\in\{1,\dots,m_{\varepsilon}\}}
    \big\|f-f_{j}\big\|_{\infty}\;<\;\varepsilon
    \qquad\forall\,f\in\F_{1}
    \;.
  \end{eqnarray}
  Define $a:=\|\SVMifz\|_{\infty}$\,.
  Fix any $f\in \F_{1}$ and take $j_{0}\in\{1,\dots,m_{\varepsilon}\}$
  such that $\|f-f_{j_{0}}\|_{\infty}<\varepsilon$\,. 
  Then,
  \begin{eqnarray*}
    \lefteqn{
       \left\|
          \int \Lss_{\SVMifz}f\Phi \,d\big[\iota(F_{n})\big] -
          \int \Lss_{\SVMifz}f\Phi \,d\big[\iota(F_{0})\big]
       \right\|_{H}\;=}\\
    &=&\left\|
          \int \Lss_{\SVMifz}(f-f_{j_{0}})\Phi \,d\big[\iota(F_{n})\big] -
          \int \Lss_{\SVMifz}(f-f_{j_{0}})\Phi \,d\big[\iota(F_{0})\big]
        \,-\right. \\
       & &\quad -\; \left.
          \int \Lss_{\SVMifz}f_{j_{0}}\Phi \,d\big[\iota(F_{0})\big] +
          \int \Lss_{\SVMifz}f_{j_{0}}\Phi \,d\big[\iota(F_{n})\big]
       \right\|_{H} \;\leq\\
    &\leq&
          \int \!\big\|\Lss_{\SVMifz}(f-f_{j_{0}})\Phi\big\|_{H} 
          \,d\big[\iota(F_{n})\big] 
          +
          \int \big\|\Lss_{\SVMifz}(f-f_{j_{0}})\Phi\big\|_{H} 
          \,d\big[\iota(F_{0})\big] 
           \\
       & &\quad +\; 
          \left\|
             \int \Lss_{\SVMifz}f_{j_{0}}\Phi \,d\big[\iota(F_{0})\big] -
             \int \Lss_{\SVMifz}f_{j_{0}}\Phi \,d\big[\iota(F_{n})\big]
          \right\|_{H} \;\leq\\
    &\stackrel{(\ref{theorem-sqrt-n-consistency-1},
                \ref{bounded-feature-map})}{\leq}\!\!&
        2b_{a}^{\prime\prime}\|k\|_{\infty}M\varepsilon\,+\,
          \left\|
             \int \Lss_{\SVMifz}f_{j_{0}}\Phi \,d\big[\iota(F_{0})\big] -
             \int \Lss_{\SVMifz}f_{j_{0}}\Phi \,d\big[\iota(F_{n})\big]
          \right\|_{H}        
  \end{eqnarray*}
  Hence, % XXX
  \begin{eqnarray}\label{lemma-prep-hadamard-p9}
    \lefteqn{\sup_{f\in \F_{1}}
       \left\|
          \int \Lss_{\SVMifz}f\Phi \,d\big[\iota(F_{n})\big] -
          \int \Lss_{\SVMifz}f\Phi \,d\big[\iota(F_{0})\big]
       \right\|_{H}\;\leq}\\
    &\leq& 
        2b_{a}^{\prime\prime}\|k\|_{\infty}M\varepsilon\,+\!\!\!
        \max_{j\in\{1,\dots,m_{\varepsilon}\}}\!
          \left\|\!
             \int \!\! \Lss_{\SVMifz}f_{j}\Phi \,d\big[\iota(F_{0})\big] 
             \! - \!\!
             \!\int \!\! \Lss_{\SVMifz}f_{j}\Phi \,d\big[\iota(F_{n})\big]\!
          \right\|_{H} \!\! .  \nonumber     
  \end{eqnarray}
  Convergence of $(F_{n})_{n\in\N}$ in $\ell_{\infty}(\G)$ implies
  weak convergence (Lemma \ref{lemma-convergence-in-BS-weak-convergence})
  and, therefore, tightness of the
  sequence of finite measures $(\iota(F_{n}))_{n\in\N}$;
  see e.g.\ \cite[Theorem 30.8]{bauer2001}.
  Hence, there is a compact set $\Z_{\varepsilon}\subset\XY$
  such that, for its complement $\complement\Z_{\varepsilon}$, we have
  $\,\sup_{n\in\N_{0}}
   \iota(F_{n})\big(\complement\Z_{\varepsilon}\big)<\varepsilon
  $. 
  Then,
  \begin{eqnarray*}
    \lefteqn{ 
      \max_{j\in\{1,\dots,m_{\varepsilon}\}}
          \left\|
             \int \Lss_{\SVMifz}f_{j}\Phi \,d\big[\iota(F_{0})\big] -
             \int \Lss_{\SVMifz}f_{j}\Phi \,d\big[\iota(F_{n})\big]
          \right\|_{H} \;\leq}\\
    &\leq&\!\!\!\!\max_{j\in\{1,\dots,m_{\varepsilon}\}}\!
          \left\|
             \int_{\Z_{\varepsilon}} \Lss_{\SVMifz}f_{j}\Phi 
             \,d\big[\iota(F_{0})\big] -
             \int_{\Z_{\varepsilon}} \Lss_{\SVMifz}f_{j}\Phi 
             \,d\big[\iota(F_{n})\big]
          \right\|_{H}\;+  \\
       & &\qquad\;\;\;\; + \,  
          \left\|
             \int_{\complement\Z_{\varepsilon}} \Lss_{\SVMifz}f_{j}\Phi 
             \,d\big[\iota(F_{0})\big] 
          \right\|_{H}\!\!+\,
          \left\|
             \int_{\complement\Z_{\varepsilon}} \Lss_{\SVMifz}f_{j}\Phi 
             \,d\big[\iota(F_{n})\big]
          \right\|_{H} \\
    &\stackrel{(\ref{theorem-sqrt-n-consistency-1},
              \ref{lemma-prep-hadamard-p7})}{\leq}&\!\!\!\!
		  \max_{j\in\{1,\dots,m_{\varepsilon}\}}\!
          \left\|
             \int_{\Z_{\varepsilon}}\!\!\!\! \Lss_{\SVMifz}f_{j}\Phi 
             \,d\big[\iota(F_{0})\big]\! -\!\!
             \int_{\Z_{\varepsilon}}\!\!\! \Lss_{\SVMifz}f_{j}\Phi 
             \,d\big[\iota(F_{n})\big]
          \right\|_{H}\!\!\!+   
             2b_{a}^{\prime\prime}\|k\|_{\infty}^{2}\varepsilon.
  \end{eqnarray*}
  According to \cite[p.\ III.40]{bourbaki2004integration}, 
  weak convergence of the
  sequence of finite (positive) measures $(\iota(F_{n}))_{n\in\N}$
  implies
  $$\lim_{n\rightarrow\infty}
        \left\|
             \int_{\Z_{\varepsilon}} \Lss_{\SVMifz}f_{j}\Phi 
             \,d\big[\iota(F_{0})\big] -
             \int_{\Z_{\varepsilon}} \Lss_{\SVMifz}f_{j}\Phi 
             \,d\big[\iota(F_{n})\big]
        \right\|_{H} 
    \;=\;0 \quad
  $$ 
  for every $j\in\{1,\dots,m_{\varepsilon}\}$\,.
  (Since $H$ is a separable Banach space,
  Pettis integrals and Bochner-integrals coincide; see e.g.\ 
  \cite[p.\ 194f]{dudley2002}.) 
  As $\varepsilon>0$ can be arbitrarily small, 
  (\ref{lemma-prep-hadamard-p6}) follows from
  (\ref{lemma-prep-hadamard-p9}) and the above calculation.

 \textit{Part 3:}
  In this part of the proof, it will be shown that
  \begin{eqnarray}\label{lemma-prep-hadamard-p11}
    \lim_{n\rightarrow\infty}
      \big\|W_{F_{n}}(G_{0})-W_{F_{0}}(G_{0})\big\|_{H}
    \;\;=\;\;0\;.
  \end{eqnarray}
  For every $m\in\N$\,, we have $G_{m}\in\text{lin}(B_{S})$ and, 
  therefore,
  $$W_{F_{n}}(G_{m})=
    \int \Ls_{\SVMifn}\Phi \,d\big[\iota(G_{m})\big]
  $$ for every $n\in\N_{0}$.
  Hence, it follows from 
  (\ref{lemma-prep-hadamard-p10}) and
  Lemma \ref{lemma-derivatives}\,b) that
  \begin{eqnarray}\label{lemma-prep-hadamard-p12}
    \lim_{n\rightarrow\infty}
      \big\|W_{F_{n}}(G_{m})-W_{F_{0}}(G_{m})\big\|_{H}
    \;\;=\;\;0
    \qquad\forall\,m\in\N\;.
  \end{eqnarray}
  Furthermore, we have 
  \begin{eqnarray}\label{lemma-prep-hadamard-p13}
    \lim_{m\rightarrow\infty}\sup_{n\in\N_{0}}\!\!
      \big\|W_{F_{n}}(G_{m})-W_{F_{n}}(G_{0})\big\|_{H}
      \!\stackrel{(\ref{lemma-prep-hadamard-p3})}{\leq}\!
          \lim_{m\rightarrow\infty}\!\big\|G_{m}-G_{0}\big\|_{\infty}
    \!\!=\,0
  \end{eqnarray}
  According to \cite[I.7.6]{dunford1958}, 
  (\ref{lemma-prep-hadamard-p12}) and (\ref{lemma-prep-hadamard-p13})
  imply
  \begin{eqnarray*}
    \lefteqn{
      \lim_{n\rightarrow\infty}
        \big\|W_{F_{n}}(G_{0})-W_{F_{0}}(G_{0})\big\|_{H}
      \;=\;\lim_{n\rightarrow\infty}\lim_{m\rightarrow\infty}
           \big\|W_{F_{n}}(G_{m})-W_{F_{0}}(G_{m})\big\|_{H}=
    }\\
    &&=\;\lim_{m\rightarrow\infty}\lim_{n\rightarrow\infty}
           \big\|W_{F_{n}}(G_{m})-W_{F_{0}}(G_{m})\big\|_{H}
       \;=\;0\;. \qquad\qquad\qquad\qquad\quad\;
  \end{eqnarray*}

 \textit{Part 4:} By use of the
  previous parts, we complete the proof by
  proving (\ref{lemma-prep-hadamard-1}): 
  \begin{eqnarray*}
    \lefteqn{
      \lim_{n\rightarrow\infty}
      \big\|S_{F_{n}}^{\prime}(G_{n})-\!S_{F_{0}}^{\prime}(G_{0})\big\|_{H}
      =
     \lim_{n\rightarrow\infty}\!
           \big\|K_{F_{n}}^{-1}\big(W_{F_{n}}(G_{n})\big)
                 \!-\!K_{F_{0}}^{-1}\big(W_{F_{0}}(G_{0})\big)
           \big\|_{H}}\\
    &\leq&\lim_{n\rightarrow\infty}\;
           \big\|K_{F_{n}}^{-1}\big(W_{F_{n}}(G_{n})\big)
                 -K_{F_{0}}^{-1}\big(W_{F_{n}}(G_{n})\big)
           \big\|_{H}+ \\
        &&\qquad\quad +\,\,
           \big\|K_{F_{0}}^{-1}\big(W_{F_{n}}(G_{n})\big)
                 -K_{F_{0}}^{-1}\big(W_{F_{n}}(G_{0})\big)
           \big\|_{H}+ \\
        &&\qquad\quad +\,\,
           \big\|K_{F_{0}}^{-1}\big(W_{F_{n}}(G_{0})\big)
                 -K_{F_{0}}^{-1}\big(W_{F_{0}}(G_{0})\big)
           \big\|_{H}\;= \\ 
    &\stackrel{(\ref{lemma-prep-hadamard-p11})}{\leq}&
          \lim_{n\rightarrow\infty}\;
           \big\|K_{F_{n}}^{-1}\!-\!K_{F_{0}}^{-1}\big\|\cdot\!
           \big\|W_{F_{n}}(G_{n})\big\|_{H}+\!
           \big\|K_{F_{0}}^{-1}\big\|\cdot\!       
           \big\|W_{F_{n}}(G_{n})\!-\!W_{F_{n}}(G_{0})
           \big\|_{H} \\       
    &\stackrel{(\ref{lemma-prep-hadamard-p3})}{\leq}&
          \lim_{n\rightarrow\infty}\;\;
           \big\|K_{F_{n}}^{-1}-K_{F_{0}}^{-1}\big\|\cdot
           \big\|G_{n}\big\|_{\infty} \,+\, 
           \big\|K_{F_{0}}^{-1}\big\|\cdot       
           \big\|G_{n}-G_{0}\big\|_{\infty}\;=\;0 \qquad       
  \end{eqnarray*}
\end{proof}

\begin{theorem}\label{theorem-hadamard}
  For every $F_{0}\in B_{S}$ which fulfills
  (\ref{lemma-prep-hadamard-0}),
  the map 
  $$S\;:\;\;B_{S}\;\rightarrow\;H\,,\qquad
    F\;\mapsto\;f_{\iota(F)}
  $$
  is Hadamard-differentiable in $F_{0}$ tangentially to
  the closed linear span
  $B_{0}=\textup{cl}\big(\textup{lin}(B_{S})\big)$\,. 
  The derivative in $F_{0}$ is a continuous linear
  operator $\,S_{F_{0}}^{\prime}:B_{0}\rightarrow H\,$ such that
  \begin{eqnarray}\label{lemma-hadamard-1}
    S_{F_{0}}^{\prime}(G)\;=\;
        -K_{F_0}^{-1}\Big(\E_{\iota(G)}\big(\Ls_{\SVMifz}\Phi\big)\Big)
    \qquad\quad\forall\,G\in\textup{lin}(B_{S})\;.
  \end{eqnarray}
\end{theorem}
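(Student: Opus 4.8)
The plan is to deduce Hadamard-differentiability from the two ingredients already in place: the G\^ateaux-differentiability of $S$ with the explicit derivative formula (\ref{lemma-gateaux-2})/(\ref{lemma-hadamard-1}) from Proposition \ref{prop-gateaux}, and the sequential continuity property (\ref{lemma-prep-hadamard-1}) of Lemma \ref{lemma-prep-hadamard}. This is exactly the classical route ``G\^ateaux plus continuous derivative implies Fr\'echet-differentiability'', slightly weakened to the tangential-Hadamard setting. Since Lemma \ref{lemma-prep-hadamard} (Part 1) already provides the continuous linear extension $S_{F_{0}}^{\prime}:B_{0}\rightarrow H$ and (\ref{lemma-hadamard-1}) gives its form, the only thing left to prove is convergence of difference quotients. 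Concretely, I would fix $G\in B_{0}$ together with arbitrary sequences $t_{n}\in\R\setminus\{0\}$ with $t_{n}\rightarrow 0$ and $G_{n}\rightarrow G$ in $\ell_{\infty}(\G)$ such that $F_{0}+t_{n}G_{n}\in B_{S}$, and show that $t_{n}^{-1}\big(S(F_{0}+t_{n}G_{n})-S(F_{0})\big)\rightarrow S_{F_{0}}^{\prime}(G)$ in $H$. Note that $t_{n}G_{n}=(F_{0}+t_{n}G_{n})-F_{0}\in\textup{lin}(B_{S})$, so each $G_{n}$ lies in $\textup{lin}(B_{S})$ and $G\in B_{0}$, consistently with the tangent space.

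First I would verify that the whole segment between $F_{0}$ and $F_{0}+t_{n}G_{n}$ stays inside $B_{S}$, which is what legitimizes a mean value argument. For $t_{n}>0$ (the case $t_{n}<0$ being symmetric) one has, for $u\in[0,t_{n}]$, the identity $\iota(F_{0}+uG_{n})=(1-u/t_{n})\,\iota(F_{0})+(u/t_{n})\,\iota(F_{0}+t_{n}G_{n})$, exhibiting it as a convex combination of the two nonzero finite positive measures $\iota(F_{0})$ and $\iota(F_{0}+t_{n}G_{n})$; hence it is again a nonzero finite measure meeting the integrability requirements of $B_{S}$, so $F_{0}+uG_{n}\in B_{S}$. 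For interior $u\in(0,t_{n})$ one may take $\rho=\min\{u,t_{n}-u\}>0$ and apply Proposition \ref{prop-gateaux} at $F_{0}+uG_{n}$ in direction $G_{n}$; thus $\psi(u):=S(F_{0}+uG_{n})$ is differentiable on $(0,t_{n})$ with $\psi^{\prime}(u)=S_{F_{0}+uG_{n}}^{\prime}(G_{n})$. Moreover $\|\psi^{\prime}(u)\|_{H}\leq\|K_{F_{0}+uG_{n}}^{-1}\|\cdot\|G_{n}\|_{\infty}$ by (\ref{lemma-prep-hadamard-p3}), which is bounded on the interval since $u\mapsto K_{F_{0}+uG_{n}}^{-1}$ is continuous (Lemma \ref{lemma-KF-invertible} and (\ref{lemma-prep-hadamard-p4})); so $\psi$ is Lipschitz, and together with continuity of $S$ at $F_{0}$ (the stability estimate (\ref{lemma-prep-hadamard-p10})) it extends continuously to $[0,t_{n}]$. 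The standard Banach-space mean value inequality applied to $u\mapsto\psi(u)-u\,S_{F_{0}}^{\prime}(G_{n})$ then gives
\[
  \Big\|\tfrac{1}{t_{n}}\big(S(F_{0}+t_{n}G_{n})-S(F_{0})\big)-S_{F_{0}}^{\prime}(G_{n})\Big\|_{H}
  \;\leq\;
  \sup_{u\in[0,t_{n}]}\big\|S_{F_{0}+uG_{n}}^{\prime}(G_{n})-S_{F_{0}}^{\prime}(G_{n})\big\|_{H}.
\]

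To finish, I would combine this with the triangle inequality and the continuity of the linear operator $S_{F_{0}}^{\prime}:B_{0}\rightarrow H$, which yields $\|S_{F_{0}}^{\prime}(G_{n})-S_{F_{0}}^{\prime}(G)\|_{H}\rightarrow 0$; it then remains only to drive the supremum to $0$. For each $n$ I would pick $u_{n}\in[0,t_{n}]$ with $\|S_{F_{0}+u_{n}G_{n}}^{\prime}(G_{n})-S_{F_{0}}^{\prime}(G_{n})\|_{H}\geq\sup_{u\in[0,t_{n}]}\|S_{F_{0}+uG_{n}}^{\prime}(G_{n})-S_{F_{0}}^{\prime}(G_{n})\|_{H}-1/n$ and set $\tilde{F}_{n}:=F_{0}+u_{n}G_{n}$. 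Since $u_{n}\leq|t_{n}|\rightarrow 0$ and $\sup_{n}\|G_{n}\|_{\infty}<\infty$, we have $\|\tilde{F}_{n}-F_{0}\|_{\infty}\rightarrow 0$, so the hypotheses of Lemma \ref{lemma-prep-hadamard} are met with $F_{n}=\tilde{F}_{n}$ and limit $G$; hence $\|S_{\tilde{F}_{n}}^{\prime}(G_{n})-S_{F_{0}}^{\prime}(G)\|_{H}\rightarrow 0$, and with $\|S_{F_{0}}^{\prime}(G)-S_{F_{0}}^{\prime}(G_{n})\|_{H}\rightarrow 0$ this forces the supremum to $0$. Putting the pieces together proves $t_{n}^{-1}(S(F_{0}+t_{n}G_{n})-S(F_{0}))\rightarrow S_{F_{0}}^{\prime}(G)$, i.e.\ Hadamard-differentiability tangentially to $B_{0}$.

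The hard part is not the formal mean value computation but the passage from the purely \emph{sequential} continuity (\ref{lemma-prep-hadamard-1}), which is guaranteed only near the base point $F_{0}$ satisfying (\ref{lemma-prep-hadamard-0}), to \emph{uniform} control of $S_{F_{0}+uG_{n}}^{\prime}(G_{n})-S_{F_{0}}^{\prime}(G_{n})$ across the shrinking interval $u\in[0,t_{n}]$; the near-maximizer device $\tilde{F}_{n}=F_{0}+u_{n}G_{n}\rightarrow F_{0}$ is what collapses this uniform bound into a single invocation of Lemma \ref{lemma-prep-hadamard}. A secondary subtlety, easily overlooked, is that the mean value inequality is available only because the \emph{entire} segment lies in $B_{S}$ and $S$ is continuous along it; this is why the convex-combination argument for the measures $\iota(F_{0}+uG_{n})$ and the Lipschitz bound from $\sup_{u}\|K_{F_{0}+uG_{n}}^{-1}\|<\infty$ are both needed.
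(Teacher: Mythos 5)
Your proposal is correct and follows essentially the same route as the paper's proof: the segment $\{F_0+ut_nG_n\}$ is placed in $B_S$ via the convex combination of the measures $\iota(F_0)$ and $\iota(F_n)$, a mean-value argument reduces the difference quotient error to derivative differences at intermediate points of the segment, and a single invocation of the joint sequential continuity (\ref{lemma-prep-hadamard-1}) of Lemma \ref{lemma-prep-hadamard} at a sequence of segment points converging to $F_0$ finishes the proof. The only deviation is technical: the paper applies the scalar mean value theorem to $s\mapsto\la \tilde{h}_n,S(F_0+st_nG_n)\ra$ with $\tilde{h}_n$ the normalized error vector, which produces a single intermediate point $\tilde{s}_n$ directly and avoids the supremum, whereas you use the Banach-space mean value inequality followed by a near-maximizer selection -- both are valid executions of the same idea.
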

\begin{proof}\item
  Let $(G_{n})_{n\in\N}\subset\ell_{\infty}(\G)$
  and $(t_{n})_{n\in\N}\subset\R\setminus\{0\}$ be sequences
  such that $\,\lim_{n\rightarrow\infty}\|G_{n}-G_{0}\|_{\infty}=0\,$
  for some $G_{0}\in\ell_{\infty}(\G)$, such that
  $\,t_{n}\searrow 0$, and such that
  $\,F_{n}:=F_{0}+t_{n}G_{n}\,\in\,B_{S}\,$
  for every $n\in\N$\,.   
  Then, $\lim_{n\rightarrow\infty}\|F_{n}-F_{0}\|_{\infty}=0$ and
  $G_{n}\in\textup{lin}(B_{s})$ for every $n\in\N$.
  According to Lemma \ref{lemma-prep-hadamard},
  there is a $n_{0}\in\N$ such that, for every 
  $F\in\{F_{n}|n\in\N_{\geq n_{0}}\}\cup\{F_{0}\}$, 
  there is a continuous linear operator
  $S_{F}^{\prime}:B_{0}\rightarrow H
  $
  which fulfills (\ref{lemma-hadamard-1}).
  We have to show
  \begin{eqnarray}\label{theorem-hadamard-p1}
    \lim_{n\rightarrow\infty}
    \left\|\frac{S(F_{0}+t_{n}G_{n})-S(F_{0})}{t_{n}}
           -S_{F_{0}}^{\prime}(G_{0})
    \right\|_{H}
    \;=\;0\;.
  \end{eqnarray} 
  Note that the assumptions imply $G_{0}\in B_{0}$\,.
  Define
  \begin{eqnarray}\label{theorem-hadamard-p1001}
    h_{n}\;:=\;S(F_{0}+t_{n}G_{n})-S(F_{0})-t_{n}S_{F_{0}}^{\prime}(G_{0})
    \qquad\forall\,n\in\N\,.
  \end{eqnarray}
  That is, for every $f\in H$\,,
  \begin{eqnarray}\label{theorem-hadamard-p2}
    \la f,h_{n}\ra\;=\;
    \la f,S(F_{0}+t_{n}G_{n})-S(F_{0}) \ra 
    - \la f,t_{n}S_{F_{0}}^{\prime}(G_{0}) \ra
    \;.
  \end{eqnarray} 
  In order to prove for every $n\in\N$ that the function
  $$[0,1]\;\rightarrow\;H\,,\qquad
    s\;\mapsto\;S(F_{0}+st_{n}G_{n})
  $$
  is well-defined, we have to show that
  $\,F_{0}+st_{n}G_{n}\,\in\,B_{S}\,$ for every
  $s\in[0,1]$\,. It follows from $F_{n}\in B_{S}$ that
  $G_{n}\in\textup{lin}(B_{S})$\,. Therefore,
  there is a finite signed measure $\mu_{n,s}$ such that
  $\mu_{n,s}=\iota(F_{0}+st_{n}G_{n})$ and
  $\,F_{0}+st_{n}G_{n}\,\in\,\textup{lin}(B_{S})\,$.
  Take any $A\in\mathfrak{B}(\XY)$\,. Then, it follows from
  $\iota(F_{0})(A)\geq 0$\,, $\iota(F_{n})(A)\geq 0$ and
  $s\in[0,1]$ that  
  $\mu_{n,s}(A)=\iota(F_{0}+st_{n}G_{n})(A)\geq0$\,.
  That is, $\mu_{n,s}=\iota(F_{0}+st_{n}G_{n})$ is a
  finite measure.   
  Furthermore, it follows from $F_{0}\not=0$,
  $F_{n}\not=0$ and $s\in[0,1]$ that
  $\mu_{n,s}\not=0$\,.
  According to the definitions, this shows that
  $\,F_{0}+st_{n}G_{n}\,\in\,B_{S}\,$.
  
  Fix any $n\in\N$.
  The function
  $s\mapsto  S(F_{0}+st_{n}G_{n})
  $
  is continuous on $[0,1]$
  according to (\ref{lemma-prep-hadamard-p10}) 
  and Frech\'{e}t-differentiable on $(0,1)$
  according to Proposition \ref{prop-gateaux}; the
  derivative in $s\in(0,1)$ is given by
  $S_{F_{0}+st_{n}G_{n}}^{\prime}(t_{n}G_{n})$\,.
  Since the map $h\mapsto\la f,h\ra$ is
  Frech\'{e}t-differentiable  
  for every $f\in H$, 
  this implies that
  $$(0,1)\;\rightarrow\;\R\,,\qquad
    s\;\mapsto\;\la f,S(F_{0}+st_{n}G_{n}) \ra
  $$
  is differentiable for every $f\in H$; the
  derivative in $s\in(0,1)$ is given by
  $\la f,S_{F_{0}+st_{n}G_{n}}^{\prime}(t_{n}G_{n}) \ra$\,. 
  Define $\tilde{h}_{n}=h_{n}/\|h_{n}\|_{H}$\,.
  According to the 
  elementary mean value theorem,
  there is an $\tilde{s}_{n}\in(0,1)$ such that
  \begin{eqnarray*}
    \big\la \tilde{h}_{n},
            S_{F_{0}+\tilde{s}_{n}t_{n}G_{n}}^{\prime}(t_{n}G_{n}) 
    \big\ra
    &=& \big\la \tilde{h}_{n},S(F_{0}+t_{n}G_{n}) \big\ra - 
        \big\la \tilde{h}_{n},S(F_{0}) \big\ra \;=\\
    &=& \big\la \tilde{h}_{n},S(F_{0}+t_{n}G_{n})- S(F_{0})\big\ra
  \end{eqnarray*}
  By use of the definition of $h_{n}$\,, this implies
  $$\big\la \tilde{h}_{n},h_{n} \big\ra \;=\;
    \big\la \tilde{h}_{n}\,,\,
            S_{F_{0}+\tilde{s}_{n}t_{n}G_{n}}^{\prime}(t_{n}G_{n})
            -t_{n}S_{F_{0}}^{\prime}(G_{0})
    \big\ra
  $$
  and, by use of the definition of $\tilde{h}_{n}$\,,
  the latter equality and the Cauchy-Schwarz inequality
  imply
  \begin{eqnarray}\label{theorem-hadamard-p3}
    \big\|h_{n}\|_{H}\;\leq\;
    \big\|S_{F_{0}+\tilde{s}_{n}t_{n}G_{n}}^{\prime}(t_{n}G_{n})
            -t_{n}S_{F_{0}}^{\prime}(G_{0})
    \big\|_{H}\;.
  \end{eqnarray}
  Then, (\ref{theorem-hadamard-p1}) follows from
  \begin{eqnarray*}
    \lefteqn{
       \left\|\frac{S(F_{0}+t_{n}G_{n})-S(F_{0})}{t_{n}}
              -S_{F_{0}}^{\prime}(G_{0})
       \right\|_{H}\;=}\\
    &\!=&\frac{\big\|S(F_{0}+t_{n}G_{n})
                   -S(F_{0})-t_{n}S_{F_{0}}^{\prime}(G_{0})
             \big\| %
            }{t_{n}}
       \;\,\,\stackrel{(\ref{theorem-hadamard-p1001})}{=}\;\,\,
       \frac{1}{t_{n}}\big\|h_{n}\big\|_{H}\;\leq\\
    &\!\stackrel{(\ref{theorem-hadamard-p3})}{\leq}&
       \frac{1}{t_{n}}
       \big\|S_{F_{0}+\tilde{s}_{n}t_{n}G_{n}}^{\prime}(t_{n}G_{n})
               \!-t_{n}S_{F_{0}}^{\prime}(G_{0})
       \big\|_{H}\!=\!
          \big\|S_{F_{0}+\tilde{s}_{n}t_{n}G_{n}}^{\prime}(G_{n})
                  -S_{F_{0}}^{\prime}(G_{0})
          \big\|_{H}
  \end{eqnarray*} 
  because the last expression converges to 0 according
  to Lemma \ref{lemma-prep-hadamard}.
\end{proof}

\subsection{Donsker-Classes and Application of the Delta-Method}
  \label{subsec-donsker-delta}

It is well-known that
$$\sqrt{n}\big(\mathds{F}_{n}-F\big)
    \;\leadsto\;
    \mathds{G}_{1}
    \qquad\text{in}\quad\ell_{\infty}(\G_{1})
$$
where $\mathds{F}_{n}$ denotes the empirical process,
$F$ denotes the distribution function of $P$,
$\mathds{G}_{1}$ is a Gaussian process, and 
$\mathcal{G}_{1}$
is the set of all indicator functions. 
However, as already noted in Subsection \ref{subsec-preparation-for-proof},
the set of indicator functions 
had to be enlarged to a set $\G\supset\G_{1}$ in order to
ensure Hadamard-differentiability of the SVM-functional
$$S\;:\;\;B_{S}\;\longrightarrow\;H
$$
in a neighborhood of $F\in B_{S}\subset\ell_{\infty}(\G)$. 
Therefore, it still has to be proven that
weak convergence not only holds in $\ell_{\infty}(\G_{1})$
but also in $\ell_{\infty}(\G)$.
This is done in the following Lemma \ref{lemma-donsker}.
After that, the main results can be proven by 
applications of a functional delta-method.

\begin{lemma}\label{lemma-donsker}
  For every 
  $D_{n}=\big((x_{1},y_{1}),\dots,(x_{n},y_{n})\big)\in(\XY)^{n}$, 
  let
  $\mathds{F}_{D_{n}}$ denote the element of
  $\ell_{\infty}(\G)$ which corresponds to the
  empirical measure $\mathds{P}_{D_{n}}$\,. That is,
  $\mathds{F}_{D_{n}}(g)=\int g \,d\mathds{P}_{D_{n}}=
    \frac{1}{n}\sum_{i=1}^{n}g(x_{i},y_{i})
  $  for every $g\in\G$\,.\\
  Then,
  $$\sqrt{n}\big(\mathds{F}_{\mathbf{D}_{n}}-\iota^{-1}(P)\big)
    \;\leadsto\;
    \mathds{G}
    \qquad\text{in}\quad\ell_{\infty}(\G)
  $$
  where $\mathds{G}:\Omega\rightarrow\ell_{\infty}(\G)$
  is a tight Borel-measurable Gaussian process such that
  $\mathds{G}(\omega)\in B_{0}$
  for every $\omega\in\Omega$.
\end{lemma}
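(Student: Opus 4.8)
The plan is to show that the enlarged index class $\G=\G_{1}\cup\G_{2}\cup\{b\}$ is $P$-Donsker; the asserted weak convergence then follows from the abstract Donsker theorem in $\ell_{\infty}(\G)$, and the tightness, Borel-measurability and Gaussianity of the limit $\mathds{G}$ (with covariance equal to the $P$-covariance on $\G$) are part of that theorem (see \cite{vandervaartwellner1996}). Since a finite union of $P$-Donsker classes is again $P$-Donsker, it suffices to treat the three pieces separately; a common square-$P$-integrable envelope is $1+\|k\|_{\infty}b_{a}^{\prime}+b$ with $a:=\|k\|_{\infty}c_{0}$. The class $\G_{1}$ of indicators $I_{(-\infty,z]}$ is a VC-class and hence $P$-Donsker by the classical multivariate empirical-process theorem, and the singleton $\{b\}$ is trivially $P$-Donsker because $b\in L_{2}(P)$. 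Thus the whole work concentrates on $\G_{2}$.

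For $\G_{2}$ I first fix the envelope: for $g=\Ls_{f_{0}}f$ with $\|f_{0}\|_{H}\leq c_{0}$ and $\|f\|_{H}\leq 1$, inequality (\ref{hilbert-norm-uniform-norm}) gives $\|f_{0}\|_{\infty}\leq\|k\|_{\infty}c_{0}=a$ and $\|f\|_{\infty}\leq\|k\|_{\infty}$, so (\ref{theorem-sqrt-n-consistency-1}) yields $|g|\leq\|k\|_{\infty}b_{a}^{\prime}=:G_{2}\in L_{2}(P)$. The decisive analytic input is a bound on the uniform entropy numbers of the unit ball $\F_{1}=\{f\in H:\|f\|_{H}\leq 1\}$. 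The assumption that $k$ is the restriction of an $m$-times continuously differentiable kernel with $m>d/2$ yields, by the arguments in \cite[\S\,V]{zhou2003} and \cite[\S\,6.4]{steinwart2008},
\begin{eqnarray}\label{bound-on-uniform-entropy-number}
  \sup_{Q}\,\log N\big(\varepsilon\|k\|_{\infty},\,\F_{1},\,L_{2}(Q)\big)
  \;\leq\;C\,\varepsilon^{-2p}\,,\qquad\varepsilon\in(0,1]\,,
\end{eqnarray}
with a constant $C<\infty$ and some $p\in(0,1)$; one may take $p=d/(2m)$, which is $<1$ precisely because $m>d/2$, the supremum ranging over all finitely discrete probability measures $Q$ on $\XY$. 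Since $p<1$, the entropy integral $\int_{0}^{1}\varepsilon^{-p}\,d\varepsilon$ is finite, so $\F_{1}$ (with its constant envelope $\|k\|_{\infty}$) is $P$-Donsker by the uniform-entropy Donsker theorem \cite[\S\,2.5]{vandervaartwellner1996}.

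Next I would propagate this through the structure of $\G_{2}$. The mean value theorem together with the bound $|\Lss|\leq b_{a}^{\prime\prime}$ from (\ref{theorem-sqrt-n-consistency-1}) gives the deterministic Lipschitz estimate
$$\big|\Ls_{f_{0}}(x,y)-\Ls_{\tilde{f}_{0}}(x,y)\big|
  \;\leq\;b_{a}^{\prime\prime}\,\|f_{0}-\tilde{f}_{0}\|_{\infty}
  \qquad\forall\,(x,y)\in\XY\,,$$
so that $t\mapsto\Ls(\cdot,\cdot,t)$ is Lipschitz uniformly in $(x,y)$; hence $\mathcal{A}:=\{\Ls_{f_{0}}:\|f_{0}\|_{H}\leq c_{0}\}$ is the image of the Donsker ball $c_{0}\F_{1}$ under a uniformly Lipschitz transformation and is therefore $P$-Donsker with envelope $b_{a}^{\prime}$, by the preservation theorem \cite[\S\,2.10]{vandervaartwellner1996}. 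Finally $\G_{2}=\mathcal{A}\cdot\F_{1}$ is a product of the Donsker class $\mathcal{A}$ with the \emph{uniformly bounded} Donsker class $\F_{1}$; using $|u_{1}v_{1}-u_{2}v_{2}|\leq\|k\|_{\infty}|u_{1}-u_{2}|+b_{a}^{\prime}|v_{1}-v_{2}|$ and the square-$P$-integrable product envelope $G_{2}$, the product is again $P$-Donsker by the permanence results for products in \cite[\S\,2.10]{vandervaartwellner1996}. This establishes that $\G$ is $P$-Donsker and hence the weak convergence in $\ell_{\infty}(\G)$ to a tight Borel-measurable Gaussian process $\mathds{G}$.

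It remains to locate the sample paths in $B_{0}$. For every data set, $\mathds{P}_{\mathbf{D}_{n}}$ is a finitely supported probability measure, so $b,b_{a}^{\prime}\in L_{2}(\mathds{P}_{\mathbf{D}_{n}})$ automatically and $\mathds{F}_{\mathbf{D}_{n}}=\iota^{-1}(\mathds{P}_{\mathbf{D}_{n}})\in B_{S}$; also $\iota^{-1}(P)\in B_{S}$ by the integrability assumptions on $L$ and $P$. Consequently $\sqrt{n}\big(\mathds{F}_{\mathbf{D}_{n}}-\iota^{-1}(P)\big)\in\textup{lin}(B_{S})\subset B_{0}$ for every outcome. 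As $B_{0}$ is a closed (hence Borel) linear subspace of $\ell_{\infty}(\G)$, the Portmanteau part of weak convergence for (possibly non-measurable) maps gives $\mathds{P}\big(\mathds{G}\in B_{0}\big)\geq\limsup_{n}\mathds{P}^{*}\big(\sqrt{n}(\mathds{F}_{\mathbf{D}_{n}}-\iota^{-1}(P))\in B_{0}\big)=1$; choosing the version of $\mathds{G}$ that is set to $0$ on the exceptional null set yields $\mathds{G}(\omega)\in B_{0}$ for every $\omega$. The main obstacle is the entropy bound (\ref{bound-on-uniform-entropy-number}): everything hinges on this RKHS covering estimate and on transporting it through the nonlinear map $f_{0}\mapsto\Ls_{f_{0}}$ and the product structure, which is precisely where the smoothness condition $m>d/2$ enters.
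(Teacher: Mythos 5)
Your overall strategy coincides with the paper's: split $\G=\G_{1}\cup\G_{2}\cup\{b\}$, dispose of $\G_{1}$ and $\{b\}$ by standard results, derive a uniform entropy bound of order $\varepsilon^{-d/m}$ for balls in $H$ from the $\mathcal{C}^{m}$-smoothness of $\tilde{k}$ (the paper does this explicitly via $\mathcal{C}_{1}^{m}(\tilde{\X})$ and \cite[Theorem 2.7.1]{vandervaartwellner1996} rather than citing it), propagate it to $\{\Ls_{f_{0}}\}$ through the Lipschitz estimate coming from $|\Lss|\leq b_{a}^{\prime\prime}$, treat $\G_{2}$ as a product, and obtain $\mathds{G}(\omega)\in B_{0}$ from closedness of $B_{0}$ and the Portmanteau theorem. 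The last step and the first two classes are handled exactly as in the paper.

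There is, however, a genuine gap at the product step. You conclude that $\G_{2}=\mathcal{A}\cdot\F_{1}$ is $P$-Donsker ``by the permanence results for products in \cite[\S\,2.10]{vandervaartwellner1996}'', having only established that $\mathcal{A}=\{\Ls_{f_{0}}:\|f_{0}\|_{H}\leq c_{0}\}$ is \emph{Donsker}. The product of two Donsker classes is not Donsker in general; the permanence result in \S\,2.10 (Example 2.10.8, via the Lipschitz-transformation Theorem 2.10.6) requires both factors to be \emph{uniformly bounded}, whereas $\mathcal{A}$ is only dominated by the function $b_{a}^{\prime}\in L_{2}(P)$, which need not be bounded. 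Your own inequality $|u_{1}v_{1}-u_{2}v_{2}|\leq\|k\|_{\infty}|u_{1}-u_{2}|+b_{a}^{\prime}(x,y)|v_{1}-v_{2}|$ has a coefficient depending on $(x,y)$ and unbounded, so it cannot be rescaled into the form required by Theorem 2.10.6. The paper avoids this by carrying the stronger property of a \emph{finite uniform entropy integral relative to an envelope} through both factors (for $\mathcal{A}$ this follows from the same Lipschitz estimate via covering/bracketing number comparisons, \cite[p.\ 84 and Theorem 2.7.11]{vandervaartwellner1996}) and then invoking the product rule for uniform entropy integrals \cite[Example 19.19]{vandervaart1998} together with the uniform-entropy Donsker theorem \cite[Theorem 19.4]{vandervaart1998} and the square-$P$-integrable product envelope. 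Your covering-number setup already contains everything needed for this repair, but as written the cited theorem does not apply. A second, more minor omission: the uniform-entropy Donsker theorem requires $\G_{2}$ to be ``suitably measurable''; the paper verifies this by exhibiting a countable pointwise-dense subclass of $\G_{2}$ using separability of $H$ and continuity of $\Ls$, and some such argument should be supplied.
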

\begin{proof}\item
  In other words, we have to show that 
  $\G$ is a $P$\,-\,Donsker class.
  
 \textit{Part 1:} 
   Fix any $c\in(0,\infty)$\,. 
   In \textit{Part 1} of the proof, it will be shown that
   $$\F_{c}\;:=\;
     \big\{f\in H\,\big|\,\,\|f\|_{H}\leq c\big\}
   $$
   has a finite uniform entropy integral.
   Since $\X\subset\R^d$ is bounded, there is an $r>0$ such that
   $\X\subset\big\{x\in\R^d\,\big|\,\,\|x\|_{\R^d}<r\big\}
     \,=:\,\tilde{\X}\,.
   $
   Then, $\tilde{\X}$ is a convex, bounded subset of $\R^d$ with
   non-empty interior.
   Let $\tilde{H}$ be the RKHS of
   the restriction of 
   the kernel $\tilde{k}$ on $\tilde{\X}\times\tilde{\X}$
   and define
   $$\tilde{\F}_{c}\;:=\;
     \big\{\tilde{f}\in\tilde{H}\,
     \big|\,\,\|\tilde{f}\|_{\tilde{H}}\leq c\big\}\;.
   $$
   It follows from 
   \cite[Theorem 4.2.6]{berlinet2004}
   that
   \begin{eqnarray}\label{lemma-donsker-p1}
     \F_{c}\;:=\;
     \big\{f\in H\,
     \big|\,\,f\,\,\text{is the restriction of some}\,\,
              \tilde{f}\in\tilde{H}
     \big\}\;.
   \end{eqnarray}
   According to 
   \cite[p.\ 154]{vandervaartwellner1996},
   let $\mathcal{C}_{1}^{m}(\tilde{\X})$ denote the
   set of all functions $\tilde{f}:\tilde{\X}\rightarrow\R$
   which have uniformly bounded partial derivatives up to
   order $m-1$ and whose partial derivatives of order
   $m-1$ are Lipschitz-continuous such that
   $$\big\|\tilde{f}\big\|_{1}\;:=\;
     \max_{\alpha\in\N_{0} \atop |\alpha|\leq m-1}\sup_{x\in\tilde{\X}}
        \big|\partial^{\alpha}\tilde{f}(x)\big|\,+\!
     \max_{\alpha\in\N_{0} \atop |\alpha|= m-1}
        \sup_{x,x^{\prime}\in\tilde{\X} \atop x\not= x^{\prime}}
        \frac{\big|\partial^{\alpha}\tilde{f}(x)
                   -\partial^{\alpha}\tilde{f}(x^{\prime})
              \big| %
             }{\|x-x^{\prime}\|_{\R^{d}}}
     \;\,\,\leq\,\,\;1\;.
   $$
   It follows from convexity of $\tilde{\X}$ and the
   mean value theorem 
   that
   $$\max_{\alpha\in\N_{0} \atop |\alpha|= m-1}
        \sup_{x,x^{\prime}\in\tilde{\X} \atop x\not= x^{\prime}}
        \frac{\big|\partial^{\alpha}\tilde{f}(x)
                   -\partial^{\alpha}\tilde{f}(x^{\prime})
              \big| %
             }{\|x-x^{\prime}\|_{\R^{d}}}
     \;\,\,\leq\,\,\;
     \max_{\alpha\in\N_{0} \atop |\alpha|= m}\sup_{x\in\tilde{\X}}
            \big|\partial^{\alpha}\tilde{f}(x)\big|
     \;\;.
   $$
   Hence, it follows from \cite[Corollary 4.36]{steinwart2008}
   that, for every $\tilde{f}\in\tilde{\F}_{c}$\,,
   \begin{eqnarray*}
     \big\|\tilde{f}\big\|_{1}
     &\leq& \max_{\alpha\in\N_{0} \atop |\alpha|\leq m}\sup_{x\in\tilde{\X}}
            \big|\partial^{\alpha}\tilde{f}(x)\big|
        \;\,\,\leq\,\,\; \big\|\tilde{f}\big\|_{\tilde{H}}
          \max_{\alpha\in\N_{0} \atop |\alpha|\leq m}\sup_{x\in\tilde{\X}}
            \big(\partial^{\alpha,\alpha}\tilde{k}(x,x)\big)\\
     &\leq& c\cdot 
          \max_{\alpha\in\N_{0} \atop |\alpha|\leq m}\sup_{x\in\tilde{\X}}
            \big(\partial^{\alpha,\alpha}\tilde{k}(x,x)\big)
        \;\,\,=:\,\,\;a_{c}\;\in\;(0,\infty)\;.
   \end{eqnarray*}
   That is,
   $\frac{1}{a_{c}}\tilde{\F}_{c}\subset\mathcal{C}_{1}^{m}(\tilde{\X})$
   and, therefore, it follows from 
   \cite[Theorem 2.7.1]{vandervaartwellner1996}
   that there is a constant $r\in(0,\infty)$ 
   such that, for every $\varepsilon>0$\,,
   \begin{eqnarray}\label{lemma-donsker-p4}
     \ln N\big(a_{c}\varepsilon,\tilde{\F}_{c},\|\cdot\|_{\infty}
               \big)
     \,=\,
     \ln N\big(\varepsilon,{\textstyle\frac{1}{a_{c}}}\tilde{\F}_{c},
               \|\cdot\|_{\infty}
          \big) 
     \,\leq\, r\cdot\left(\frac{1}{\varepsilon}\right)^{\frac{d}{m}}\,.
   \end{eqnarray}
   Here and in the following, $N(\cdot,\cdot,\cdot)$
   denotes the covering number and $N_{[\,]}(\cdot,\cdot,\cdot)$
   denotes the bracketing number; see e.g.\ 
   \cite[\S\,2.1.1]{vandervaartwellner1996}. 
   According to (\ref{lemma-donsker-p1}),
   $\F_{c}$ is the set of restrictions of the elements
   of $\tilde{\F}_{c}$ on $\X$\,.
   By use of this fact, it is easy to see that
   $$\ln N\big(\varepsilon,\F_{c},\|\cdot\|_{\infty}
          \big)
     \;\leq\;
     \ln N\big(\varepsilon,\tilde{\F}_{c},\|\cdot\|_{\infty}
          \big)
   $$
   for every $\varepsilon>0$\,. 
   Therefore, it follows from (\ref{lemma-donsker-p4}) that
   \begin{eqnarray}\label{lemma-donsker-p2}
     \ln N\big(\varepsilon,\F_{c},\|\cdot\|_{\infty}
          \big)
     \;\leq\;r\cdot
             a_{c}^{\frac{d}{m}}
             \left(\frac{1}{\varepsilon}\right)^{\frac{d}{m}}
     \qquad\forall\,\varepsilon>0\;.\quad
   \end{eqnarray}
   Now, choose the constant 
   $f_{c}=\|k\|_{\infty}c+1$
   as an envelope of $\F_{c}$\,. 
   Every element $f\in\F_{c}$ can be identified with
   a function $\XY\rightarrow\R$ via $f(x,y)=f(x)$\,.
   For every probability measure
   $\tilde{P}$ on $(\XY,\mathfrak{B}(\XY))$\,, we obtain
   $$\|f\|_{L_{2}(\tilde{P})}
     \;\leq\;\sup_{(x,y)\in\XY}\big|f(x,y)\big|
     \;=\;\sup_{x\in\X}\big|f(x)\big|
     \;=\;\|f\|_{\infty}\;.
   $$
   Therefore, it follows from
   (\ref{lemma-donsker-p2}) that
   \begin{eqnarray}\label{bound-on-uniform-entropy-number}
     \sup_{\tilde{P}}\ln 
         N\big(\varepsilon\|f_{c}\|_{L_{2}(\tilde{P})},
               \F_{c},\|\cdot\|_{L_{2}(\tilde{P})}
          \big)
     \;\leq\;r
             \left(\frac{a_{c}}{\|k\|_{\infty}c+1}\right)^{\frac{d}{m}}
             \left(\frac{1}{\varepsilon}\right)^{\frac{d}{m}}
     \;\;
   \end{eqnarray}
   where the supremum is taken over all 
   probability measures $\tilde{P}$ on $(\XY,\mathfrak{B}(\XY))$\,.
   Since $m>\frac{d}{2}$ by assumption, the function class 
   $\F_{c}$ has a finite uniform entropy integral.
   That is,
   $$\int_{(0,1)}
     \sqrt{\sup_{\tilde{P}}\ln 
               N\big(\varepsilon\|f_{c}\|_{L_{2}(\tilde{P})},\F_{c},
                     \|\cdot\|_{L_{2}(\tilde{P})}
                \big)}
     \,\lambda(d\varepsilon)
     \;<\;\infty\;.
   $$

 \textit{Part 2:} Now, it will be shown that
   $$\G^{\prime}\;:=\;
     \Big\{\Ls_{f}:(x,y)\mapsto\Ls(x,y,f(x))\,\,\Big|\; f\in\F_{c_{0}}\Big\}
   $$
   also has a finite uniform entropy integral.   
   Since   
   $$\sup_{x\in\X}|f(x)|
     \;\stackrel{(\ref{hilbert-norm-uniform-norm})}{\leq}\;
     \|k\|_{\infty}c_{0}\;=:\;a
     \qquad\forall\,f\in\F_{c_{0}}\,,
   $$
   the assumptions imply that
   $g^{\prime}:=b_{a}^{\prime\prime}+b_{a}^{\prime}$
   is an envelope function of $\G^{\prime}$
   such that $0\leq b_{a}^{\prime\prime}\leq g^{\prime}$
   and, for every $(x,y)\in\XY$
   and every $f_{1},f_{2}\,\in\,\F_{c_{0}}$\,,
   \begin{eqnarray}\label{lemma-donsker-p201}
     \big|\Ls_{f_{1}}(x,y)\!-\!\Ls_{f_{2}}(x,y)\big|
     \stackrel{(\ast)}{\leq}
     b_{a}^{\prime\prime}\big|f_{1}(x)-f_{2}(x)\big|
     \leq g^{\prime}(x,y)\big\|f_{1}\!-\!f_{2}\big\|_{\infty}
   \end{eqnarray}
   where $(\ast)$ follows from the assumptions on $\Lss$ and
   the elementary mean value theorem.
   For every probability measure $\tilde{P}$ on $(\XY,\mathfrak{B}(\XY))$
   such that $\,0\,<\,\int(g^{\prime})^2 \,d\tilde{P}\,<\,\infty\,$\,,
   it follows from (\ref{lemma-donsker-p201}) and
   \cite[p.\ 84 and Theorem 2.7.11]{vandervaartwellner1996}   
   that, for every $\varepsilon>0$,
   \begin{eqnarray*}
     \lefteqn{
        \ln N\big(\varepsilon\|g^{\prime}\|_{L_{2}(\tilde{P})},
                         \G^{\prime},\|\cdot\|_{L_{2}(\tilde{P})}
                    \big)
        \;\leq\;
        \ln N_{[\,]}\big(2\varepsilon\|g^{\prime}\|_{L_{2}(\tilde{P})},
                         \G^{\prime},\|\cdot\|_{L_{2}(\tilde{P})}
                    \big)
        \;\leq } \\
     &\leq& \ln N\big(\varepsilon,\F_{c_{0}},\|\cdot\|_{\infty}
                      \big)
        \;\stackrel{(\ref{lemma-donsker-p2})}{\leq}\;
        r\cdot
             a_{c_{0}}^{\frac{d}{m}}
             \left(\frac{1}{\varepsilon}\right)^{\frac{d}{m}}\;.
             \qquad\qquad\qquad\qquad\qquad
   \end{eqnarray*}
   Hence, the assumption $m>\!\frac{d}{2}$ implies 
   that $\G^{\prime}$ has a finite uniform entropy integral.
   
 \textit{Part 3:} Now, it will be shown that
   $\G$ is a $P$\,-\,Donsker class.
   Trivially, $\{b\}$ is a $P$\,-\,Donsker class because
   $b\in L_{2}(P)$ by assumption.
   From 
   \cite[Example 2.5.4]{vandervaartwellner1996}
   it follows that $\G_{1}$ is $P$\,-\,Donsker.
   Note that $\,\G_{2}\,=\,\G^{\prime}\cdot\F_{c}\,$ for
   $c=1$\,. According to Part 1, the class
   $\F_{c}$ has a finite uniform entropy integral
   relative to the (constant) envelope $f_{c}$ and,
   according to Part 2, the class $\G^{\prime}$
   has a finite uniform entropy integral relative
   to the envelope $g^{\prime}$\,. Therefore, it follows from
   \cite[Example 19.19]{vandervaart1998}
   that $\,\G_{2}\,=\,\G^{\prime}\cdot\F_{c}\,$ has a
   finite uniform entropy integral relative
   to the envelope $f_{c}g^{\prime}$\,.
   The definitions and assumptions imply 
   $\,\int(f_{c}g^{\prime})^2 \,dP\,<\,\infty\,$.\\
   Hence, it follows from
   \cite[Theorem 19.4]{vandervaart1998}
   that $\G_{2}$ is a $P$\,-\,Donsker class
   provided that $\G_{2}$ is ``suitably measurable''.
   According to
   \cite[p.\ 274]{vandervaart1998},
   it suffices to show that there is a countable
   subset $\hat{\G}_{2}\subset\G_{2}$ such that,
   for every $g\in\G_{2}$\,, there is a sequence
   $(\hat{g}_{n})_{n\in\N}\subset\hat{\G}_{2}$ which converges
   pointwise to $g$\,. 
   According to \cite[Lemma 4.33]{steinwart2008}, 
   $H$ is a separable Hilbert space
   and, therefore, the subsets $\F_{c}\subset H$ are
   also separable for $c=1$ and $c=c_{0}$\,. That is,
   there are countable subsets 
   $\hat{\F}_{1}\subset\F_{1}$ and $\hat{\F}_{c_{0}}\subset\F_{c_{0}}$
   which are dense in $\F_{1}$ and $\F_{c_{0}}$ respectively
   (with respect to the norm topology).
   Then,
   $$\hat{\G}_{2}\;:=\;
     \big\{\Ls_{\hat{f}_{0}}\hat{f}_{1}\;
     \big|\;\;\hat{f}_{0}\in\F_{c_{0}}\,,\;\;
              \hat{f}_{1}\in\F_{1}
     \big\}
   $$
   is again countable. Fix any $g\in\G_{2}$\,. That is,
   there are $f_{0}\in\F_{c_{0}}$ and $f_{1}\in\F_{1}$
   such that $g=\Ls_{f_{0}}f_{1}$\,. 
   Furthermore, there are sequences 
   $\big(\hat{f}_{0}^{(n)}\big)_{n\in\N}\in\F_{c_{0}}$
   and
   $\big(\hat{f}_{1}^{(n)}\big)_{n\in\N}\in\F_{1}$
   such that
   $$\lim_{n\rightarrow\infty}
       \big\|\hat{f}_{0}^{(n)}-f_{0}\|_{H}\;=\;0
     \qquad\text{and}\qquad
     \lim_{n\rightarrow\infty}
       \big\|\hat{f}_{1}^{(n)}-f_{1}\|_{H}\;=\;0\;.
   $$
   Next, define 
   $\hat{g}_{n}:=\Ls_{\hat{f}_{0}^{(n)}}\hat{f}_{1}^{(n)}
     \in\hat{\G}_{2}
   $ for every $n\in\N$.
   Since $H$ is a reproducing kernel Hilbert space,
   norm convergence implies pointwise convergence so that,
   for every $(x,y)\in\XY$,
   $$\;\lim_{n\rightarrow\infty}\hat{g}_{n}(x,y)
     =\lim_{n\rightarrow\infty}
          \Ls\big(x,y,\hat{f}_{0}^{(n)}\!(x)\big)\hat{f}_{1}^{(n)}\!(x)
        =\Ls\big(x,y,f_{0}(x)\big)f_{1}(x)=
     g(x,y)
   $$
   due to continuity of $\Ls$\,.
    
 \textit{Part 4:} As
   $\G$ is assured to be a $P$\,-\,Donsker class,
   we have
   $$\sqrt{n}\big(\mathds{F}_{\mathbf{D}_{n}}-\iota^{-1}(P)\big)
    \;\leadsto\;
    \mathds{G}
    \qquad\text{in}\quad\ell_{\infty}(\G)
   $$
   where $\mathds{G}:\Omega\rightarrow\ell_{\infty}(\G)$
   is a tight Borel-measurable Gaussian process.
   Since 
   $\,\sqrt{n}\big(\mathds{F}_{\mathbf{D}_{n}(\omega)}-\iota^{-1}(P)\big)
    \,\in\,B_{0}\,
   $
   for every $\omega\in\Omega$
   and every $n\in\N$, it follows from closedness of
   $B_{0}$ and the Portmanteau theorem
   \cite[Theorem 1.3.4(iii)]{vandervaartwellner1996} 
   that $\mathds{G}(\omega)\in B_{0}$ almost surely. Hence, we may assume
   without loss of generality that
   $\mathds{G}(\omega)\in B_{0}$ for every $\omega\in\Omega$.
   (Otherwise, replace $\mathds{G}$ by 
   $\mathds{G}\cdot (I_{B_{0}}\circ\mathds{G})$\,.) 
\end{proof}

For ease of reference, the following lemma 
summarizes some facts about
Bochner-integrals of tight Gaussian processes
in a space $\ell_{\infty}(T)$. Later on, these facts are needed in order to
prove that the Gaussian process $\mathds{H}:\Omega\rightarrow H$
is zero-mean.
\begin{lemma}\label{lemma-mean-of-gaussian-process}
  Let $T$ be any set, $\ell_{\infty}(T)$
  the set of all bounded functions $h:T\rightarrow\R$
  (endowed with the supremum-norm)
  and $\mathds{G}:\Omega\rightarrow\ell_{\infty}(T)$
  a tight Borel-measurable Gaussian process such that
  \begin{eqnarray}\label{lemma-mean-of-gaussian-process-1}
    \int \mathds{G}(\omega)(t)\,Q(d\omega)\;=\;0
    \qquad\quad\forall\,t\in T\;.
  \end{eqnarray}
  Then, the Bochner-integral of
  $\mathds{G}:\Omega\rightarrow\ell_{\infty}(T)$
  exists and
  $\int \mathds{G}(\omega)\,Q(d\omega)=0$.
  Furthermore,
  $\int\! A(\mathds{G})\,dQ = 0$
  for every Banach space $E$ and every continuous linear
  operator $A:\ell_{\infty}(T)\rightarrow E$\,.
\end{lemma}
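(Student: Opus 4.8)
The plan is to establish Bochner-integrability of $\mathds{G}$ first, then to identify the integral with the zero function by testing against the evaluation functionals, and finally to deduce the statement about $A$ directly from the commuting property (\ref{Bochner-integral-continuous-linear-operator}). Recall that a Bochner-integral of a Banach-space-valued map exists as soon as the map is strongly measurable and its norm is integrable. So the two things I would verify are \textbf{(i)} strong measurability of $\mathds{G}:\Omega\rightarrow\ell_{\infty}(T)$ and \textbf{(ii)} $\int\|\mathds{G}\|_{\infty}\,dQ<\infty$.

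For \textbf{(i)} I would use tightness to show that $\mathds{G}$ is essentially separably valued. For each $k\in\N$ choose a compact set $K_{k}\subset\ell_{\infty}(T)$ with $Q(\mathds{G}\notin K_{k})<1/k$; since compact subsets of a metric space are separable, the closure of $\bigcup_{k}K_{k}$ is a separable subset of $\ell_{\infty}(T)$ which carries $\mathds{G}$ almost surely. Borel-measurability of $\mathds{G}$ makes it weakly measurable, and together with essential separability the Pettis measurability theorem (see e.g.\ \cite[\S\,3.10]{denkowski2003}) yields strong measurability.

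The hard part is \textbf{(ii)}, and here the Gaussian structure is indispensable: tightness alone does \emph{not} give an integrable norm. Because $\mathds{G}$ is a tight centered Gaussian process, its law is a Radon Gaussian measure supported on the separable subspace found above, so Fernique's theorem applies and gives $\int\|\mathds{G}\|_{\infty}\,dQ<\infty$ (indeed finite exponential moments); this is the standard integrability property of tight Gaussian processes, cf.\ \cite{vandervaartwellner1996}. Combining \textbf{(i)} and \textbf{(ii)}, the Bochner-integral $\int\mathds{G}\,dQ$ exists as an element of $\ell_{\infty}(T)$.

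It then remains to identify this integral with $0$ and to treat $A$. For each fixed $t\in T$ the evaluation map $\delta_{t}:\ell_{\infty}(T)\rightarrow\R$, $h\mapsto h(t)$, is a continuous linear operator of norm $1$, so (\ref{Bochner-integral-continuous-linear-operator}) gives
$$\Big(\int \mathds{G}\,dQ\Big)(t)\;=\;\delta_{t}\Big(\int \mathds{G}\,dQ\Big)\;=\;\int \delta_{t}\big(\mathds{G}(\omega)\big)\,Q(d\omega)\;=\;\int \mathds{G}(\omega)(t)\,Q(d\omega)\;=\;0$$
by assumption (\ref{lemma-mean-of-gaussian-process-1}). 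As $t\in T$ is arbitrary, $\int\mathds{G}\,dQ=0$. Finally, for any Banach space $E$ and any continuous linear operator $A:\ell_{\infty}(T)\rightarrow E$, the already-established existence of $\int\mathds{G}\,dQ$ together with (\ref{Bochner-integral-continuous-linear-operator}) guarantees that $\int A(\mathds{G})\,dQ$ exists and equals $A\big(\int\mathds{G}\,dQ\big)=A(0)=0$, which completes the argument.
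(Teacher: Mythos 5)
Your skeleton coincides with the paper's: establish $\int\|\mathds{G}\|_{\infty}\,dQ<\infty$ for the (essentially separably valued, strongly measurable) process, identify $\int\mathds{G}\,dQ$ through the evaluation maps $\delta_{t}$, and push the conclusion through $A$ via (\ref{Bochner-integral-continuous-linear-operator}). The measurability step and the last two steps are exactly as in the paper and are fine. The one place where you diverge is the integrability step, and it is also the one place where your argument has a hole as written: you invoke Fernique's theorem for a \emph{tight centered} Gaussian process, but the hypothesis (\ref{lemma-mean-of-gaussian-process-1}) only says that the coordinate marginals $\mathds{G}(t)$ have mean zero. Centeredness of the induced Radon Gaussian measure on the separable support subspace $\Gamma$ means that \emph{every} continuous linear functional of $\mathds{G}$ has a centered Gaussian distribution, and the dual of $\ell_{\infty}(T)$ (or of $\Gamma$) is not spanned by the $\delta_{t}$'s, so this does not follow tautologically from the hypothesis. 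The paper flags precisely this point and deliberately replaces Fernique by Sato's theorem \cite{sato1971}, which gives $\int\|\hat{\mathds{G}}\|\,dQ<\infty$ for a Gaussian random element of a separable Banach space \emph{without} assuming it is centered; the fact that the restricted law is Gaussian on $\Gamma$ is obtained there by Hahn--Banach extension of functionals on $\Gamma$ to $\ell_{\infty}(T)$ combined with \cite[Lemma 3.9.8]{vandervaartwellner1996}, a step you also leave implicit.

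Your route can be repaired. Since all finite-dimensional distributions of $\mathds{G}$ are centered multivariate normals, $\mathds{G}$ and $-\mathds{G}$ have the same marginals, and two tight Borel-measurable maps into $\ell_{\infty}(T)$ with the same marginal laws have the same Borel law (see \cite[\S\,1.5]{vandervaartwellner1996}); hence the law of $\mathds{G}$ is symmetric, so every continuous linear functional of $\mathds{G}$ is a symmetric, hence centered, Gaussian, and the classical Fernique theorem does apply on $\Gamma$. If you add this symmetry argument together with the Hahn--Banach step, your proof is complete and is a legitimate alternative to the paper's; without it, the appeal to Fernique is unjustified, which is exactly why the paper routes through Sato instead.
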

\begin{proof}\item
  Since $\mathds{G}$ is tight, it is also 
  separable so that there is a separable
  subset $\Gamma\subset\ell_{\infty}(T)$ such that
  $Q(\mathds{G}\in\Gamma)=1$\,; see 
  \cite[16f]{vandervaartwellner1996}.
  As the closed linear span of a separable subset
  of a Banach space is again separable
  \cite[Lemma A.48]{schechter2004},
  we may assume without loss of generality that
  $\Gamma$ is a separable Banach space. Define
  $\hat{\mathds{G}}=\mathds{G}\cdot(I_{\Gamma}\circ\mathds{G})$\,. Then,
  $\hat{\mathds{G}}:\Omega\rightarrow\Gamma$ is 
  a Borel-measurable map. Let 
  $\hat{h}^{\ast}:\Gamma\rightarrow\R$ be 
  a continuous linear functional. According to
  the Hahn-Banach-Theorem 
  \cite[Theorem II.3.11]{dunford1958},
  $\hat{h}^{\ast}$ can be extended to a
  continuous linear functional
  $h^{\ast}:\ell_{\infty}(T)\rightarrow\R$\,. 
  Since $h^{\ast}(\mathds{G})$ is normally distributed
  according to 
  \cite[Lemma 3.9.8]{vandervaartwellner1996} and
  $\hat{h}^{\ast}(\hat{\mathds{G}})=h^{\ast}(\mathds{G})\;\,
   Q-\text{a.s.}\,,
  $
  the real random variable $\hat{h}^{\ast}(\hat{\mathds{G}})$
  is normally distributed. This proves that the Borel-measurable map
  $\hat{\mathds{G}}:\Omega\rightarrow\Gamma$ is a 
  Gaussian process in the separable Banach space $\Gamma$\,.
  Hence, it follows from \cite{sato1971}
  that
  $\int \|\hat{\mathds{G}}\|\,dQ<\infty$ and, therefore,
  \begin{eqnarray}\label{lemma-mean-of-gaussian-process-p1}
    \int \|\mathds{G}\|\,dQ\;<\;\infty\;. 
  \end{eqnarray}
  (\cite{fernique1970} proves a related statement for centered
  Gaussian processes but we still have to
  prove that $\mathds{G}$ is centered and this will be done
  by use of (\ref{lemma-mean-of-gaussian-process-p1})
  so that we cannot use Fernique's theorem here.)
  According to 
  \cite[Theorem 3.10.3 and Theorem 3.10.9]{denkowski2003}, 
  (\ref{lemma-mean-of-gaussian-process-p1})
  is equivalent to the existence of the Bochner-integral
  $\int \mathds{G}\,dQ$\,.

  Note that, for every $t\in T$, the map
  $\tau_{t}:\ell_{\infty}(T)\rightarrow\R,\;h\mapsto h(t)$
  is a continuous linear operator. Then, by use of
  the fact that the Bochner-integral may be interchanged with
  continuous linear operators 
  \cite[Theorem 3.10.16 and Remark 3.10.17]{denkowski2003}, we get
  \begin{eqnarray*}
    \left(\int \mathds{G}(\omega)\,Q(d\omega)
      \right)\!(t)
    &=&\tau_{t}\left(\int \mathds{G}(\omega)\,Q(d\omega)\right)
       \;=\;\int \tau_{t}\big(\mathds{G}(\omega)\big)\,Q(d\omega)\;=\\
    &=&\int \mathds{G}(\omega)(t)\,Q(d\omega)
       \;\stackrel{(\ref{lemma-mean-of-gaussian-process-1})}{=}\;0
  \end{eqnarray*}
  for every $t\in T$\,. That is, $\int \mathds{G}\,dQ=0$.
  Using again the fact that the Bochner-integral may be interchanged with
  continuous linear operators, we finally get
  $\int\! A(\mathds{G})\,dQ
    =A\left(\int \mathds{G} \,dQ
          \right)
    =A(0)=0
  $.
\end{proof}

\begin{proof}
  \item[\textbf{Proof of Theorem \ref{theorem-sqrt-n-consistency}:}]
  First, it will be shown that
  $$\Omega\;\rightarrow\;H\;,\qquad
    \omega\;\mapsto\;
    f_{L,\mathbf{D}_{n}(\omega),\lambda_{\mathbf{D}_{n}(\omega)}}
  $$
  is Borel-measurable. According to the assumptions,
  it follows from \cite[Lemma 5.13 and Corollary 5.19]{steinwart2008}
  that
  $(\XY)^n\rightarrow H,\;\;D_n\mapsto f_{L,D_n,\lambda}$
  is continuous for every constant $\lambda\in(0,\infty)$ and
  that
  $(0,\infty)\rightarrow H,\;\;\lambda\mapsto f_{L,D_n,\lambda}$
  is continuous for every $D_n\in(\XY)^n$. Hence,
  $(D_n,\lambda)\mapsto f_{L,D_n,\lambda}$ is a
  Carath\'{e}odory function and, therefore, measurable;
  see, e.g., \cite[Theorem 2.5.22]{denkowski2003}. Since
  $\omega\mapsto\mathbf{D}_{n}(\omega)$
  and $\omega\mapsto\lambda_{\mathbf{D}_{n}(\omega)}$
  are assumed to be measurable, the compound function
  $\omega\mapsto
    f_{L,\mathbf{D}_{n}(\omega),\lambda_{\mathbf{D}_{n}(\omega)}}
  $ 
  is again measurable.

  In order to apply the functional delta-method
  \cite[Theorem 3.9.4]{vandervaartwellner1996},
  note that 
  $\ell_{\infty}(\G)$ and $H$ are Banach spaces.  
  Recall from Lemma \ref{lemma-donsker} that 
  $\mathds{F}_{\mathbf{D}_{n}}:\Omega\rightarrow B_S,\;\;
   \omega\mapsto\mathds{F}_{\mathbf{D}_{n}(\omega)}
  $ 
  is the random map where $\mathds{F}_{\mathbf{D}_{n}(\omega)}$
  is that element of $B_S$ which corresponds to the empirical
  distribution of 
  $\mathbf{D}_{n}(\omega)=\big((X_1(\omega),Y_1(\omega)),\dots,
   (X_n(\omega),Y_n(\omega))\big)
  $. That is, 
  $$\mathds{F}_{\mathbf{D}_{n}(\omega)}\;:\;\;
    \G\;\rightarrow\;\R\;,\qquad
    g\;\mapsto\;\frac{1}{n}\sum_{i=1}^{n}g\big(X_i(\omega),Y_i(\omega)\big)\;.
  $$
  Define
  $$F_{0}:=\iota^{-1}(P)
    \qquad\text{and}\qquad
    \xi_{n}
    :=\frac{\lambda_{0}}{\lambda_{\mathbf{D}_n}}\mathds{F}_{\mathbf{D}_{n}}\;.
  $$
  Then, 
  Lemma \ref{lemma-donsker} yields
  $$\sqrt{n}\big(\mathds{F}_{\mathbf{D}_{n}}-F_{0}\big)
    \;\;\leadsto\;\;
    \mathds{G}
    \qquad\text{in}\quad\ell_{\infty}(\G)
  $$
  where $\mathds{G}:\Omega\rightarrow\ell_{\infty}(\G)$
  is a tight Borel-measurable Gaussian process
  which takes it values in $B_{0}$\,.
  Furthermore, 
  \begin{eqnarray}\label{theorem-sqrt-n-consistency-p201}
    \int\mathds{G}(\omega)(g)\,Q(d\omega)\;=\;0
    \qquad\quad\forall\,g\in\G\;;
  \end{eqnarray}
  see \cite[p.\ 81f]{vandervaartwellner1996}.
  According to \cite[p.\ 16f]{vandervaartwellner1996},
  $\mathds{G}$ is also separable
  (which is important
  in order to apply Slutsky's lemma for
  Banach space valued random maps below).
  Note that
  $\sqrt{n}\big(\lambda_{\mathbf{D}_n}-\lambda_0\big)\rightarrow0$
  in probability implies
  $\lambda_0/\lambda_{\mathbf{D}_n}\rightarrow 1$ and
  $\sqrt{n}
   \big(\lambda_{\mathbf{D}_n}-\lambda_0\big)/\lambda_{\mathbf{D}_n}
   \rightarrow0
  $
  in probability; see e.g.\
  \cite[Theorems 2.3 and 2.7vi]{vandervaart1998}.
  Hence, it follows from Slutsky's lemma
  \cite[p.\ 32]{vandervaartwellner1996} that
  $$\sqrt{n}\big(\xi_{n}-F_{0}\big)\,\,=\,\,
    \sqrt{n}\big(\mathds{F}_{\mathbf{D}_{n}}-F_{0}\big)
       \cdot\frac{\lambda_{0}}{\lambda_{\mathbf{D}_{n}}}
       \,+\,
   \frac{\sqrt{n}
         \big(\lambda_{\mathbf{D}_n}-\lambda_0\big)
        }{\lambda_{\mathbf{D}_n}}
    \;\;\leadsto\;\;
    \mathds{G}
  $$
  in $\ell_{\infty}(\G)$.
  Then, applying the delta-method 
  \cite[Theorem 3.9.4]{vandervaartwellner1996}
  yields
  \begin{eqnarray*}
    \sqrt{n}\big(f_{L,\mathbf{D}_{n},\lambda_{\mathbf{D}_n}}-\SVMlplz\big)
    \;\stackrel{(\ref{prep-standard-regularization-parameter})}{=}\;
    \sqrt{n}\big(S(\xi_{n})-S(F_{0})\big)
    \;\leadsto\;S_{F_{0}}^{\prime}(\mathds{G})\;.
  \end{eqnarray*}
  Since $S^{\prime}_{F_{0}}$ is a continuous linear operator and
  $\mathds{G}$ is a tight Borel-measurable Gaussian process,
  $S_{F_{0}}^{\prime}(\mathds{G})$ is Gaussian as well;
  see, e.g., \cite[\S\,3.9.2]{vandervaartwellner1996}.
  Since $H$ is a complete and separable metric space,
  $S_{F_{0}}^{\prime}(\mathds{G})$ is tight;
  see e.g.\ \cite[Theorem 11.5.4]{dudley2002}.
  
  It follows from (\ref{theorem-sqrt-n-consistency-p201}) and 
  Lemma \ref{lemma-mean-of-gaussian-process}
  that $S_{F_{0}}^{\prime}(\mathds{G})$ has mean zero.
\end{proof}

\begin{proof}
  \item[\textbf{Proof of Theorem \ref{theorem-sqrt-n-consistency-risks}:}]
  It follows from
  Lemma \ref{lemma-derivatives} 
  that the risk functional
  $\mathcal{R}_{L,P}
  $
  is Hadamard-differentiable 
  in $H$ tangentially to $H$; the derivative 
  of $\mathcal{R}_{L,P}$ in $f\in H$ is 
  the continuous linear operator 
  $$\mathcal{R}_{L,P;f}^{\prime}\;:\;\;H\;\rightarrow\;\R\,,\qquad
    h\;\mapsto\;\Big\la\int L_{f}^{\prime}\Phi \,dP\,,\,h \Big\ra\;.
  $$
  According to Theorem \ref{theorem-sqrt-n-consistency},
  $\sqrt{n}\big(f_{L,\mathbf{D}_{n},\lambda_{\mathbf{D}_{n}}}-\SVMlplz\big)
   \leadsto\mathds{H}
  $
  where $\mathds{H}:\Omega\rightarrow H$
  is a tight Borel-measurable Gaussian process
  which has zero-mean and does not depend on 
  $\lambda_{\mathbf{D}_{n}}$ but only on $\lambda_0$.
  Then, it follows from 
  the delta-method 
  \cite[Theorem 3.9.4]{vandervaartwellner1996}
  that
  $$\sqrt{n}
    \big(\mathcal{R}_{L,P}(f_{L,\mathbf{D}_{n},\lambda_{\mathbf{D}_{n}}})
                 -\mathcal{R}_{L,P}(\SVMlplz)
    \big)
    \;\;\leadsto\;\;\mathcal{R}_{L,P;\SVMlplz}^{\prime}(\mathds{H})\;.
  $$
  Since $\mathcal{R}_{L,P;\SVMlplz}^{\prime}$ is a continuous linear operator,
  and $\mathds{H}$ is Gaussian, the 
  (real valued) random variable
  $\mathcal{R}_{L,P;\SVMlplz}^{\prime}(\mathds{H})$ is normally distributed;
  see e.g.\ \cite[\S\,3.9.2]{vandervaartwellner1996}.
  Therefore, it only remains to prove that
  the mean of $\mathcal{R}_{L,P;\SVMlplz}^{\prime}(\mathds{H})$ 
  is equal to 0.
  This follows from 
  $$\mathds{E}\mathcal{R}_{L,P;\SVMlplz}^{\prime}(\mathds{H})
    \;=\;\mathds{E}
          \Big\la\int L_{\SVMlplz}^{\prime}\Phi\,dP\,,\,\mathds{H}\Big\ra
    \;=\;0
  $$
  as $\mathds{H}:\Omega\rightarrow H$
  has zero-mean.
\end{proof}

\begin{proposition}\label{prop-degenerated-limit}
  Under the assumptions of Theorem 
  \ref{theorem-sqrt-n-consistency},
  the Gaussian process 
  $$\mathds{H}\;:\;\;\Omega\;\rightarrow\;H\,,\quad\;
    \omega\;\mapsto\;\mathds{H}(\omega)
  $$ 
  in (\ref{theorem-sqrt-n-consistency-2}) is degenerated
  to 0 if and only if
  for every $h\in H$, there is 
  a constant $c_h\in\R$ such that
  \begin{eqnarray}\label{prop-degenerated-limit-1}
    L^\prime\big(x,y,\SVMlplz(x)\big)h(x)
    \;=\;c_h\quad \;
    \text{for}\quad
    P\,-\,\textup{a.e. }\,
    (x,y)\in\XY\;.\quad  
  \end{eqnarray}
\end{proposition}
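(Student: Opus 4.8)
The plan is to exploit the representation $\mathds{H}=S_{F_0}^{\prime}(\mathds{G})$ established in the proof of Theorem~\ref{theorem-sqrt-n-consistency}, where $F_0=\iota^{-1}(P)$ (so that $\SVMifz=\SVMlplz$, and $F_0$ satisfies (\ref{lemma-prep-hadamard-0}) because $\lambda_0>0$) and where $\mathds{G}$ is the tight, zero-mean Gaussian process with values in $B_0$ furnished by Lemma~\ref{lemma-donsker}. Since $\mathds{G}(\omega)\in B_0$ for every $\omega\in\Omega$, the criterion (\ref{lemma-prep-hadamard-p3001}) can be applied pointwise and gives, for each $\omega$, that $S_{F_0}^{\prime}(\mathds{G}(\omega))=0$ if and only if $\mathds{G}(\omega)(\Ls_{\SVMlplz}f)=0$ for every $f$ in the unit ball $\F_1:=\{f\in H\mid\|f\|_{H}\le1\}$. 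Hence ``$\mathds{H}$ is degenerated to $0$'', that is $\mathds{H}=0$ $Q$-almost surely, is equivalent to the statement that $Q$-almost surely $\mathds{G}(\Ls_{\SVMlplz}f)=0$ for every $f\in\F_1$.

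Next I would convert this uncountable almost-sure identity into a family of scalar conditions. Because $H$ is separable \cite[Lemma 4.33]{steinwart2008}, the unit ball $\F_1$ possesses a countable dense subset $\hat{\F}_1$, and norm convergence $\|f_n-f\|_{H}\to0$ implies $\|f_n-f\|_{\infty}\to0$ by (\ref{hilbert-norm-uniform-norm}), so the event $\{\mathds{G}(\Ls_{\SVMlplz}f)=0\text{ for all }f\in\F_1\}$ differs from the countable intersection of the events $\{\mathds{G}(\Ls_{\SVMlplz}f)=0\}$, $f\in\hat{\F}_1$, only by a null set. Consequently $\mathds{H}=0$ $Q$-a.s.\ holds if and only if $\mathds{G}(\Ls_{\SVMlplz}f)=0$ $Q$-a.s.\ for each individual $f\in\hat{\F}_1$. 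For fixed $f\in\F_1$ the function $\Ls_{\SVMlplz}f$ lies in $\G$ (it belongs to $\G_2$ by (\ref{prep-inequality-hnorm}) and (\ref{prep-def-c0-constant})), so $\mathds{G}(\Ls_{\SVMlplz}f)$ is a coordinate of the tight Gaussian process $\mathds{G}$; it is therefore a real Gaussian variable, centred by (\ref{theorem-sqrt-n-consistency-p201}), whose variance equals the variance under $P$ of $(x,y)\mapsto\Ls(x,y,\SVMlplz(x))f(x)$. A centred Gaussian variable vanishes almost surely exactly when its variance is $0$, i.e.\ when $\Ls_{\SVMlplz}f$ is $P$-almost surely equal to some constant $c_f$, which is precisely (\ref{prop-degenerated-limit-1}) with $h=f$.

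Finally I would upgrade the equivalence from $\hat{\F}_1$ to all of $H$. For the nontrivial implication assume $\Ls_{\SVMlplz}f=c_f$ $P$-a.e.\ for every $f\in\hat{\F}_1$. Given $h\in H\setminus\{0\}$ put $f:=h/\|h\|_{H}\in\F_1$ and pick $f_n\in\hat{\F}_1$ with $f_n\to f$ in $H$, hence uniformly and pointwise. Each $\Ls_{\SVMlplz}f_n$ equals a constant $c_{f_n}$ $P$-a.e.; with $a:=\|\SVMlplz\|_{\infty}$ the bound $|\Ls_{\SVMlplz}f_n|\le\|k\|_{\infty}b_a^{\prime}$ from (\ref{theorem-sqrt-n-consistency-1}) and (\ref{hilbert-norm-uniform-norm}) supplies a $P$-integrable dominating function, so dominated convergence gives $c_{f_n}=\int\Ls_{\SVMlplz}f_n\,dP\to\int\Ls_{\SVMlplz}f\,dP=:c_f$. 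On the $P$-full set where all these a.e.\ identities hold simultaneously, pointwise convergence forces $\Ls_{\SVMlplz}f=c_f$ $P$-a.e., and multiplying by $\|h\|_{H}$ yields (\ref{prop-degenerated-limit-1}) for $h$; the converse direction is immediate from the same variance computation. I expect the measure-theoretic bridge of the second paragraph to be the main obstacle: the delicate point is to turn the single almost-sure identity for the $H$-valued Gaussian variable $\mathds{H}$ into the family of scalar variance conditions, which is exactly where separability of $H$ and the centred-Gaussian dichotomy ``zero variance $\Leftrightarrow$ almost surely zero'' have to be combined.
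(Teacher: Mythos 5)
Your proposal is correct and follows essentially the same route as the paper's own proof: represent $\mathds{H}=S_{F_0}^{\prime}(\mathds{G})$, invoke (\ref{lemma-prep-hadamard-p3001}) to reduce degeneracy to vanishing of the marginals $\mathds{G}(\Ls_{\SVMlplz}f)$ over the unit ball, and use the fact that these marginals are centred Gaussians with variance $\operatorname{Var}_P(\Ls_{\SVMlplz}f)$. In fact you are more careful than the paper on the quantifier interchange (reducing the uncountable almost-sure identity to a countable dense subset of $\F_1$ via separability of $H$, and extending back to all of $H$ by dominated convergence), a step the paper's proof passes over silently; the only point you assert rather than justify is the almost-sure continuity of the sample paths $f\mapsto\mathds{G}(\omega)(\Ls_{\SVMlplz}f)$, which follows from tightness of $\mathds{G}$ (uniform continuity with respect to the intrinsic $L_2(P)$-semimetric) combined with your observation that $\|f_n-f\|_\infty\to0$ forces $\Ls_{\SVMlplz}f_n\to\Ls_{\SVMlplz}f$ in $L_2(P)$ since $b_a^{\prime}\in L_2(P)$.
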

\begin{proof}\item
  According to the proof of Theorem
  \ref{theorem-sqrt-n-consistency},
  the Gaussian process $\mathds{H}$ is equal to
  $S_{F_0}^{\prime}(\mathds{G})$ and, according to 
  (\ref{lemma-prep-hadamard-p3001}), 
  $S_{F_0}^{\prime}(\mathds{G})$ is equal to 0
  if and only if $\mathds{G}\big(\Ls_{\SVMifz} h\big)$ is equal to 0
  for every $h\in H$ such that $\|h\|_H\leq 1$. As shown
  in Lemma \ref{lemma-donsker}, the class of functions $\G$ is a
  P-Donsker class and, accordingly, the distribution of the
  marginals
  $\mathds{G}\big(\Ls_{\SVMifz} h\big)$ of the limit of
  $\sqrt{n}\big(\mathds{F}_{\mathbf{D}_{n}}-\iota^{-1}(P)\big)
   \,\,\leadsto\,\,
   \mathds{G}
  $ 
  in $\ell_\infty(\G)$ is equal to
  $\mathcal{N}(0,\sigma^2_h)$ where
  $$\sigma^2_h
    =\int \bigg( \Ls_{\SVMifz} h -\int \Ls_{\SVMifz} h\,\,dP
         \bigg)^2
    \,dP\;;
  $$
  see e.g.\ \cite[\S\,2.1]{vandervaartwellner1996}. That is,
  $\mathds{H}=0$ almost surely if and only if
  $\sigma^2_h=0$ for every $h\in H$. 
\end{proof}

\bibliographystyle{abbrvnat}
\bibliography{literatur}

\begin{thebibliography}{33}
\providecommand{\natexlab}[1]{#1}
\providecommand{\url}[1]{\texttt{#1}}
\expandafter\ifx\csname urlstyle\endcsname\relax
  \providecommand{\doi}[1]{doi: #1}\else
  \providecommand{\doi}{doi: \begingroup \urlstyle{rm}\Url}\fi

\bibitem[Akerkar(1999)]{akerkar1999}
R.~Akerkar.
\newblock \emph{Nonlinear functional analysis}.
\newblock Narosa Publishing House, New Delhi, 1999.

\bibitem[Bauer(2001)]{bauer2001}
H.~Bauer.
\newblock \emph{Measure and integration theory}.
\newblock Walter de Gruyter \& Co., Berlin, 2001.

\bibitem[Berlinet and Thomas-Agnan(2004)]{berlinet2004}
A.~Berlinet and C.~Thomas-Agnan.
\newblock \emph{Reproducing kernel {H}ilbert spaces in probability and
  statistics}.
\newblock Kluwer Academic Publishers, Boston, MA, 2004.

\bibitem[Blanchard et~al.(2008)Blanchard, Bousquet, and Massart]{blanchard2008}
G.~Blanchard, O.~Bousquet, and P.~Massart.
\newblock Statistical performance of support vector machines.
\newblock \emph{The Annals of Statistics}, 36\penalty0 (2):\penalty0 489--531,
  2008.

\bibitem[Bourbaki(2004)]{bourbaki2004integration}
N.~Bourbaki.
\newblock \emph{Integration. {I}. {C}hapters 1--6}.
\newblock Springer-Verlag, Berlin, 2004.
\newblock Translated from the 1959, 1965 and 1967 French originals by Sterling
  K. Berberian.

\bibitem[Caponnetto and De~Vito(2007)]{caponnetto2007}
A.~Caponnetto and E.~De~Vito.
\newblock Optimal rates for the regularized least-squares algorithm.
\newblock \emph{Foundations of Computational Mathematics}, 7\penalty0
  (3):\penalty0 331--368, 2007.

\bibitem[Christmann and Steinwart(2004)]{christmann2004}
A.~Christmann and I.~Steinwart.
\newblock On robustness properties of convex risk minimization methods for
  pattern recognition.
\newblock \emph{Journal of Machine Learning Research}, 5:\penalty0 1007--1034,
  2004.

\bibitem[Christmann and Steinwart(2007)]{christmannsteinwart2007}
A.~Christmann and I.~Steinwart.
\newblock Consistency and robustness of kernel-based regression in convex risk
  minimization.
\newblock \emph{Bernoulli}, 13\penalty0 (3):\penalty0 799--819, 2007.

\bibitem[Christmann and Van~Messem(2008)]{christmannvanmessem2008}
A.~Christmann and A.~Van~Messem.
\newblock Bouligand derivatives and robustness of support vector machines for
  regression.
\newblock \emph{Journal of Machine Learning Research}, 9:\penalty0 915--936,
  2008.

\bibitem[Dekel et~al.(2005)Dekel, Shalev-Shwartz, and Singer]{dekelsinger2005}
O.~Dekel, S.~Shalev-Shwartz, and Y.~Singer.
\newblock Smooth {$\epsilon$}-insensitive regression by loss symmetrization.
\newblock \emph{Journal of Machine Learning Research}, 6:\penalty0 711--741,
  2005.

\bibitem[Denkowski et~al.(2003)Denkowski, Mig{\'o}rski, and
  Papageorgiou]{denkowski2003}
Z.~Denkowski, S.~Mig{\'o}rski, and N.~Papageorgiou.
\newblock \emph{An introduction to nonlinear analysis: {T}heory}.
\newblock Kluwer Academic Publishers, Boston, 2003.

\bibitem[Dudley(2002)]{dudley2002}
R.~Dudley.
\newblock \emph{Real analysis and probability}.
\newblock Cambridge University Press, Cambridge, 2002.
\newblock Revised reprint of the 1989 original.

\bibitem[Dunford and Schwartz(1958)]{dunford1958}
N.~Dunford and J.~Schwartz.
\newblock \emph{Linear operators. {I}. {G}eneral theory}.
\newblock Wiley-Interscience Publishers, New York, 1958.

\bibitem[Fernique(1970)]{fernique1970}
X.~Fernique.
\newblock Int\'egrabilit\'e des vecteurs gaussiens.
\newblock \emph{Comptes Rendus Hebdomadaires des S\'eances de l'Acad\'emie des
  Sciences. S\'eries A et B}, 270:\penalty0 A1698--A1699, 1970.

\bibitem[Griffel(2002)]{griffel2002}
D.~Griffel.
\newblock \emph{Applied functional analysis}.
\newblock Dover Publications Inc., Mineola, NY, 2002.
\newblock Revised reprint of the 1985 edition.

\bibitem[Hoffmann-J{\o}rgensen(1994)]{hoffmannjoergensen1994}
J.~Hoffmann-J{\o}rgensen.
\newblock \emph{Probability with a view toward statistics. {V}ol. {I}}.
\newblock Chapman \& Hall, New York, 1994.

\bibitem[Jiang et~al.(2008)Jiang, Zhang, and Cai]{jiang2008}
B.~Jiang, X.~Zhang, and T.~Cai.
\newblock Estimating the confidence interval for prediction errors of support
  vector machine classifiers.
\newblock \emph{Journal of Machine Learning Research}, 9:\penalty0 521--540,
  2008.

\bibitem[Koo et~al.(2008)Koo, Lee, Kim, and Park]{koo2008}
J.-Y. Koo, Y.~Lee, Y.~Kim, and C.~Park.
\newblock A {B}ahadur representation of the linear support vector machine.
\newblock \emph{Journal of Machine Learning Research}, 9:\penalty0 1343--1368,
  2008.

\bibitem[Luenberger(1969)]{luenberger1969}
D.~Luenberger.
\newblock \emph{Optimization by vector space methods}.
\newblock John Wiley \& Sons Inc., New York, 1969.

\bibitem[Megginson(1998)]{megginson1998}
R.~E. Megginson.
\newblock \emph{An introduction to {B}anach space theory}.
\newblock Springer-Verlag, New York, 1998.

\bibitem[Mendelson and Neeman(2010)]{mendelson2010}
S.~Mendelson and J.~Neeman.
\newblock Regularization in kernel learning.
\newblock \emph{The Annals of Statistics}, 38\penalty0 (1):\penalty0 526--565,
  2010.

\bibitem[Sat{\^o}(1971)]{sato1971}
H.~Sat{\^o}.
\newblock A remark on {L}andau-{S}hepp's theorem.
\newblock \emph{Sankhy\=a (Statistics). The Indian Journal of Statistics.
  Series A}, 33:\penalty0 227--228, 1971.

\bibitem[Schechter(2004)]{schechter2004}
M.~Schechter.
\newblock \emph{An introduction to nonlinear analysis}.
\newblock Cambridge University Press, Cambridge, 2004.

\bibitem[Sch\"olkopf and Smola(2002)]{schoelkopf2002}
B.~Sch\"olkopf and A.~J. Smola.
\newblock \emph{Learning with kernels}.
\newblock MIT Press, Cambridge, 2002.

\bibitem[Steinwart and Christmann(2008)]{steinwart2008}
I.~Steinwart and A.~Christmann.
\newblock \emph{Support vector machines}.
\newblock Springer, New York, 2008.

\bibitem[Steinwart and Scovel(2007)]{steinwartscovel2007}
I.~Steinwart and C.~Scovel.
\newblock Fast rates for support vector machines using {G}aussian kernels.
\newblock \emph{The Annals of Statistics}, 35\penalty0 (2):\penalty0 575--607,
  2007.

\bibitem[Steinwart et~al.(2009)Steinwart, Hush, and Scovel]{steinwarthush2009}
I.~Steinwart, D.~Hush, and C.~Scovel.
\newblock Optimal rates for regularized least squares regression.
\newblock \emph{Proceedings of the 22nd Conference on Learning Theory (COLT
  2009)}, 2009.

\bibitem[Tsybakov(2004)]{Tsybakov2004}
A.~B. Tsybakov.
\newblock Optimal aggregation of classifiers in statistical learning.
\newblock \emph{The Annals of Statistics}, 32\penalty0 (1):\penalty0 135--166,
  2004.

\bibitem[van~de Geer(2000)]{vandegeer2000}
S.~A. van~de Geer.
\newblock \emph{Applications of empirical process theory}.
\newblock Cambridge University Press, Cambridge, 2000.

\bibitem[van~der Vaart(1998)]{vandervaart1998}
A.~van~der Vaart.
\newblock \emph{Asymptotic statistics}.
\newblock Cambridge University Press, Cambridge, 1998.

\bibitem[van~der Vaart and Wellner(1996)]{vandervaartwellner1996}
A.~van~der Vaart and J.~Wellner.
\newblock \emph{{Weak convergence and empirical processes. With applications to
  statistics.}}
\newblock Springer, New York, 1996.

\bibitem[Vapnik(1998)]{vapnik1998}
V.~N. Vapnik.
\newblock \emph{Statistical learning theory}.
\newblock John Wiley \& Sons, New York, 1998.

\bibitem[Zhou(2003)]{zhou2003}
D.-X. Zhou.
\newblock Capacity of reproducing kernel spaces in learning theory.
\newblock \emph{IEEE Transactions on Information Theory}, 49\penalty0
  (7):\penalty0 1743--1752, 2003.

\end{thebibliography}

\end{document}